\def\temp{dvips.def}
\def\Ginclude@graphics#1{\def\temp{#1}---image \expandafter\strip@prefix\meaning\temp---}
\newcommand{\ignore}[1]{}
\newcommand{\E}{{\bf E}}
\newcommand{\Var}{{\bf Var}}
\newcommand{\M}{{\cal M}}
\newcommand{\RID}{{ {\rm \scriptscriptstyle RID}}}
\newcommand{\RrSD}{{ {\rm \scriptscriptstyle RrSD}}}
\newcommand{\RsSD}{{ {\rm \scriptscriptstyle RsSD}}}
\newcommand{\TRNS}{{ {\rm \scriptscriptstyle UTDq}}}
\newcommand{\UTDq}{{ {\rm \scriptscriptstyle UTDq}}}
\newtheorem{theorem}{Theorem}
\newtheorem{lemma}[theorem]{Lemma}
\begin{document}
\title{Bounds for the Number of Tests\\ in Non-Adaptive Randomized Algorithms\\ for Group Testing}

\titlerunning{Tests Bounds for Non-Adaptive Randomized Group Testing}

\author{Nader H. Bshouty\inst{1} \and
George Haddad\inst{2} \and
Catherine A. Haddad-Zaknoon\inst{1}}

\authorrunning{N. H.  Bshouty et al.}
\institute{Technion, Haifa, Israel \\
\email{\{bshouty,catherine\}@cs.technion.ac.il}\\
 \and
  The Orthodox Arab College, Grade 11, Haifa, Israel\\
\email{haddadgeorge9@gmail.com}
}

\maketitle              

\begin{abstract}
We study the group testing problem with non-adaptive randomized algorithms. Several models have been discussed in the literature to determine how to randomly choose the tests. For a model ${\cal M}$, let $m_{\cal M}(n,d)$ be the minimum number of tests required to detect at most $d$ defectives within $n$ items, with success probability at least $1-\delta$, for some constant $\delta$. In this paper, we study the measures
$$c_{\cal M}(d)=\lim_{n\to \infty} \frac{m_{\cal M}(n,d)}{\ln n}
\mbox{\ and \ } c_{\cal M}=\lim_{d\to \infty} \frac{c_{\cal M}(d)}{d}.$$

In the literature, the analyses of such models only give upper bounds for $c_{\cal M}(d)$ and $c_{\cal M}$, and for some of them, the bounds are not tight.
We give new analyses that yield tight bounds for $c_{\cal M}(d)$  and $c_{\cal M}$ for all the known models~${\cal M}$.

\keywords{Group Testing\and Randomized Algorithms  \and Non-adaptive algorithms.}
\end{abstract}

\section{Introduction}\label{Int}
\emph{Group testing} is a strategy to identify $d$ \emph{defective} items from a pile of $n$ elements by testing groups of items rather than testing each one individually. A group test is identified by a subset of items. The test response is \emph{positive} if it includes at least one defective item, and \emph{negative} otherwise. The problem of group testing is the task of identifying all the $d$ items with a minimum number of group tests.

Formally, let  $S=[n]:=\{1,2,\ldots,n\}$ be the set of the $n$ {\it items} and let $I\subseteq S$ be the set of {\it defective items}. Suppose that we know that the number of defective items, $|I|$, is bounded by some integer~$d$. A {\it test} is a set $J \subset S$. The answer to the test is $T(I,J)=1$ if $I\cap J\not=\O$ and $0$ otherwise. The problem is to find the defective items with a minimum number of tests. \ignore{A test $J\subseteq S$ is identified with an {\it assignment} $a^J\in\{0,1\}^n$, where $a^J_i=1$ if and only if $i\in J$, i.e., $a^J_i=1$ signifies that item $i$ is in test $J$. Then, the answer to the test $a^J$ is $T(I,a^J)=1$ ({\it positive}) if there is $i\in I$ such that $a^J_i=1$ and $0$ ({\it negative}) otherwise.}

Although the group testing scheme was originally introduced as a potential solution for an economical mass blood testing during WWII \cite{D43}, many researchers have suggested applying this approach in a variety of practical problems. Du and Hwang \cite{DH06}, for example, outline a wide range of applications in DNA screening that involve group testing.  On the other hand, Wolf  \cite{W85} presents an applicable group testing generalization to the random access communications problem. For a brief history and other applications, the reader is referred to~\cite{Ang88,Ci13,DH00,DH06,H72,MP04,ND00,KS64,HL02,CM03}.

Generally, the algorithm operates in \emph{stages} or \emph{rounds}. In each round, the tests are defined in advance and are tested in a single parallel step. Tests in some round might depend on the answers of the previous rounds. An algorithm that includes only one stage is called a {\it non-adaptive algorithm}, while a multi-stage algorithm is called an {\it adaptive algorithm}.

Since tests might be time consuming, in most practical applications, performing the tests simultaneously is highly required. Therefore, non-adaptive algorithms are extremely desirable in practice. It is well known, however,  that any non-adaptive {\it deterministic} algorithm must do at least $\Omega(d^2\log n/\log d)$ tests~\cite{DR82,F96,PR11,R94}. This is $O(d/\log d)$ times more than the number of tests of the folklore non-adaptive randomized algorithm that does only $O(d\log n)$ tests. Due to their reduced number of tests, randomized non-adaptive algorithms for group testing have drawn the attention of researchers for the past few decades, and many algorithms have been proposed~\cite{BBKT95,BKB95,DH06,ER63,H00,HL01}.

The set of tests in any non-adaptive deterministic (resp. randomized) algorithm can be identified with a binary (resp. random) $m\times n$  test matrix $M$ (also called pool design). Each row in $M$ corresponds to an assignment $a=(a_1, \cdots, a_n)\in \{0,1\}^n$ where $a_i=1$ if and only if $i\in J$ or equivalently, the $i$th item participates in the test defined by the subset $J$. For random algorithms, the following models are studied in the literature for constructing an $m\times n$ random test matrix $M$.

\begin{enumerate}
\item {\it Random incidence design} (RID algorithms). The entries in $M$ are chosen randomly and independently to be $0$ with probability $p$ and $1$ with probability $1-p$.
\item {\it Random $r$-size design} (RrSD algorithms). The rows in $M$ are chosen randomly and independently from the set of all vectors $\{0,1\}^n$ of weight $r$.
\item {\it Random $s$-set design} (RsSD algorithms). The columns in $M$ are chosen randomly and independently from the set of all vectors $\{0,1\}^m$ of weight $s$.
\item {\it Uniform Transversal Design  with alphabet of size $q$} (UTDq algorithms). The entries of a $q$-ary matrix $M'$ are chosen uniformly and independently at random to be any symbol of the alphabet $\Sigma=\{1,\cdots, q\}$. Then, the resulted matrix is turned into binary matrix $M$. Transforming $q$-ary matrix $M'$ to binary matrix $M$ goes as follows: each row  $r$ in $M'$ is translated to $q$ binary rows in $M$. For each entry value $\sigma \in \Sigma$, replace each entry that is equal to $\sigma$ by $1$ and the others convert to $0$. Therefore, if the matrix $M'$ is of the dimensions $m'\times n$, then $M$ is an $m\times n$ binary matrix where $m = qm'$.
\end{enumerate}
One advantage of  RID and RrSD algorithms over RsSD and UTDq algorithms is that, in parallel machines, the tests can be generated by different processors (or laboratories) without any communication between them. In those models all the machines use the same distribution, draw a sample and perform the test. Those algorithms are called {\it strong non-adaptive} in the sense that the rows of the matrix $M$ can also be non-adaptively generated in one parallel step.

For a model ${\cal M}$, let $m_{\cal M}(n,d)$ be the minimum number of tests of $n$ items with at most $d$ defective items that is required in order to ensure success (finding the defective items) with probability at least $1-\delta$, for some constant $\delta$. In this paper, we study the constant  $ c_{\cal M}$ for each model ${\cal M}$, where $ c_{\cal M}$ is defined as follows:
$$c_{\cal M}(d)=\lim_{n\to \infty} \frac{m_{\cal M}(n,d)}{\ln n}
\mbox{\ and \ } c_{\cal M}=\lim_{d\to \infty} \frac{c_{\cal M}(d)}{d}.$$

To the best of our knowledge, there has been little discussion about any non-trivial lower bound on the number of tests required in a non-adaptive randomized algorithm for group testing~\cite{DA12}. Moreover, the analyses of the previous models known in the literature give only upper bounds for $c_{\cal M}(d)$ and $c_{\cal M}$, and some of these bounds are not tight. For some models, the used techniques do not even lead to an upper bound, and other relaxed measures are examined, such as the expected number of non-defective items that are eliminated after each test.

The objective of this paper is to establish lower and upper bounds on the number of tests required for a non-adaptive randomized algorithm to identify $d$ defectives among $n$ items with a success probability at least $1-\delta$. We develop new techniques that give tight bounds for $c_{\cal M}(d)$  and $c_{\cal M}$ over the models: ${\cal M}=$RID, RrSD, RsSD and UTDq.

\subsection{Old and New Results}

Let $M$ be an $m\times n$ test matrix. Let $I\subseteq S=[n]$, $|I| \leq d$ be the set of defective items. Let $T(I,M)$ denote the vector of answers to the tests (rows of $M$), that is, $T(I,M):=\vee_{i\in I} M^{(i)}$ where $\vee$ is bit-wise ``or'' and $M^{(i)}$ is the $i$th column of $M$. A matrix $M$ is called $(n,I)$-{\it separable} if for every $J\subseteq [n]$, $|J|\le d$ and $J\not=I$, we have $T(J,M)\not=T(I,M)$. That is, the only set $J$ of up to $d$ items that is consistent with the answers of the tests $\vee_{i\in I} M^{(i)}$ is $I$. 

While the separability property is obviously sufficient to guarantee identifying the defectives successfully, unfortunately, the analysis of such property seems to be very involved~\cite{BDKS16}.  Therefore, a more relaxed property is required. A matrix $M$ is called  $(n,I)$-{\it disjunct} with respect to some subset $I\subset [n], |I|\leq d$, if for each $i\not\in I$, there is a test that contains it but does not contain any of the defective items. Formally, for any  $i\not\in I$, there is a row $t\in[m]$ such that $M_{t,i} = 1$ and for all $j\in I$, $M_{t,j}=0$. Since no defective item participates in such test, the response of the oracle on it will be negative ($0$) and hence, is a witness for the fact that the item $i$ is not defective. 

If the test matrix $M$ is $(n,I)$-disjunct, the decoding algorithm reveals the defective items according to the following procedure. It starts with a set  $X=S$. After making all the tests defined by $M$, for every negative answer of a row $a$ in $M$, it removes from $X$ all the items $i$ where $a_i=1$. Since $M$ is $(n,I)$-disjunct, all the non-defective items are guaranteed to have a test that eliminates them from $X$. Therefore, $X$ will eventually contain the defective items only. This can be done in linear time in the size of $M$. 

The folklore non-adaptive randomized algorithm randomly chooses $M$ such that, for any set of at most $d$ defective items $I$, with probability at least $1-\delta$, $M$ is  $(n,I)$-disjunct, and then applies the previous algorithm to identify the defective items. This is why the property of {\it disjunction} is well studied in the literature~\cite{BBKT95,BKB95,DH06,H00,HL01,HL03,HL04}. It is well known (and very easy to see) that if $M$ is $(n,I)$-disjunct then $M$ is $(n,I)$-separable.

Let $\delta$ be some constant. For a model ${\cal M}$, let $m^D_{\cal M}(n,d)$ be the minimum number of tests that is required in order to ensure that for any set of at most $d$ defective items $I$, with probability at least $1-\delta$, the test matrix is $(n,I)$-disjunct. We  define,

$$c^D_{\cal M}(d)=\lim_{n\to \infty} \frac{m^D_{\cal M}(n,d)}{\ln n}
\mbox{\ and \ } c^D_{\cal M}=\lim_{d\to \infty} \frac{c^D_{\cal M}(d)}{d}.$$
Since an $(n,I)$-disjunct matrix is $(n,I)$-separable, for every model $\M$, we have
\begin{eqnarray}\label{UU}
c_\M(d)\le c^D_\M(d).
\end{eqnarray}

Consider $m_{\cal M}(n,d+1)$ random tests in the model ${\cal M}$ and their corresponding matrix $M$. Let $I$ be any set of size $d$. Then, with probability at least $1-\delta$, $M$ is $(n,I)$-separable and therefore for any $j\not\in I$ we have $\vee_{i\in I\cup\{j\}} M^{(i)}\not =\vee_{i\in I} M^{(i)}$. Notice that $\vee_{i\in I\cup\{j\}} M^{(i)}\not =\vee_{i\in I} M^{(i)}$ implies that there is a row $a$ of $M$ that satisfies $a_i=0$ for all $i\in I$ and $a_j=1$. Therefore, with probability at least $1-\delta$, $M$ is $(n,I)$-disjunct. Thus, $m^D_{\cal M}(n,d)\le m_{\cal M}(n,d+1)$ and
\begin{eqnarray}\label{LL}
c_\M(d+1)\ge c^D_\M(d).
\end{eqnarray}
Since $c_\M(d), c^D_\M(d)=O(d)$ and from (\ref{UU}) and (\ref{LL}) it follows that $c_\M= c^D_\M$. The best lower bound for $c_\M$ is
$$c_\M \ge 1/\ln 2 \mbox{\ \ \  and\ \ \ } c_\M(d), c^D_\M(d)\ge d/\ln 2.$$
This bound  follows from the trivial information-theoretic lower bound $d\log n=(d/\ln 2)\ln n$.

Seb\"o, \cite{S85}, studies the RID model for the simple case when the number of defective items is {\it exactly}~$d$. He shows that the best probability for the random test matrix (i.e., that gives a minimum number of tests) is $p=1-1/2^{1/d}$ and, in this case, the number of tests meets the information-theoretic lower bound.

The general case is studied in~\cite{BBKT95,BDKS16,BKB95,DH06,ER63,H00,HL01}. The technique used in most of the studies relies on obtaining the probability that maximizes the expected number of items eliminated in one test. In \cite{BBKT95}, it is shown that for the RID model this probability is $p=1-1/(d+1)$. Moreover, it is shown that  using this probability $c_{\RID}^D(d)\le ed+{(e-1)}/{2}+O\left({1}/{d}\right)$ and therefore
$c_{\RID}=c_{\RID}^D\le e.$~\cite{BBKT95,BDKS16}.
\ignore{
\begin{eqnarray}
c_{\RID}^D(d)\le \gamma_d&:=&\frac{1}{\ln \left(1-\frac{1}{d}\left(1-\frac{1}{d+1}\right)^{d+1}\right)^{-1}}\nonumber\\
&=&ed+\frac{e-1}{2}+O\left(\frac{1}{d}\right).
\end{eqnarray}
and therefore
$$c_{\RID}=c_{\RID}^D\le e.$$}
In their work, Bshouty et. al. \cite{BDKS16} study the separability property and show that
$c_{\RID}(d)\le ed-{(e+1)}/{2}+O\left(1/d \right).$

In this paper we give lower and upper bounds on the number of tests required by any non-adaptive randomized group testing algorithm when tests are chosen according to the RID, RrSD, RsSD and UTDq models. For random designs selected according to the RID model, we show that $$c_{\RID}^D(d)=ed, \mbox{\ \ \ \ and therefore, \ \ \ \ }c_\RID=e=2.718.$$ The optimal probability that derives this result is $p=e^{-1/d}$. This, in particular, shows that finding the probability that maximizes the expected number of items that are eliminated in one test does not necessarily give the probability that minimizes the total number of tests. Considering the RrSD model, we prove that $$c^D_{\RrSD}(d)=c^D_\RID(d)=ed, \mbox{\ \ \ \ and\ \ \ \ }c_\RrSD=c_\RID=e=2.718.$$  Moreover, for the RsSD model, we show that $$c^D_\RsSD(d)= \frac{1}{(\ln 2)\max_{0<\alpha\le 1}\left(H(\alpha)-\beta H\left(\frac{\alpha}{\beta}\right)\right)}   \mbox{\ \ \ \ and\ \ \ \ } c_\RsSD=\frac{1}{(\ln 2)^2}=2.081$$
where $\beta=1-(1-\alpha)^d$. Regarding the UTDq model, we prove that
$$c^D_\UTDq(d)= \min_q\frac{q}{-\ln P_{q,d}}   \mbox{\ \ \ \ and\ \ \ \ } c_\UTDq=\frac{1}{(\ln 2)^2},$$
where $$P_{q,d}=\left(\prod_{i=1}^d\left(\frac{i}{q}\right)^{R_{q,d,i}}\right)^{1/q^d},$$ and $R_{q,d,i}$ is the number of strings in $[q]^d$ that contains exactly $i$ symbols.

In addition, for small $d$, Table~\ref{TBL11} outlines the values of $c_{{\cal M}}^D(d)/d$ across the above four models.

\begin{table}\centering

  \begin{tabular}{|c || l | l|l|l|  }
    \hline
    $d$ &RID& RrSD&RsSD& UTDq \\ \hline \hline
    2 &2.718& 2.718& 1.95 & 2.417 \\ \hline
    3 &2.718& 2.718& 1.96& 2.31\\ \hline
    4 &2.718& 2.718& 1.992& 2.225 \\ \hline
    5 &2.718& 2.718& 2.01 & 2.221\\ \hline
    6 &2.718& 2.718& 2.02 & 2.198\\ \hline
    7 &2.718& 2.718& 2.03  &2.182\\ \hline
   8 & 2.718& 2.718&2.04  & 2.17\\ \hline
    9 &2.718& 2.718& 2.044 & 2.16\\ \hline
    10&2.718& 2.718& 2.05 &2.152\\ \hline
$\to\infty$&$e=$& $e=$&$\frac{1}{\ln^22}=$&$\frac{1}{\ln^22}=$ \\
&2.718& 2.718&2.081&2.081 \\ \hline
  \end{tabular}

\caption{ Leading constant of $d\ln(n)$ for small $d$ and $n\to \infty$ for RID, RrSD, RsSD, and  UTDq models.}
\label{TBL11}
\end{table}

\section{The RID Model}
In this section, we study the Random Incidence Design (RID algorithms). We recall that in this model, the entries in $M$ are chosen randomly and independently to be $0$ with probability $p$, and $1$ with probability $1-p$.
We prove:
\begin{theorem} We have
$$c^D_\RID(d)= ed \mbox{\ \ \ \ and\ \ \ \ } c_\RID= e.$$
\end{theorem}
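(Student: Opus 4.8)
The plan is to prove matching upper and lower bounds on $m^D_{\RID}(n,d)$, the minimum number of RID tests needed so that for every fixed defective set $I$ of size at most $d$, the random matrix $M$ is $(n,I)$-disjunct with probability at least $1-\delta$. For the upper bound, fix $I$ with $|I|=d$ (the worst case) and a non-defective item $j\notin I$. A single row $a$ of $M$ is a \emph{witness} for $j$ if $a_j=1$ and $a_i=0$ for all $i\in I$; this happens with probability $(1-p)p^d$ independently across rows. With $m$ rows, the probability that $j$ has no witness is $(1-(1-p)p^d)^m$, and by a union bound over the $n-d$ non-defective items the failure probability is at most $n(1-(1-p)p^d)^m \le n\exp(-m(1-p)p^d)$. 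To make this less than $\delta$ it suffices to take $m \ge (\ln n + \ln(1/\delta))/((1-p)p^d)$. Optimizing over $p$: the quantity $(1-p)p^d$ is maximized at $p = d/(d+1)$, but — and this is the subtle point the paper flags — that is \emph{not} the right choice here, because we only need $n\exp(-m(1-p)p^d)<\delta$, and the factor that actually controls the leading constant is $1/(1-p)p^d$ evaluated so as to minimize $m/\ln n$. Setting $p = e^{-1/d}$ gives $p^d = 1/e$ and $1-p = 1-e^{-1/d} = 1/d - 1/(2d^2)+O(1/d^3)$, so $(1-p)p^d = 1/(ed) + O(1/d^2)$ and hence $m \le (ed + o(d))\ln n$ for large $n$, giving $c^D_{\RID}(d)\le ed$ and $c_{\RID}\le e$.

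For the lower bound I would argue that no choice of $p$ and no $m < (1-o(1))\,ed\ln n$ can achieve disjunction with constant success probability. Fix $I$ of size $d$. For a non-defective item $j$, the event $A_j$ that $j$ has no witness row has probability $q_j := (1-(1-p)p^d)^m$, and crucially these events are \emph{not} independent only through the shared rows' behavior on $I$; conditioned on the pattern $M$ restricted to columns $I$, the events $A_j$ for $j\notin I$ become independent. More cleanly, let $N$ be the number of rows that are all-zero on $I$; then $N \sim \mathrm{Bin}(m, p^d)$, and given $N$, item $j$ fails to have a witness with probability $(1-(1-p))^N = p^N$ — wait, among those $N$ rows $j$ needs a $1$, which occurs with probability $1-p$ per row, so the conditional no-witness probability is $p^N$, independently over $j$. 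Thus $\Pr[M\text{ is }(n,I)\text{-disjunct}] = \E[(1-p^N)^{n-d}]$. For this to be at least $1-\delta$ we need $(n-d)\,\E[p^N]$ to be bounded, i.e. roughly $n\,\E[p^N] = O(1)$. Now $\E[p^N] = (1-p^d(1-p))^m \ge \exp(-m p^d(1-p)/(1-p^d(1-p)))$, but more directly $\E[p^N] = (1-(1-p)p^d)^m$, so we need $m \ge (\ln n - O(1))/(-\ln(1-(1-p)p^d)) \ge (\ln n - O(1))/((1-p)p^d \cdot (1+o(1)))$ after expanding the logarithm, and then we must \emph{lower bound} $1/((1-p)p^d)$ over all $p\in(0,1)$. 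The minimum of $1/((1-p)p^d)$ is at $p=d/(d+1)$ giving value $(d+1)^{d+1}/d^d \sim ed$; hence $m \ge (ed - O(1))\ln n$ for large $n$, so $c^D_{\RID}(d)\ge ed$. Combined with the upper bound, $c^D_{\RID}(d) = ed$, and dividing by $d$ and letting $d\to\infty$ gives $c_{\RID}=c^D_{\RID}=e$ using the identity $c_\M=c^D_\M$ established in the excerpt.

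The main obstacle I anticipate is making the lower bound fully rigorous rather than heuristic: one must handle the case $p^d(1-p)$ small carefully (where $-\ln(1-x)\approx x$ is tight) versus $p$ bounded away from $0$ and $1$ (where the constant $ed$ only emerges in the $d\to\infty$ limit and for finite $d$ one gets the sharper $(d+1)^{d+1}/d^d$), and one must confirm that the reduction "given $N$, the $A_j$ are independent" is exactly correct and that concentration of $N$ around $mp^d$ is not even needed — working directly with $\E[p^N]=(1-(1-p)p^d)^m$ sidesteps concentration entirely. A secondary point is that the displayed result is stated for $c^D_{\RID}(d)$ \emph{exactly} equal to $ed$ (not just asymptotically in $d$), so the finite-$d$ analysis must show the leading constant in front of $\ln n$ is exactly $\inf_p 1/((1-p)p^d)$ in the limit $n\to\infty$, which requires pinning down that lower-order terms in $m$ are $o(\ln n)$; that is where I would spend the most care, likely via a two-sided expansion of $\ln(1-(1-p)p^d)$ and an explicit optimization showing the optimizing $p$ for the \emph{problem} (as opposed to for the one-test elimination count) is $p=e^{-1/d}+o(1/d)$, which is what yields $ed$ rather than the weaker $ed + (e-1)/2$ of prior work.
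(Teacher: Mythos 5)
Your overall plan (matching upper and lower bounds, the identity $\Pr[M\text{ is }(n,I)\text{-disjunct}]=\E[(1-p^N)^{n-d}]$ obtained by conditioning on the number $N$ of rows that vanish on $I$, the choice $p=e^{-1/d}$, and the reduction $c_\M=c^D_\M$) is the paper's plan, but both halves of the execution have a real gap, and the two gaps are faces of the same mistake: you work with $\E[p^N]=(1-(1-p)p^d)^m$ instead of with $p^N$ at the typical value $N\approx mp^d$. For the upper bound, the per-item ``witness'' union bound forces $m\ge \ln(n/\delta)/\bigl(-\ln(1-(1-p)p^d)\bigr)$; with your choice $p=e^{-1/d}$ this is $(ed+e/2+O(1/d))\ln n$, and even with the optimal $p=d/(d+1)$ for this functional it is $(ed+(e-1)/2+O(1/d))\ln n$ (the prior-work constant quoted in the introduction). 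Either way you only get $c^D_\RID(d)\le ed+\Theta(1)$, which suffices for $c_\RID\le e$ but not for the stated equality $c^D_\RID(d)=ed$. The paper's Lemma~\ref{UforRID} gets $ed$ exactly by a different route: Chernoff shows the number of good rows is at least roughly $m/e$ with probability $1-\delta/2$, and conditioned on that the per-column failure probability is $p^{m/e}=e^{-m/(ed)}$; the exponent thus becomes $mp^d\ln(1/p)$ rather than $m(1-p)p^d$, and $\max_p p^d\ln(1/p)=1/(ed)$, attained at $p=e^{-1/d}$ --- this is precisely where the constant $ed$ and the optimality of $p=e^{-1/d}$ come from.

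For the lower bound, the step ``success with probability $1-\delta$ requires $(n-d)\E[p^N]=O(1)$'' is false, and your remark that concentration of $N$ ``is not even needed'' is exactly where the proof breaks. $\E[p^N]$ is dominated by the lower tail of $N$, where the conditional failure probability is capped at $1$, so $n\E[p^N]$ can be huge while the matrix is still disjunct with probability $1-\delta$: with the paper's sufficient choice $p=e^{-1/d}$ and $m\approx ed\ln(2n/\delta)$ one has $n\E[p^N]=n^{\Theta(1/d)}\to\infty$. Worse, your criterion, optimized over $p$, would yield $m\ge(ed+(e-1)/2-o(1))\ln n$, contradicting the correct upper bound, so the argument cannot be patched within that framework. (Note also the internal inconsistency: your upper and lower criteria involve the same functional of $p$, so they would meet at $(d+1)^{d+1}/d^d$, not at $ed$.) A valid lower bound, as in Lemma~\ref{LforRID}, must bound $N$ from above at its typical scale --- Chernoff when $c=d\ln(1/p)$ is bounded, Markov when $p$ is very small (the paper's cases $c\le 6$ and $c\ge 6$) --- and then lower bound the conditional failure probability $1-(1-p^x)^{n-d}$ at that typical $x$, using the monotonicity Lemma~\ref{mono} to pass from conditioning on $\{X\le x\}$ to $\{X=x\}$; this gives $m\ge ed\ln n-\Theta(\sqrt{d\ln n})$ uniformly over $p$, hence $c^D_\RID(d)\ge ed$.
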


In Lemma~\ref{UforRID}, by choosing $p=e^{-\frac{1}{d}}$, we develop an upper bound on the number of queries required for the group testing problem when $M$ is designed according to the RID model.  For this choice of $p$, we establish, in  Lemma~\ref{LforRID}, a lower bound that shows that this choice of $p$ gives the minimum number of tests in this model.

As in the introduction, let the set $I$ denote defective items set. We say that a row $i$ in $M$ is a {\it good} row if for every $j\in I$, we have $M_{i,j}=0$.

\begin{lemma}\label{UforRID}
Let M be an $m \times n$ RID  matrix with $p=e^{-\frac{1}{d}}$ where
\begin{equation}
m-\sqrt{2em \ln \frac{2}{\delta }}= e d \ln \frac{2n}{\delta}.
\label{mRID}
\end{equation}

Then, for any $I$ of size at most $d$, with probability at least $1-\delta $, $M$ is an $(n,I)$-disjunct matrix. In particular,
$$
m=ed\ln\frac{2n}{\delta}+\Theta\left(\sqrt{d\ln\frac{n}{\delta}\ln\frac{1}{\delta}}\right), \mbox{\ \ \ \ } c^D_\RID(d)\le ed \mbox{\ \ \ \ and\ \ \ \ } c_\RID\le e.
$$

\label{RID_UB}
\end{lemma}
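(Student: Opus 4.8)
The plan is to fix an arbitrary defective set $I$ with $|I|\le d$ and bound the failure probability, i.e. the probability that $M$ is \emph{not} $(n,I)$-disjunct. First I would note that a column $j\notin I$ fails to be "covered" exactly when no good row $i$ (a row with $M_{i,k}=0$ for all $k\in I$) has $M_{i,j}=1$. Conditioning on which rows are good is the natural first move: each row is good independently with probability $p^{|I|}\ge p^d = e^{-1}$ (here the choice $p=e^{-1/d}$ is calibrated precisely so that $p^d=1/e$). Let $G$ be the number of good rows; then $G$ is Binomial$(m,p^{|I|})$ and $\E[G]\ge m/e$. Given $G=g$ good rows, a fixed $j\notin I$ is uncovered with probability $p^g$ (independence of entries), so by a union bound over the at most $n$ choices of $j$ the conditional failure probability is at most $n p^g \le n e^{-g/d}$ (since $p=e^{-1/d}$, $p^g=e^{-g/d}$).

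Next I would split the failure event according to whether $G$ is large. Choose a threshold $g_0$ so that $n e^{-g_0/d}\le \delta/2$, i.e. $g_0 = d\ln(2n/\delta)$. Then
$$\Pr[M \text{ not }(n,I)\text{-disjunct}] \le \Pr[G< g_0] + \Pr[G\ge g_0]\cdot n e^{-g_0/d}\le \Pr[G< g_0] + \delta/2.$$
It remains to force $\Pr[G<g_0]\le \delta/2$. Since $\E[G]\ge m/e$, a lower-tail Chernoff bound gives $\Pr[G< g_0]\le \exp\!\big(-(\E[G]-g_0)^2/(2\E[G])\big)$ whenever $g_0<\E[G]$; bounding $\E[G]$ by $m/e$ from below in the right places, this is at most $\delta/2$ as soon as $m/e - g_0 \ge \sqrt{(2m/e)\ln(2/\delta)}$, i.e. $m - e g_0 \ge \sqrt{2em\ln(2/\delta)}$. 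Substituting $g_0=d\ln(2n/\delta)$ this is exactly the defining equation~\eqref{mRID}, so with that choice of $m$ the total failure probability is at most $\delta$, proving the disjunctness claim.

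Finally, for the asymptotic statements I would solve \eqref{mRID} for $m$: writing it as $m - \sqrt{2em\ln(2/\delta)} = ed\ln(2n/\delta)$ and treating $\sqrt m$ as the unknown in a quadratic, one gets $m = ed\ln(2n/\delta) + \Theta\big(\sqrt{d\ln(n/\delta)\,\ln(1/\delta)}\big)$, which is the stated estimate; dividing by $\ln n$ and letting $n\to\infty$ (with $d,\delta$ fixed) gives $c^D_\RID(d)\le ed$, and then dividing by $d$ and letting $d\to\infty$ gives $c_\RID\le e$. The one genuinely delicate point is the two-sided calibration in the Chernoff step: the threshold $g_0$ must simultaneously be large enough that $ne^{-g_0/d}\le\delta/2$ and comfortably below $\E[G]\ge m/e$ so that the lower-tail bound applies and contributes only the $\sqrt{m\ln(1/\delta)}$-sized correction term — this is exactly what the square-root term in \eqref{mRID} buys, so the main work is checking that the constants fit together as claimed rather than anything conceptually hard.
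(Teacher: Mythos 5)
Your proposal is correct and follows essentially the same route as the paper's proof: condition on the count of good rows (rows with zeros in all columns of $I$), use a lower-tail Chernoff bound to guarantee at least $d\ln(2n/\delta)$ good rows except with probability $\delta/2$, and union bound over the at most $n$ non-defective columns to bound the conditional failure probability by $\delta/2$, with the defining equation~(\ref{mRID}) arising from exactly the same calibration of the Chernoff deviation. The only cosmetic difference is that you phrase the second step as conditioning on $G=g$ while the paper phrases it via events $A$ and $B$ with $\Pr[B]\le\Pr[A]+\Pr[B\mid\overline{A}]$; the content is identical.
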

\begin{proof}
For any $1\leq i\leq m$, let $X_i$ be a random variable that is equal to $1$ if the row $i$ in $M$ is a good row and $0$ otherwise. The probability that a row $i$ in $M$ is a good row is $(e^{-1/d})^{d }= e^{-1}$. Let $X = X_1 + \cdots + X_m$, be the number of the good rows in $M$. Then, $\E[X_i] = \Pr[X_i=1]=e^{-1}$ and $\mu := \E[X] = e^{-1}m$. Let $m'=ed\ln(2n/\delta)$.  Let $A$ be the event indicating that the number of the good rows in $M$ is less than $m'p^{d} = m'/e$. Let $T\subseteq[m]$  be the set of the good row indexes in $M$. That is, for each row $t\in T$, $M_{t,j}=0$ for all $j\in I$. Let $B$ be the event indicating that there is a column $j \notin I$ in $M$ such that $M_{t,j} = 0$ for all  $t\in T$,  and therefore, $M$ is not $(n,I)$-disjunct. Then we can say that,

\begin{equation}
\Pr[B|\overline{A}] \leq (n-d) p^{m'p^d} \leq n e^{-\frac{m'}{ed}} = \frac{\delta}{2}.
\label{res2}
\end{equation}

Using Chernoff bound, Lemma~\ref{Chernoff}, for $m$ as specified in (\ref{mRID}) and since $\mu = e^{-1}m$ we can conclude,
\begin{eqnarray}
\Pr[A]= \Pr\left[X < \frac{m'}{e}\right] &=&  \Pr \left[X < \left (1-\left (1-\frac{m'}{m}\right)\right)\mu\right]  \leq e^{-\frac{(1-\frac{m'}{m})^2m}{2e}} = \frac{\delta}{2}.  \label{res1}\nonumber \\
\end{eqnarray}
Using (\ref{res2}) and (\ref{res1}) we get,
\begin{align*}
\Pr[B] &= \Pr[B|A]\Pr[A]+\Pr[B|\overline A]\Pr[\overline A]\le \Pr[A]+\Pr[B|\overline A] \le \delta.\qed
 \end{align*}
\end{proof}

\begin{lemma}\label{LforRID} Let $M$ be an $m \times n$ RID matrix with probability $0\le p\le 1$. If $$m= ed\ln n -e^{3}\sqrt{3m},$$
then with probability at least $1/3$, $M$ is not $(n,I)$-disjunct.

In particular, to have success probability at least $1/3$, we must have
$$m\ge ed\ln n-\Theta(\sqrt{d \ln n}).$$ 
Therefore, $$c^D_\RID(d)\ge ed \mbox{\ \ \ \ and\ \ \ \ } c_\RID\ge e.$$
\end{lemma}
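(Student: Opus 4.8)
The plan is to fix a defective set $I$ of size exactly $d$ --- say $I=\{1,\dots,d\}$ --- and to show that for $m$ below the claimed threshold the random RID matrix fails to be $(n,I)$-disjunct with probability at least $1/3$; this suffices, since $m^D_\RID(n,d)$ must make \emph{every} $|I|\le d$ disjunct with probability $\ge 1-\delta$. Assume $0<p<1$ (for $p\in\{0,1\}$ the matrix is all-zeros or all-ones and, since $n>d$, trivially not $(n,I)$-disjunct). Call row $i$ \emph{good} if $M_{i,j}=0$ for all $j\in I$, and let $g$ be the number of good rows, so $g\sim\mathrm{Bin}(m,p^{d})$. Conditioned on the set of good rows (of size $g$), the entries of $M$ outside the columns of $I$ are i.i.d.\ Bernoulli and independent of which rows are good; hence $M$ is $(n,I)$-disjunct iff each column $j\notin I$ carries a $1$ in some good row, an event of probability $(1-p^{g})^{n-d}$ given $g$. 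Therefore
\begin{equation*}
\Pr[M\text{ not }(n,I)\text{-disjunct}]=\E_g\!\left[1-(1-p^{g})^{n-d}\right]\ge \Pr[g\le g_1]\cdot\bigl(1-(1-p^{g_1})^{n-d}\bigr)
\end{equation*}
for every threshold $g_1$, because $g\mapsto 1-(1-p^{g})^{n-d}$ is non-increasing.

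I would then choose $g_1=\ln\!\bigl((n-d)/\ln 3\bigr)/\ln(1/p)$, so that $p^{g_1}=\ln 3/(n-d)$ and $(1-p^{g_1})^{n-d}\le e^{-(n-d)p^{g_1}}=1/3$, making the second factor $\ge 2/3$. It remains to verify $\Pr[g\le g_1]\ge 1/2$. If $mp^{d}\le 1/2$ this is immediate, since $\Pr[g=0]=(1-p^{d})^{m}\ge 1-mp^{d}\ge 1/2$ and $g=0$ means no good rows exist at all. Otherwise $\E[g]=mp^{d}$ and $\mathrm{Var}(g)\le mp^{d}$, and Chebyshev gives $\Pr[g\le g_1]\ge 1/2$ as soon as $g_1\ge mp^{d}+\sqrt{2mp^{d}}$ (integrality of $g_1$ only costs an additive $\ln(1/p)\le\ln(2m)/d=o(\sqrt m)$ in this regime). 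Multiplying this target by $\ln(1/p)$, it suffices that
\begin{equation*}
\ln\frac{n-d}{\ln 3}\ \ge\ m\,p^{d}\ln\frac1p+\sqrt{2m}\,p^{d/2}\ln\frac1p+o(\sqrt m).
\end{equation*}
Here the two elementary maximizations $\max_{0<p<1}p^{d}\ln(1/p)=1/(ed)$ (attained at $p=e^{-1/d}$) and $\max_{0<p<1}p^{d/2}\ln(1/p)=2/(ed)$ bound the right side by $\frac{1}{ed}\bigl(m+2\sqrt{2m}\bigr)+o(\sqrt m)$, so the display --- hence the whole argument, for every $p$ --- holds whenever $m+O(\sqrt m)\le ed\,\ln\bigl((n-d)/\ln 3\bigr)$. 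For $m=ed\ln n-e^{3}\sqrt{3m}$ this is true for all large $n$: the budget $e^{3}\sqrt{3m}$ absorbs $2\sqrt{2m}$, the $o(\sqrt m)$ correction, the constant $ed\ln\ln 3$, and the vanishing gap $ed\ln\frac{n}{n-d}$. This gives $\Pr[M\text{ not }(n,I)\text{-disjunct}]\ge\tfrac12\cdot\tfrac23=\tfrac13$.

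I expect the main obstacle to be exactly this uniformity over all $p\in(0,1)$: for the single value $p=e^{-1/d}$ the computation is routine, but one must show no other $p$ does better, and that is what the optimization $\max_p p^{d}\ln(1/p)=1/(ed)$ accomplishes --- and what pins the constant at $e$. The extreme regimes need separate (easy) handling: for $p$ near $0$, $\E[g]$ is tiny and one uses $\Pr[g=0]\ge 1-\E[g]$; for $p$ near $1$, $g_1$ is large and the Chebyshev step has slack. The residual bookkeeping --- integrality of $g_1$ and the $\ln\ln 3$, $\ln\frac{n}{n-d}$ corrections --- is what makes it convenient to state the bound with the deliberately loose constant $e^{3}$. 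Finally, the ``In particular'' statement is the contrapositive: to be $(n,I)$-disjunct with probability bounded away from $2/3$, an RID scheme needs $m\ge ed\ln n-e^{3}\sqrt{3m}$; solving this relation in $\sqrt m$ (using $m=\Theta(d\ln n)$) yields $m\ge ed\ln n-\Theta(\sqrt{d\ln n})$, so dividing by $\ln n$ and letting $n\to\infty$ gives $c^{D}_\RID(d)\ge ed$, and then dividing by $d$ and letting $d\to\infty$ (with $c_\RID=c^{D}_\RID$ from the introduction) gives $c_\RID\ge e$. The same scheme, with $g_1$ tuned to any fixed target $\epsilon$ in place of $1/3$, pushes the failure probability above any constant $<1$, so the conclusion is insensitive to the choice of $\delta$.
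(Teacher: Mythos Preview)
Your argument is correct and shares the paper's skeleton --- fix $I=[d]$, condition on the number $g$ of good rows, use $\Pr[\overline F\mid g]=(1-p^{g})^{n-d}$, and show $g$ rarely exceeds a threshold that keeps this small --- but the execution is genuinely different. The paper parametrizes $c=d\ln(1/p)$, takes the threshold $e^{-c}m'=p^{d}\cdot ed\ln n$ (just above $\E g$), and splits into $c\le 6$ (Chernoff upper tail, using $ce^{1-c}\le 1$) versus $c\ge 6$ (Markov with factor~$2$). You instead choose $g_1$ so that the conditional failure is exactly $\ge 2/3$, and then verify $\Pr[g\le g_1]\ge 1/2$ via Chebyshev (with a trivial branch when $\E g\le 1/2$), reducing the whole question to the explicit calculus fact $\max_{0<p<1}p^{d}\ln(1/p)=1/(ed)$, together with the companion $\max_{p}p^{d/2}\ln(1/p)=2/(ed)$ for the fluctuation term. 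The two arguments are equivalent at heart --- the paper's inequality $ce^{1-c}\le 1$ \emph{is} your optimization in disguise --- but your presentation makes the role of $p=e^{-1/d}$ and of the constant $e$ completely transparent, and avoids the somewhat ad~hoc cut at $c=6$; the paper's Chernoff step, on the other hand, buys slightly sharper tail control, which is why it can get away with the looser Markov bound in the remaining range.
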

\begin{proof} Consider $c=d\ln(1/p)$. Then $p=e^{-c/d}$.
Let $m'=ed\ln n$ and $w=e^{3}\sqrt{3m}$. Then $m=m'-w$. Let $X$ be a random variable that is equal to the number of good rows in $M$. Then $\mu:= \E[X]=e^{-c}m$. Let $F$ be the event that $M$ is not $(n,I)$-disjunct. The probability that $M$ is $(n,I)$-disjunct is the probability that in every column that corresponds to a non-defective item, not all the entries of the good rows are zero. Therefore, given that $X=x$, it is easy to see that $\Pr[\overline{F}|X=x]=(1-p^x)^{n-|I|}$. We distinguish between two cases: $c\ge 6$ and $c\le 6$.

Case I. $c\le 6$. By Chernoff bound, Lemma~\ref{Chernoff}, and since $ce^{1-c}\le 1$ for every $c\le6$,
\begin{align*}
\Pr[F] &\ge \Pr[F\ \wedge\ X\le e^{-c} m'] =\Pr[F\ |\ X\le e^{-c} m']\cdot \Pr[X\le e^{-c} m'] \\
&\ge \left(1-\left(1-p^{e^{-c} m'} \right)^{n-d}\right) \left(1-\Pr\left[X\ge e^{-c} m'\right]\right) \\
&\ge \left(1-\left(1-n^{-ce^{1-c}} \right)^{n-d} \right) \left(1-\Pr\left[X\ge e^{-c} m\left(1+\frac{w}{m}\right) \right]\right) \\
&\ge  \left(1-\left(1-n^{-ce^{1-c}}\right)^{n-d}\right) \left(1-e^{-\frac{e^{-c} w^2}{3m} } \right) \\
&\ge \left( 1-\left(1-\frac{1}{n}\right)^{n-d} \right) \left(1-e^{-1} \right) =  \left(1-e^{-1+o(1)} \right)  \left(1-e^{-1} \right) > \frac{1}{3},
\end{align*}
where the last inequality is correct due to the fact that $d=o(n)$. \\

Case II. $c\ge 6$. By Markov bound, Lemma~\ref{Markov}, since $m\le ed\ln n$ and $2ce^{1-c}<0.1$, then for $c\ge 6$ and $d=o(n)$ we have,
\begin{align*}
\Pr[F] &\ge \Pr[F\ \wedge\ X\le 2e^{-c} m] =\Pr[F\ |\ X\le 2e^{-c} m]\cdot (1-\Pr[X\ge 2e^{-c} m]) \\
&\ge \left(1-\left(1-p^{2e^{-c} m} \right)^{n-d}\right)\cdot \frac{1}{2} \ge \frac{1}{2}\left(1-\left(1-p^{2e^{-c} ed\ln n} \right)^{n-d} \right)\\
& \ge  \frac{1}{2}\cdot\left(1-\left(1-n^{-2ce^{1-c}}\right)^{n-d}\right)  \ge \frac{1}{2}\cdot\left( 1-\left(1-\frac{1}{n^{0.1}}\right)^{n-d} \right) > \frac{1}{3}.\qed
\end{align*}
\end{proof}

\section{The RrSD model}
In this section we study the Random $r$-Size Design (RrSD algorithms). As defined previously, in this model, the rows in $M$ are chosen randomly and independently from the set of all vectors $\{0,1\}^n$ of weight $r$.
\begin{theorem} We have
$$c^D_\RrSD(d)= ed \mbox{\ \ \ \ and\ \ \ \ } c_\RrSD= e.$$
\end{theorem}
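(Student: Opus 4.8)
The plan is to prove the theorem through two lemmas mirroring Lemmas~\ref{UforRID} and~\ref{LforRID}: an upper bound for one well-chosen row weight $r$, and a matching lower bound valid for every $r$. Fix a defective set $I$ with $|I|=d$ (treating this worst case suffices). The structural fact driving everything is that, conditioned on a row being \emph{good} for $I$, that row is a uniformly random weight-$r$ subset of $[n]\setminus I$, and distinct good rows remain independent. Write $q=\binom{n-d}{r}/\binom{n}{r}$ for the probability that a row is good and $q_1=1-r/(n-d)$ for the probability that a good row avoids a fixed non-defective item; then $\ln q-d\ln q_1=o(1)$ as $n\to\infty$ for fixed $d$, so $q=q_1^{d}(1+o(1))$, and the $\RrSD(r)$ model restricted to $I$ behaves like $\RID$ with $p=q_1$. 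This is why the best weight will make $q_1\approx e^{-1/d}$, i.e. $r=\lceil n(1-e^{-1/d})\rceil$, exactly in analogy with $p=e^{-1/d}$ in the $\RID$ model.

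For the upper bound I would take $r=\lceil n(1-e^{-1/d})\rceil$, so that $q=e^{-1}(1+o(1))$, let $X$ count the good rows among the $m$ rows of $M$, and apply the Chernoff bound (Lemma~\ref{Chernoff}) to get $X\ge t:=(1-o(1))e^{-1}m$ outside a bad event of probability $\delta/2$. Conditioned on $X\ge t$, a union bound over the at most $n$ non-defective items bounds the probability that $M$ is not $(n,I)$-disjunct by $(n-d)q_1^{\,t}$, which is at most $\delta/2$ once $m\ge ed\ln(2n/\delta)(1+o(1))$, since $q_1^{\,e^{-1}}=e^{-1/(ed)}(1+o(1))$. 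Adding the two bad events gives $(n,I)$-disjunctness with probability $\ge1-\delta$, hence $c^D_\RrSD(d)\le ed$ and $c_\RrSD\le e$. Optimality of this $r$ is the same one-line calculus as in the $\RID$ section: $q\ln(1/q_1)\to q_1^{\,d}\ln(1/q_1)$ is maximized over $q_1\in(0,1)$ at $q_1=e^{-1/d}$, with maximum $1/(ed)$.

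For the lower bound I would fix an arbitrary weight $r$, put $c=d\ln(1/q_1)$ so that $q_1=e^{-c/d}$ and $\E[X]=qm\approx e^{-c}m$, and let $Y$ be the number of non-defective items left uncovered by the good rows, so $M$ is $(n,I)$-disjunct exactly when $Y=0$. Since $\E[Y\mid X=x]=(n-d)q_1^{\,x}$, I want $\Pr[Y\ge1\mid X=x]\ge \E[Y\mid X=x]/(1+\E[Y\mid X=x])$; this follows from $\Pr[Y\ge1]\ge(\E Y)^2/\E[Y^2]$ together with the computation that two distinct non-defectives are \emph{negatively} correlated in being uncovered, $\Pr[j,j'\text{ both uncovered}\mid X=x]=\big(\tfrac{(n-d-r)(n-d-r-1)}{(n-d)(n-d-1)}\big)^{x}<q_1^{\,2x}$, giving $\E[Y^2\mid X=x]<\E[Y\mid X=x]+(\E[Y\mid X=x])^{2}$. (Alternatively, the incidence indicators of a uniformly random fixed-size subset are negatively associated, which yields the sharper $\Pr[\,\overline F\mid X=x]\le(1-q_1^{\,x})^{n-d}$, the analogue of the exact identity in Lemma~\ref{LforRID}, where $F$ denotes the event that $M$ is not $(n,I)$-disjunct.) From here the two-case analysis of Lemma~\ref{LforRID} carries over with $p$ replaced by $q_1$: for $c\le6$ a Chernoff bound places $X$ just below $e^{-c}m'$ with $m'=ed\ln n$, where $\E[Y\mid X=e^{-c}m']=(n-d)\,n^{-ce^{1-c}}\ge(n-d)/n\to1$ because $ce^{1-c}\le1$; for $c\ge6$ Markov's inequality (Lemma~\ref{Markov}) suffices despite a factor-$2$ loss in $X$, since $2ce^{1-c}<0.1$ there. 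In both cases the failure probability exceeds an absolute constant whenever $m\le ed\ln n-\Theta(\sqrt{d\ln n})$, for every $r$, so $c^D_\RrSD(d)\ge ed$. Combining the two lemmas yields $c^D_\RrSD(d)=ed$ and thus $c^D_\RrSD=e$; since $c_\M=c^D_\M$ by the introduction, also $c_\RrSD=e$, proving the theorem.

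The only real departure from the $\RID$ analysis, and what I expect to be the main obstacle, is the within-row dependence of the $\RrSD$ entries: it invalidates the clean identity $\Pr[\overline F\mid X=x]=(1-p^{x})^{n-d}$ of Lemma~\ref{LforRID}, which must be replaced by the inequality above, obtained either from negative association or from the pairwise second-moment estimate. A secondary issue is that the asymptotic equivalences $q=q_1^{d}(1+o(1))$ and $q_1\to 1-r/n$ must be controlled uniformly enough across the range of $r$ for the optimization of $q\ln(1/q_1)$ to pin the leading constant at $ed$; the degenerate ranges (for instance $r$ sublinear in $n$, or $r$ within $o(n)$ of $n-d$) are handled separately, since there $M$ fails to be $(n,I)$-disjunct with probability tending to $1$ whenever $m=O(d\ln n)$.
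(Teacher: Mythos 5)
Your proposal is correct and follows essentially the same route as the paper: the same (asymptotically) optimal row weight corresponding to $p=e^{-1/d}$, a Chernoff-plus-union-bound argument for the upper bound (Lemma~\ref{UforRrSD}), and a lower bound via a conditional second-moment bound on the number of uncovered non-defective items given the number of good rows, with the same two-case split on $c$ and a Markov argument for the degenerate weights (Lemma~\ref{LforRrSD}). Your parametrization by $q_1=1-r/(n-d)$ together with the negative-correlation observation $\hat\mu_2\le q_1^{2x}$ is only a minor streamlining of the paper's Chebyshev computation, which works with $p=1-r/n$ and an explicit $\lambda=o(1)$ error term under $d<n^{1/2}/\ln^3 n$; the decomposition and estimates otherwise coincide.
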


In Lemma \ref{UforRrSD}, we give an upper bound using $r=(1-e^{-\frac{1}{d}})n$. In Lemma~\ref{LforRrSD}, we establish a lower bound that shows that this choice of $r$ gives the minimum number of tests for this model. The proof of both lemmas is very similar to the upper and lower bound proofs of the RID model.

\begin{lemma}\label{UforRrSD}\label{RrSDT}
Let M be an $m \times n$ RrSD  matrix with $r=(1-p)(n-d+1)$, where $p=e^{-\frac{1}{d}}$ and  $$m-\sqrt{2em \ln \frac{2}{\delta }}= e d \ln \frac{2n}{\delta}.$$

Then, for any $I$ of size at most $d$, with probability at least $1-\delta $, $M$ is an $(n,I)$-disjunct matrix. In particular,
$$m=ed\ln\frac{2n}{\delta}+\Theta\left(\sqrt{d\ln\frac{n}{\delta}\ln\frac{1}{\delta}}\right),\mbox{\ \ \ \ } c^D_\RrSD(d)\le ed \mbox{\ \ \ \ and\ \ \ \ } c_\RrSD\le e.$$
\end{lemma}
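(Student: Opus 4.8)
\textbf{Proof proposal for Lemma~\ref{UforRrSD}.}

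The plan is to mimic the RID upper-bound proof (Lemma~\ref{UforRID}) almost verbatim, with the only change being a tighter analysis of the ``good row'' probability, which now must account for sampling without replacement within a single row. First I would observe that the rows of $M$ are chosen independently, so it suffices to control two bad events exactly as before: the event $A$ that the number of good rows falls below $m'p^d$ with $m'=ed\ln(2n/\delta)$, and the event $B$ that conditioned on the good rows there is a non-defective column all of whose good-row entries vanish, so $M$ fails to be $(n,I)$-disjunct. The decomposition $\Pr[B]\le\Pr[A]+\Pr[B\mid\overline A]$ is identical, so the whole argument reduces to reproving the two estimates~(\ref{res2}) and~(\ref{res1}) in this model.

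The key computation is the probability $q$ that a fixed row is good, i.e. that none of the $d'=|I|\le d$ defective coordinates is among the $r$ ones. Since a weight-$r$ row is a uniformly random $r$-subset of $[n]$, we have
\begin{equation}
q=\frac{\binom{n-d'}{r}}{\binom{n}{r}}=\prod_{j=0}^{d'-1}\frac{n-r-j}{n-j}.
\label{qRrSD}
\end{equation}
With the choice $r=(1-p)(n-d+1)$ one gets $n-r-j=p(n-d+1)+(d-1-j)\ge p(n-j)$ for every $0\le j\le d'-1\le d-1$, hence each factor in~(\ref{qRrSD}) is at least $p$, so $q\ge p^{d'}\ge p^d=e^{-1}$; this is the one place where the explicit value of $r$ enters, and it is exactly the substitute for the line ``the probability that a row is good is $(e^{-1/d})^d=e^{-1}$'' in Lemma~\ref{UforRID}. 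Since $\E[X]=qm\ge e^{-1}m=\mu_0$, the Chernoff estimate~(\ref{res1}) goes through with $\mu_0$ in place of $\mu$ and the same $m$ from the displayed equation, giving $\Pr[A]\le\delta/2$. For~(\ref{res2}): conditioned on the good-row set $T$ with $|T|\ge m'p^d=m'/e$, a fixed non-defective column $j\notin I$ has, in each good row independently, a $1$ in position $j$ with probability $r/n\le 1-p$ (in fact we only need an upper bound $\le 1-p$ on the zero-probability being at most $p$; here $\Pr[M_{t,j}=0\mid t\in T]=(n-r-d')/(n-d')\le p$ by the same inequality as above restricted to the one coordinate $j$, using $d'\le d$). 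Hence the probability that column $j$ is all-zero on $T$ is at most $p^{|T|}\le p^{m'/e}=e^{-m'/(ed)}=\delta/(2n)$, and a union bound over the at most $n-d$ non-defective columns gives $\Pr[B\mid\overline A]\le\delta/2$. Combining, $\Pr[B]\le\delta$, which is the claim; the asymptotic statements $m=ed\ln(2n/\delta)+\Theta(\sqrt{d\ln(n/\delta)\ln(1/\delta)})$, $c^D_\RrSD(d)\le ed$, and $c_\RrSD\le e$ then follow by solving the quadratic in $\sqrt m$ exactly as in Lemma~\ref{UforRID}.

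The main obstacle is purely bookkeeping: making sure the without-replacement conditioning is handled cleanly so that \emph{both} the per-row good probability $q$ and the conditional per-entry zero probability (given that the row is good) are bounded in the direction we need by the single inequality $n-r-j\ge p(n-j)$, and checking that the ``$d+1$'' and the ``$-j$'' corrections built into $r=(1-p)(n-d+1)$ are exactly what is needed for this to hold for all relevant $j$ and all $|I|\le d$ simultaneously. Once that elementary inequality is in place, everything else is a transcription of the RID proof, so I would keep the write-up short and explicitly say ``the remainder of the argument is identical to the proof of Lemma~\ref{UforRID}.''
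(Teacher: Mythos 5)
Your proposal is essentially the paper's own proof: the same two events $A$ (too few good rows) and $B$ (some non-defective column vanishes on all good rows), the same decomposition $\Pr[B]\le\Pr[A]+\Pr[B\mid\overline A]$, the same Chernoff step for $A$ with $m'=ed\ln(2n/\delta)$, and the same per-column union bound giving $np^{m'/e}=\delta/2$. Your inequality $n-r-j\ge p(n-j)$ is exactly the justification behind the paper's line $\binom{n-d}{r}/\binom{n}{r}=\prod_{j=0}^{d-1}\left(1-\frac{r}{n-j}\right)\ge p^d=e^{-1}$.

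The one slip is the direction of the per-entry bound in event $B$. You assert $\Pr[M_{t,j}=0\mid t\in T]=\frac{n-r-d'}{n-d'}\le p$ ``by the same inequality as above, using $d'\le d$,'' but $n-r-j\ge p(n-j)$ gives the opposite conclusion, $\frac{n-r-d'}{n-d'}\ge p$. What you actually need is $r\ge(1-p)(n-d')$, which with $r=(1-p)(n-d+1)$ holds only when $d'\ge d-1$; for $d'\le d-2$ the conditional zero-probability is $p+\frac{(1-p)(d-1-d')}{n-d'}>p$. For $|I|=d$, the case the paper's proof writes out via $1-\frac{r}{n-d}\le p$ (i.e.\ $r\ge(1-p)(n-d)$), your conclusion stands; for smaller $|I|$ the excess is $O(d/n)$ per good row, which after raising to the power $|T|\le m=O(d\ln(n/\delta))$ costs only a $1+o(1)$ factor and does not affect the asymptotic claims (the paper itself simply treats $|I|=d$). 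So the route and substance match the paper's proof of Lemma~\ref{UforRrSD}; just replace the cited justification for the ``$\le p$'' step by the correct inequality $r\ge(1-p)(n-d)$, with a one-line remark covering $|I|<d$.
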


\begin{proof}  Let $m'=ed\ln(2n/\delta)$. The probability that a row in $M$ is a good row is equal to
$$\frac{{n-d\choose r}}{{n\choose r}}=\left(1-\frac{r}{n}\right)\left(1-\frac{r}{n-1}\right)\cdots \left(1-\frac{r}{n-d+1}\right)\ge p^d=e^{-1}.$$
For a row $i$, let $X_i$ be a random variable that is equal to $1$ if row $i$ is good and $0$ otherwise. Let $X=X_1+\cdots+X_m$ be the number of good rows in $M$. Then, $\E[X_i]\ge e^{-1}$ and $\mu:=\E[X]\ge e^{-1}m$. Let $A$ be the event indicating that the number of good rows in $M$ is less than $m'p^d=m'/e$. Let $B$ be the event that there is a column in $M$ that corresponds to a non-defective item that has entries that are equal to zero in all the good rows, that is, $M$ is not $(n,I)$-disjunct. Since $(m'/m)\mu\ge (m'/m)(e^{-1}m)=m'/e$, by Chernoff bound, Lemma~\ref{Chernoff}, we can say:
\begin{align*}
\Pr[A]&\le \Pr\left[X <\left(1-\left(1-\frac{m'}{m}\right)\right)\mu\right] \le e^{-\frac{\left(1-\frac{m'}{m}\right)^2m}{2e}} = \frac{\delta}{2}.
\end{align*}

On the other hand, we also have,
\begin{align*}
\Pr\left[B| \overline A\right] &\le n\left(\frac{{n-d-1\choose r}}{{n-d\choose r}}\right)^{m'p^d}=n\left(1-\frac{r}{n-d}\right)^{m'p^d}\\
&\le np^{m'p^d} =ne^{-\frac{m'}{ed}} \le ne^{\ln\frac{\delta}{2n}} =\frac{\delta}{2}.
\end{align*}
This gives,
\begin{align*}
\Pr[M\text{\ is not $(n,I)$-disjunct}] &= \Pr[B] = \Pr[B|A]\Pr[A]+\Pr[B|\overline A]\Pr[\overline A]\\ &\le \Pr[A]+\Pr[B|\overline A] \le \delta.\qed
 \end{align*}
\end{proof}

\begin{lemma}\label{LforRrSD} Let $M$ be an $m \times n$ RrSD matrix where $0\le r\le n$ and $d<n^{1/2}/\ln^3n$. If $$m= ed\ln (n/e) -e^{3}\sqrt{3m},$$
then, with probability at least $1/3$, $M$ is not $(n,I)$-disjunct. In particular, to have success probability at least $1/3$, we must have
$$m\ge ed\ln n-\Theta(d+\sqrt{d \ln n}).$$
And therefore, $$c^D_\RrSD(d)\ge ed \mbox{\ \ \ \ and\ \ \ \ } c_\RrSD\ge e.$$
\end{lemma}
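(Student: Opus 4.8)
I would mirror the RID lower bound (Lemma~\ref{LforRID}), replacing the independence of the entries of $M$ by the negative correlation present within a weight-$r$ row. Fix $I$ with $|I|=d$, let $F$ be the event that $M$ is not $(n,I)$-disjunct, let $X$ be the number of good rows of $M$, and write $m'=ed\ln(n/e)$, $w=e^{3}\sqrt{3m}$, so $m=m'-w$. The probability that a given row is good is $q=\binom{n-d}{r}/\binom{n}{r}=\prod_{i=0}^{d-1}(1-r/(n-i))$, so $\E[X]=qm$; I set $c:=-\ln q$, so that $\mu:=\E[X]=e^{-c}m$ exactly as in the RID proof. Setting $q':=\binom{n-d-1}{r}/\binom{n-d}{r}=1-r/(n-d)$, conditioned on $X=x$ a fixed non-defective column is identically $0$ on the good rows with probability $(q')^{x}$, and by negative association of the $n-d$ non-defective coordinates within a weight-$r$ row, $\Pr[\overline F\mid X=x]\le(1-(q')^{x})^{n-d}$ (alternatively this bound follows from a second-moment computation, since $\binom{n-d-2}{r}/\binom{n-d}{r}\le(q')^{2}$). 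Given this, the proof is a transcription of Lemma~\ref{LforRID} once I establish the two inequalities on which everything hinges: (i) $q\ge(q')^{d}$, because each factor $1-r/(n-i)$ ($i<d$) is $\ge 1-r/(n-d)=q'$, hence $q'\le q^{1/d}=e^{-c/d}$; and (ii) as long as $q$ is not too small, $q/(q')^{d}=e^{o(1)}$, so that $\ln(1/q')\le(c+o(1))/d$.

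To handle (ii) I would first dispose of the regime where $q$ is so small that $\E[X]=qm=o(1)$ (say $q\le 1/(d\ln^{2}n)$): then Markov gives $X=0$ with probability $1-o(1)$, and with no good rows $M$ cannot be $(n,I)$-disjunct, so $\Pr[F]\ge 1-o(1)>1/3$ and we are done in that case. Otherwise $q>1/(d\ln^{2}n)$, and together with $q\le(1-r/n)^{d}$ this forces $s:=n-r$ to be close to $n$; a short estimate then yields $q/(q')^{d}\le(1+d/(s-d))^{d}\le e^{d^{2}/(s-d)}$, and the hypothesis $d<n^{1/2}/\ln^{3}n$ makes $d^{2}/(s-d)=o(1)$ uniformly over all admissible $r$. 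This last uniformity — keeping $d^{2}/(s-d)$ small even for $r$ near $n$, where crude bounds blow up — is the one genuinely delicate point of the argument, and it is exactly what the $\ln^{3}n$ in the hypothesis buys.

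With (i) and (ii) in hand the argument follows Lemma~\ref{LforRID}. If $c\le 6$, the Chernoff bound gives $\Pr[X>e^{-c}m']=\Pr[X>(1+w/m)\mu]\le e^{-e^{-c}w^{2}/(3m)}=e^{-e^{6-c}}\le e^{-1}$, while for $x\le e^{-c}m'$, using $ce^{1-c}\le 1$ together with (i)--(ii),
\[
(q')^{x}\ \ge\ (q')^{e^{-c}m'}\ =\ \exp\!\big(-e^{-c}m'\ln(1/q')\big)\ \ge\ \exp\!\big(-(ce^{1-c}+o(1))\ln(n/e)\big)\ \ge\ e^{1-o(1)}/n,
\]
so $\Pr[F\mid X\le e^{-c}m']\ge 1-(1-e^{1-o(1)}/n)^{n-d}\to 1-e^{-e}$ and hence $\Pr[F]\ge(1-e^{-e}-o(1))(1-e^{-1})>1/3$; here the $\ln(n/e)$ (rather than $\ln n$) supplies precisely the slack that keeps $(q')^{x}$ above $1/n$. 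If $c\ge 6$ (so by the previous paragraph $c\le\ln(d\ln^{2}n)$), Markov gives $\Pr[X\ge 2e^{-c}m]\le 1/2$, and since $ce^{1-c}\le 6e^{-5}<0.05$,
\[
(q')^{x}\ \ge\ \exp\!\big(-2e^{-c}m'\ln(1/q')\big)\ \ge\ \exp\!\big(-2(ce^{1-c}+o(1))\ln(n/e)\big)\ \ge\ n^{-0.1}
\]
for $x\le 2e^{-c}m$, so $\Pr[F\mid X\le 2e^{-c}m]\ge 1-(1-n^{-0.1})^{n-d}\to 1$ and $\Pr[F]\ge\tfrac12-o(1)>1/3$. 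Finally, $m=ed\ln(n/e)-e^{3}\sqrt{3m}=ed\ln n-\Theta(d+\sqrt{d\ln n})$, so a success probability of at least $1/3$ forces $m^{D}_{\RrSD}(n,d)\ge ed\ln n-\Theta(d+\sqrt{d\ln n})$, whence $c^{D}_{\RrSD}(d)\ge ed$; combined with Lemma~\ref{UforRrSD} this gives $c^{D}_{\RrSD}(d)=ed$ and $c_{\RrSD}=e$.
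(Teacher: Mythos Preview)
Your argument is correct and follows the same overall route as the paper: condition on the number $X$ of good rows, dispose of the degenerate regime where $\E[X]=o(1)$, then split into $c\le 6$ (Chernoff on $X$) and $c\ge 6$ (Markov on $X$) together with a lower bound on the conditional failure probability. The paper parametrizes via $p=1-r/n$ and $c=d\ln(1/p)$ (so only $\mu\le e^{-c}m$) and bounds $\Pr[\overline F\mid X=x]$ by a Chebyshev/second-moment computation on the number $Y$ of uncovered non-defective columns, whereas you set $c=-\ln q$ (so $\mu=e^{-c}m$ exactly) and use negative association to obtain the product form $(1-(q')^x)^{n-d}$ directly; your estimate $q/(q')^d\le e^{d^2/(s-d)}=e^{o(1)}$ plays precisely the role of the paper's $\hat\mu^2\ge(np^x)^2(1-\lambda)$ with $\lambda=o(d^2\ln^2 n/n)$, and both arguments invoke the hypothesis $d<n^{1/2}/\ln^3 n$ at exactly this step. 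One small note: your parenthetical that the product bound ``alternatively follows from a second-moment computation'' is slightly imprecise --- the second moment yields the Chebyshev-type bound the paper uses, not the product form $(1-(q')^x)^{n-d}$ itself, though either suffices for the conclusion.
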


\begin{proof} We may assume without loss of generality that $I=\{1,2,\ldots,d\}$. Let $p=1-r/n$ and $c=d\ln(1/p)$. Then $p=e^{-c/d}$.
Let $m'=ed\ln (n/e)$ and $w=e^{3}\sqrt{3m}$. Then $m=m'-w$. Let $X$ be a random variable that is equal to the number of good rows in $M$. Then, as in the previous proof,
$$\mu:=\E[X]=\left(1-\frac{r}{n}\right)\left(1-\frac{r}{n-1}\right)\cdots \left(1-\frac{r}{n-d+1}\right)\cdot m \le p^dm=e^{-c}m.$$
If $c\ge \ln(3m)$, then $\mu\le 1/3$. By Markov's bound, Lemma~\ref{Markov}, $\Pr[X\ge 1]\le 1/3$. Therefore, with probability at least $2/3$, no good rows exist and the algorithm fails. Hence, we may assume that $c<\ln(3m)$ or equivalently, $p\ge 1/(3m)^{1/d}$.

Let $A$ be the event that $M$ is not $(n,I)$-disjunct. The probability that $M$ is not $(n,I)$-disjunct given that $X=x$ is the probability that there is a column that corresponds to a non-defective item, that all its entries in the $x$ good rows are zero. Let $Y_i$ be a random variable that is equal to $1$ if the column $i$ is zero in all the $x$ good rows, and is equal to $0$ otherwise. Let $Y=Y_{d+1}+\cdots+Y_n$. The probability that a fixed column $i$ is zero in the entries of the good rows is $\E[Y_i]=(1-r/(n-d))^x$. The probability that two fixed columns get zeros in all the $x$ good rows is $\E[Y_iY_j]=(1-r/(n-d))^x(1-r/(n-d-1))^x$. Therefore,
$$\hat\mu:=\E[Y]=(n-d)\left(1-\frac{r}{n-d}\right)^x\le np^x,$$ and 
$$\hat\mu_2:=\E[Y_iY_j]=(1-r/(n-d))^x(1-r/(n-d-1))^x\leq p^{2x}.$$
Since $x\le m\le ed\ln(n/e)$, $p=1-r/n$ and $1/p\le (3m)^{1/d}$,
\begin{eqnarray*}
\hat\mu^2&=&(n-d)^2\left(1-\frac{r}{n-d}\right)^{2x}=n^2\left(1-\frac{d}{n}\right)^2p^{2x}\left(1-\frac{dr}{(n-d)(n-r)}\right)^{2x}\\
&=&
(np^x)^2\left(1-\frac{d}{n}\right)^2\left(1-\frac{d(1-p)}{(n-d)p}\right)^{2x}\\
&\ge& (np^x)^2 \left(1-\frac{2d}{n}-\frac{4xd}{np}\right)\ge (np^x)^2\left(1-\lambda\right),
\end{eqnarray*} where $\lambda=o\left(({d^2\ln^2n})/{n}\right)=o(1).$
By Chebychev inequality, Lemma~\ref{Chebychev}, we can say,
\begin{eqnarray*}\Pr\left[\overline{A}|X=x\right]&\le&   \Pr[Y=0|X=x]\le \Pr[|Y-\hat\mu|\ge \hat\mu|X=x]\\
&\le&  \frac{\hat\mu +n(n-1)\hat\mu_2-\hat\mu^2}{\hat\mu^2}\le \frac{np^x+(np^x)^2-(np^x)^2 \left(1-\lambda\right)}{(np^x)^2 (1-\lambda)}\\
&\le& \frac{1+\lambda np^x }{np^x (1-\lambda)}=\frac{1 }{np^x (1-\lambda)}+o(1).
\end{eqnarray*}

We distinguish between two cases: $c\ge 6$ and $c\le 6$.

Case I. $c\le 6$. Since $ce^{1-c}\le 1$, for $n\ge 3$, $n^{1-ce^{1-c}}e^{ce^{1-c}}\ge e$. Since $d<n/\ln^3n$, by Chernoff bound, Lemma~\ref{Chernoff}, and for large enough $n$, we conclude,
\begin{align*}
\Pr[A] &\ge \Pr[A\ \wedge\ X\le e^{-c} m'] =\Pr[A\ |\ X\le e^{-c} m']\cdot \Pr[X\le e^{-c} m'] \\
&\ge \Pr[A\ |\ X= e^{-c} m']\cdot \Pr[X\le e^{-c} m'] \ \ \ \mbox{by Lemma~\ref{mono}}\\
&= \left(1-\frac{1}{n^{1-ce^{1-c}}e^{ce^{1-c}}(1-\lambda)}-o(1)\right) \left(1-\Pr\left[X\ge e^{-c} m'\right]\right) \\
&\ge  \left(1-\frac{1}{e(1-\lambda)}-o(1)\right) \left(1-\Pr\left[X\ge e^{-c} m'\right]\right) \\
&= \left(1-\frac{1}{e(1-\lambda)}-o(1)\right) \left(1-\Pr\left[X\ge e^{-c} m\left(1+\frac{w}{m}\right) \right]\right) \\
&\ge  \left(1-\frac{1}{e(1-\lambda)}-o(1)\right) \left(1-e^{-\frac{e^{-c} w^2}{3m} } \right)\\
&\ge  \left(1-\frac{1}{e(1-\lambda)}-o(1)\right) \left(1-e^{-1}\right)  > \frac{1}{3}.
\end{align*}

Case II. $c\ge 6$. By Markov bound, Lemma~\ref{Markov}, and since $m\le ed\ln (n/e)$ and $2ce^{1-c}<0.1$ for $c\ge 6$, we have
\begin{align*}
\Pr[A] &\ge \Pr[A\ \wedge\ X\le 2e^{-c} m] =\Pr[A\ |\ X\le 2e^{-c} m]\cdot (1-\Pr[X\ge 2e^{-c} m]) \\
&\ge\Pr[A\ |\ X= 2e^{-c} m]\cdot (1-\Pr[X\ge 2e^{-c} m]) \\
&\ge \left(1-\frac{1}{np^{2e^{-c}m}(1-\lambda)}-o(1)\right)\cdot \frac{1}{2} \\
&\ge \left(1-\frac{1}{n^{0.9}(1-\lambda)}-o(1)\right)\cdot \frac{1}{2}\ge \frac{1}{3}.\qed
\end{align*}
\end{proof}

\section{The RsSD model}
In this section we study the Random $s$-Set Design (RsSD algorithms). We recall that in this model, the columns of $M$ are chosen randomly and independently from the set of all vectors $\{0,1\}^m$ of weight $s$.

We prove,
\begin{theorem}
We have,
$$c^D_\RsSD(d)= \frac{1}{(\ln 2)\max_{0<x\le d}\left(H(\alpha)-\beta H\left(\frac{\alpha}{\beta}\right)\right)}   \mbox{\ \ \ \ and\ \ \ \ } c_\RsSD=\frac{1}{(\ln 2)^2}$$
where $\alpha={x}/{d}$ and $\beta=1-(1-\alpha)^d$.
\end{theorem}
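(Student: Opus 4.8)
The plan is to bound $m^D_\RsSD(n,d)$ from above and below, in the same spirit as the RID and RrSD proofs, by reducing everything to two quantities. Fix $I=[d]$, put $\alpha=s/m$, $\beta=1-(1-\alpha)^d$, and let $W=\bigcup_{i\in I}{\rm supp}(M^{(i)})$, so the good rows are those outside $W$. Since the $d$ defective columns are independent and each row lies in a given column's support with probability $s/m$, we get $\E|W|=m\beta$; and since the probability that two fixed rows are both missed by all $d$ columns is at most $\bigl(\tfrac{m-s}{m}\bigr)^{2d}=(1-\beta)^2$, the indicators $\mathbf{1}[t\in W]$ have non-positive pairwise covariances, so $\Var|W|\le m\beta(1-\beta)\le m/4$. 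Crucially, because the columns of an RsSD matrix are independent, conditioned on $W$ the $n-d$ events ``non-defective column $j$ has its support inside $W$'' are mutually independent, each of probability $q(|W|):=\binom{|W|}{s}/\binom{m}{s}$, and $M$ is $(n,I)$-disjunct iff none of them occurs. I will use that $q$ is non-decreasing, that $q(w-1)/q(w)=\tfrac{w-s}{w}\le 1-\alpha$, and that, by the standard estimates $\binom{\beta m}{\alpha m}\le 2^{\beta m H(\alpha/\beta)}$ (legitimate since $\alpha\le\beta$) and $\binom{m}{\alpha m}\ge 2^{mH(\alpha)}/(m+1)$, the quantities $q(w)$ and $2^{-mG(\alpha)}$ agree up to a factor $n^{o(1)}$ whenever $w=(1+o(1))m\beta$, where $G(\alpha):=H(\alpha)-\beta H(\alpha/\beta)$.

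\medskip
\noindent\textbf{Upper bound.}
Let $\alpha^*$ maximize $G$ over $(0,1]$ (for $d\ge 2$ it lies strictly inside, so $\beta^*<1$), take $s$ with $s/m\to\alpha^*$, and set $m=(1+\varepsilon)\ln n/((\ln 2)G(\alpha^*))$. With $w^*=\lceil m\beta\rceil+C\sqrt m$ for a large constant $C=C(\delta)$, Chebyshev's inequality gives $\Pr[|W|>w^*]\le(m/4)/(C^2m)\le\delta/2$, while on $\{|W|\le w^*\}$ the failure probability is at most $(n-d)q(w^*)$. Since $q(w^*)/q(m\beta)\le\bigl(1+C/((\beta-\alpha)\sqrt m)\bigr)^{s}=e^{O(\sqrt m)}=n^{o(1)}$, the estimate on $q$ gives $(n-d)q(w^*)\le n^{1+o(1)}2^{-mG(\alpha^*)}=n^{-\varepsilon+o(1)}\le\delta/2$ for this $m$. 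Hence $M$ is $(n,I)$-disjunct with probability $\ge1-\delta$, so $c^D_\RsSD(d)\le 1/\bigl((\ln 2)\max_{0<x\le d}G(x/d)\bigr)$.

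\medskip
\noindent\textbf{Lower bound.}
Here I must show that for \emph{every} column weight $s$, if $m\le(1-\varepsilon)\ln n/((\ln 2)\max_\alpha G(\alpha))$ then $M$ is not $(n,I)$-disjunct with probability $\to1$. As $G(s/m)\le\max_\alpha G(\alpha)$, it suffices to produce an event of probability $1-o(1)$ on which $(n-d)q(|W|)\to\infty$, for then the conditional failure probability is at least $1-(1-q(|W|))^{n-d}\to1$. If $s$ grows with $n$, Chebyshev gives $|W|\ge m\beta-t$ with probability $1-o(1)$ for a suitable $t=o(m\beta)$ with $m\beta-t\ge s$, and there $q(|W|)\ge n^{-o(1)}2^{-mG(s/m)}\ge n^{-(1-\varepsilon)-o(1)}$. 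If $s$ stays bounded, then $|W|\le ds$ always, the probability that the $d$ supports are pairwise disjoint (so $|W|=ds$) is $\prod_{k=1}^{d-1}\binom{m-ks}{s}/\binom ms\ge e^{-O(s^2d^2/m)}=1-o(1)$, and $(n-d)\binom{ds}{s}/\binom ms=\Theta(n/(\ln n)^{s})\to\infty$. Either way $c^D_\RsSD(d)\ge 1/((\ln 2)\max_\alpha G(\alpha))$; with the upper bound this proves the formula for $c^D_\RsSD(d)$, and by the identity $c_\RsSD=c^D_\RsSD$ from the introduction it remains only to compute $\lim_{d\to\infty} d\max_\alpha G(\alpha)$.

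\medskip
\noindent\textbf{The limit, and the main obstacle.}
Writing $\alpha=\mu/d$, as $d\to\infty$ one has $\beta\to1-e^{-\mu}$ and, from $H(x)=x\log_2(e/x)+O(x^2)$, $d\,G(\mu/d)\to g(\mu):=-\mu\log_2(1-e^{-\mu})$; since moreover $d\,G(\alpha)\to0$ for $\alpha$ bounded away from $0$ (there $G(\alpha)=-(1-\alpha)^d\log_2(1-\alpha)(1+o(1))$), one gets $d\max_\alpha G(\alpha)\to\max_{\mu>0}g(\mu)$. The equation $g'(\mu)=0$ becomes, after the substitution $y=e^{-\mu}$, $(1-y)\ln\tfrac1{1-y}=y\ln\tfrac1y$, whose only root in $(0,1)$ is $y=\tfrac12$, i.e.\ $\mu=\ln 2$; as $g(0^+)=g(+\infty)=0$ this is the maximum, with $g(\ln 2)=\ln 2$, so $c_\RsSD=1/((\ln 2)\cdot\ln 2)=1/\ln^2 2$. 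The genuinely delicate point is the upper bound: the crude union bound $\Pr[\text{fail}]\le(n-d)\E_W q(|W|)$ is \emph{not} tight, since $q$ grows geometrically in $|W|$ while $|W|$ fluctuates on the scale $\sqrt m=\Theta(\sqrt{\ln n})$, making $\E_W q(|W|)$ exceed $q(\E|W|)$ by a factor $n^{\Omega(1)}$; the fix is precisely to confine $|W|$ to a window of width $\Theta(\sqrt m)$, across which $q$ changes only by $n^{o(1)}$, at the cost of a constant failure probability, together with a Stirling estimate of $q$ accurate to within $n^{o(1)}$.
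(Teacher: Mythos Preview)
Your argument is correct and follows essentially the same route as the paper: concentrate the number of good rows (you track $|W|=m-X$ rather than the paper's $X$, a cosmetic difference), then estimate the conditional failure probability $\binom{|W|}{s}/\binom{m}{s}$ via Stirling, and finish with the same $\alpha=x/d$ asymptotic to get $g(\mu)=-\mu\log_2(1-e^{-\mu})$ maximized at $\mu=\ln 2$.

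A couple of points where your version is actually a little tighter than the paper's. For the lower bound the paper only applies Markov's inequality to~$X$, obtaining a constant failure probability $\ge 3\lambda/16$; your Chebyshev argument gives failure probability $\to 1$, which is cleaner. More importantly, the paper's Lemma for the lower bound fixes $\alpha=x/d\in(0,1]$ and lets $n\to\infty$, so it never explicitly rules out column weights $s=s_n$ with $s_n/m_n\to 0$; your separate treatment of bounded $s$ (disjoint supports, $q(ds)=\Theta(m^{-s})$) closes that gap. Your final remark that the naive bound $(n-d)\,\E_W q(|W|)$ overshoots $q(\E|W|)$ by a factor $n^{\Omega(1)}$ is exactly the reason both proofs must first localize $|W|$ before applying the union bound.
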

An upper bound for this model is established in Lemma \ref{RSSD_BND} using $s=\ln 2\cdot m/d$.  Moreover, in Lemma~\ref{kkkk}, a lower bound is developed indicating that this choice of $s$ gives the minimum number of tests for this model.

 \begin{lemma} \label{RSSD_BND}
Let $M$ be an $m \times n$ RsSD matrix with $\alpha=\frac{s}{m}=\frac{x}{d}$ and $\beta=1-(1-\alpha)^d$ where $0<x\le d$ is any real number.
Let
$$m'=\frac{\ln n+\ln\frac{3}{\delta}+\frac{1}{2}\ln\left(\frac{\beta(1-\alpha)}{\beta-\alpha}\right)}{(\ln 2)\left(H(\alpha)-\beta H\left(\frac{\alpha}{\beta}\right)\right)},$$
and $ \lambda=\frac{2}{\sqrt{\delta (1-\alpha)^d m}}<1 $. Then, for
$
 m=(1+\lambda)m',
$
with probability at least $1-\delta$, $M$ is an $(n,I)$-disjunct matrix.
In particular
$$c^D_\RsSD(d)\le \frac{1}{(\ln 2)\max_{0<x\le d}\left(H(\alpha)-\beta H\left(\frac{\alpha}{\beta}\right)\right)}   \mbox{\ \ \ \ and\ \ \ \ } c_\RsSD\le \frac{1}{(\ln 2)^2}.$$
\end{lemma}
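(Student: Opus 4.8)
The plan is to mirror the RID upper bound (Lemma~\ref{UforRID}): for a fixed defective set $I$, isolate the \emph{good} rows (those vanishing on the $d$ columns of $I$), show there are enough of them with high probability, and then union‑bound over the non‑defective columns. Fix $I$ with $|I|=d$; since each column of $M$ is an independent uniform weight‑$s$ vector, $\Pr[M_{i,j}=0]=1-s/m=1-\alpha$ for every row $i$ and column $j$, so by independence across the $d$ columns of $I$ the probability that a fixed row is good is exactly $(1-\alpha)^d=1-\beta$. Let $X$ count the good rows, $\mu:=\E[X]=m(1-\alpha)^d$.

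First I would handle the concentration of $X$. The indicator events ``row $i$ is good'' are pairwise negatively correlated: for a single weight‑$s$ column, $\Pr[M_{i,j}=M_{i',j}=0]=\frac{(m-s)(m-s-1)}{m(m-1)}\le(1-\alpha)^2$, so $\Pr[\text{rows }i,i'\text{ both good}]\le(1-\alpha)^{2d}=(1-\beta)^2$ and hence $\Var(X)\le\E[X]=\mu$. Chebyshev's inequality (Lemma~\ref{Chebychev}) then gives $\Pr[X\le(1-\lambda)\mu]\le\frac{1}{\lambda^2\mu}$, and with $m=(1+\lambda)m'$ and $\lambda=\frac{2}{\sqrt{\delta(1-\alpha)^d m}}$ this probability is at most $\delta/4$.

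Next, condition on $X=g$ with $g\ge(1-\lambda)\mu$. The set of good rows depends only on the columns of $I$, so every column $j\notin I$ is still a uniform weight‑$s$ vector, and the probability that it vanishes on all $g$ good rows is $\binom{m-g}{s}/\binom{m}{s}$; a union bound over the $n-d$ non‑defective columns yields
\[
\Pr[\,M\text{ not }(n,I)\text{-disjunct}\mid X=g\,]\le(n-d)\,\frac{\binom{m-g}{s}}{\binom{m}{s}}.
\]
Here $m-g\le m\beta(1+O(\lambda))$ and $s=\alpha m$, and the sharp Stirling estimate $\binom{N}{k}=(1+o(1))\sqrt{N/(2\pi k(N-k))}\,2^{NH(k/N)}$ gives
\[
\frac{\binom{m-g}{s}}{\binom{m}{s}}=(1+o(1))\sqrt{\frac{\beta(1-\alpha)}{\beta-\alpha}}\;2^{-m\bigl(H(\alpha)-\beta H(\alpha/\beta)\bigr)},
\]
the $O(\lambda)$ perturbation of $m-g$ costing only a $2^{o(m)}$ factor since $\lambda=o(1)$. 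Plugging in $m\ge m'$ with $m'$ as defined makes the right‑hand side at most $(1+o(1))\delta/3$, so combining with the $\delta/4$ bound on $\Pr[X\le(1-\lambda)\mu]$ gives total failure probability at most $\delta$, proving the disjunctness claim. Finally, dividing $m'$ by $\ln n$ and sending $n\to\infty$ kills the constant and Stirling terms and sends $\lambda\to0$, so $c^D_\RsSD(d)\le 1/\bigl((\ln2)(H(\alpha)-\beta H(\alpha/\beta))\bigr)$ for each fixed $x\in(0,d]$; maximizing over $x$ gives the stated bound. Taking $x=\ln2$ (so $\alpha=\ln2/d$, $(1-\alpha)^d\to1/2$, $\beta\to1/2$), a Taylor expansion gives $H(\alpha)-\beta H(\alpha/\beta)=(1+o(1))\alpha\log_2(1/\beta)=(1+o(1))(\ln2)/d$, whence $c^D_\RsSD(d)\le(1+o(1))d/(\ln2)^2$ and $c_\RsSD\le 1/(\ln2)^2$.

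The main obstacle is the third step: carrying the second‑order $\sqrt{\cdot}$ factor of Stirling's formula precisely enough to land the $\tfrac12\ln(\beta(1-\alpha)/(\beta-\alpha))$ correction in $m'$, and verifying that replacing $g$ by its worst‑case value $(1-\lambda)\mu$ — so that $m-g=m\beta(1+O(\lambda))$ rather than exactly $m\beta$ — is harmless despite the exponential sensitivity of the binomial ratio. One also needs $\alpha$ bounded away from $1$ and $\beta-\alpha$ bounded away from $0$ (and $\lambda<1$) for these estimates; the hypotheses secure this in the non‑degenerate range, with the boundary cases $\alpha\to1$ and $d=1$ (where $\beta=\alpha$) handled separately or observed to be non‑optimal.
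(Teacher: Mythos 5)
Your proposal follows essentially the same route as the paper: count the good rows, bound $\Var[X]\le\E[X]$ via the pairwise correlation of the weight-$s$ columns (this is exactly the paper's Lemma~\ref{varv}), apply Chebyshev, union-bound over the non-defective columns with the hypergeometric ratio $\binom{m-g}{s}/\binom{m}{s}$, estimate it by Stirling to get the $\sqrt{\beta(1-\alpha)/(\beta-\alpha)}\,2^{-m(H(\alpha)-\beta H(\alpha/\beta))}$ form, and finish with the $x=\ln 2$ limit. The only place you diverge is how the shortfall of good rows below the mean is absorbed, and that is where your argument has a genuine soft spot. You condition on $g\ge(1-\lambda)\mu$, so $m-g\le\beta m+\lambda(1-\alpha)^d m$, i.e.\ the bad-row fraction is $\beta+\Theta(\lambda)$ with $\lambda=\Theta(1/\sqrt{m})$; since the exponent $m\bigl(\beta''H(\alpha/\beta'')-H(\alpha)\bigr)$ has derivative $\log\frac{\beta}{\beta-\alpha}$ in $\beta''$, this perturbation multiplies the per-column probability by $2^{\Theta(\lambda m)}=2^{\Theta(\sqrt{m})}$. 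Calling this ``only a $2^{o(m)}$ factor'' and hence ``harmless'' is not valid reasoning: the target is a constant $\delta/3$ after multiplying by $n$, and $2^{\Theta(\sqrt m)}=e^{\Theta(\sqrt{\ln n})}\to\infty$; moreover it is of exactly the same order as the extra decay $2^{-\Theta(\lambda m')\cdot(H(\alpha)-\beta H(\alpha/\beta))}$ bought by taking $m=(1+\lambda)m'$ rather than $m'$, so whether it is compensated depends on the precise constants (indeed, near $x=\ln 2$ the per-missing-good-row cost $\ln\frac{\beta}{\beta-\alpha}\approx\alpha/\beta$ exceeds the per-extra-row gain $\approx\alpha\ln 2$, so the bookkeeping is genuinely tight). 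So the step ``plugging in $m\ge m'$ makes the right-hand side at most $(1+o(1))\delta/3$'' does not follow as written; to your credit, you flag exactly this as the main obstacle.

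The paper sidesteps this differently: it chooses the deviation $a$ so that $\E[X]-a\ge(1-\alpha)^d m'$ and then replaces the conditional failure probability of the $m$-row design by that of an $m'$-row design with exactly $\beta m'$ bad rows, invoking a monotonicity assertion (``failure over a design of size $m'$ is greater than over a larger design''), so its Stirling evaluation is applied at the clean point $(\beta m',\alpha m',m')$ with no perturbation. This keeps the arithmetic exact at the cost of an informal comparison step. Either way, the damage is only in the second-order term: the discrepancy is $e^{O(\sqrt m)}=n^{o(1)}$, so it can be absorbed by an $o(\ln n)$ increase of $m$ (equivalently a larger constant in $\lambda$), and your conclusions $c^D_\RsSD(d)\le 1/\bigl((\ln 2)\max_{0<x\le d}(H(\alpha)-\beta H(\alpha/\beta))\bigr)$ and $c_\RsSD\le 1/(\ln 2)^2$ are unaffected; but the precise statement ``$m=(1+\lambda)m'$ with this $\lambda$ gives failure probability $\le\delta$'' needs the careful tracking of the $\Theta(\sqrt m)$ exponent shift that your write-up defers.
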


\begin{proof} Let $X_i$ be a random variable that is equal to $1$ if the $i$th row in $M$ is a good row and $0$ otherwise.
Let $X=\sum_{i=1}^{m} X_i $ be the number of good rows in $M$. Let $a=\sqrt{\frac{2(1-\alpha)^d m}{\delta}}$. Since $\lambda<1$, we have
\begin{align*}
\E[X]-a&=(1-\alpha)^dm-a=(1-\alpha)^d(1+\lambda)m'-\sqrt{\frac{2(1-\alpha)^d m}{\delta}}\\
&=(1-\alpha)^d\left(1+\frac{2}{\sqrt{\delta (1-\alpha)^d m'}}\right)m'-\sqrt{\frac{2(1-\alpha)^d (1+\lambda)m'}{\delta}}\\
&=(1-\alpha)^dm'+2\sqrt{\frac{(1-\alpha)^d m'}{\delta}}-\sqrt{\frac{2(1-\alpha)^d (1+\lambda)m'}{\delta}}\ge(1-\alpha)^dm'.
\end{align*}

Let $A$ be the event that $M$ is not an $(n,I)$-disjunct matrix. By Chebychev inequality, Lemma~\ref{Chebychev}, Lemma~\ref{varv}, Lemma \ref{mono} and Lemma \ref{STRLNG} we have,
\begin{align}
\Pr[A]&=\Pr[A|X\le \E[X]-a]\cdot \Pr[X\le \E[X]-a]+\\ &
\ \ \ \ \ \ \ \ \ \Pr[A|X> \E[X]-a]\cdot\Pr[X> \E[X]-a] \nonumber\\
&\le  \Pr[X\le \E[X]-a]+\Pr[A|X> \E[X]-a] \nonumber \\
&\le  \Pr[|X-\E[X]|\ge a]+\Pr[A|X=(1-\alpha)^dm'] \nonumber \\
&\le  \Pr[|X-\E[X]|\ge a]+n \frac{ {\beta m' \choose \alpha m'} }{{m' \choose \alpha m'}} \label{eq1}  \\
&\le \frac{\Var[X]}{a^2}+n\sqrt{\frac{\beta(1-\alpha)}{\beta-\alpha}} 2^{(\beta H(\frac{\alpha}{\beta})-H(\alpha))m'}  (1+o(1))\nonumber \\
&\le \frac{(1-\alpha)^d m}{a^2}+n\sqrt{\frac{\beta(1- \alpha)}{\beta-\alpha}} 2^{-\left(\log n+\log\frac{3}{\delta}+\frac{1}{2}\log\left(\frac{\beta(1-\alpha)}{\beta-\alpha}\right)\right)}   \nonumber\\
&\le\frac{\delta}{2}+\frac{\delta}{3}(1+o(1))\le \delta.\nonumber
 \end{align}
Inequality~(\ref{eq1}) is valid since the probability of failure over a design of size $m'$ is greater than the failure probability over a larger design of size $ m > m'$.  To prove that $c_\RsSD\le \frac{1}{(\ln 2)^2}$, it is easy to verify that
\begin{eqnarray*}
c_\RsSD\le \lim_{d\to\infty} \frac{1}{d(\ln 2)\left(H(\alpha)-\beta H\left(\frac{\alpha}{\beta}\right)\right)}&=& \frac{1}{(\ln2)x\log\frac{1}{1-e^{-x}}}
\end{eqnarray*} and then for $x=\ln 2$ the result follows.
\qed
\end{proof}

The lower bound for this model is given in the following lemma.
\begin{lemma}\label{kkkk} Let $M$ be an $m \times n$  RsSD   matrix with $ \alpha=s/m=\frac{x}{d} $ for any real number $0<x\le d$, and let $\beta$=$1-(1+\lambda)(1-\alpha)^d$, where $\lambda<1/10$ is any small constant. If
$$m=\frac{\ln n+\ln2+\frac{1}{2}\ln\left(\frac{\beta(1-\alpha)}{\beta-\alpha}\right)}{\left(H(\alpha)-\beta H\left(\frac{\alpha}{\beta}\right)\right)\ln 2},$$
then, with probability at least $3\lambda/16$, $M$ is not $(n,I)$-disjunct.
In particular
$$c^D_\RsSD(d)\ge \frac{1}{(\ln 2)\max_{0<x\le d}\left(H(\alpha)-\beta H\left(\frac{\alpha}{\beta}\right)\right)}   \mbox{\ \ \ \ and\ \ \ \ } c_\RsSD\ge \frac{1}{(\ln 2)^2}.$$
\end{lemma}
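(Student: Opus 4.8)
The plan is to adapt the two-stage argument used for the RID and RrSD lower bounds (Lemmas~\ref{LforRID} and~\ref{LforRrSD}): first control the number of \emph{good} rows, then analyse the number of non-defective columns that survive (are uneliminated). Call a row $i$ of $M$ good if $M_{i,j}=0$ for every $j\in I$, and recall that $M$ fails to be $(n,I)$-disjunct exactly when some non-defective column is $0$ on every good row. Let $X$ be the number of good rows, and, conditionally on $X$, let $Y$ be the number of non-defective columns that are $0$ on every good row, so that the failure event is $\{Y\ge1\}$. The choice $\beta=1-(1+\lambda)(1-\alpha)^d$ is calibrated so that, on the event $X\le(1+\lambda)\E[X]=(1-\beta)m$ — a value $X$ exceeds with probability $o(1)$ — the conditional mean $\E[Y\mid X]$ lands at a positive constant, which is exactly the regime in which $\Pr[Y\ge1\mid X]$ is bounded below by a constant.

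First I would take $I=[d]$ without loss of generality and record $\E[X]=(1-\alpha)^d m$ together with the variance bound $\Var[X]\le(1-\alpha)^d m$ coming from the negative correlation of the row indicators (Lemma~\ref{varv}). Chebyshev's inequality (Lemma~\ref{Chebychev}) then yields
$$\Pr\big[X\le(1-\beta)m\big]=\Pr\big[X\le(1+\lambda)\E[X]\big]\ \ge\ 1-\frac{1}{\lambda^2\,\E[X]}=1-o(1),$$
since for the values of $\alpha$ relevant to the maximum one has $\E[X]=\Theta(\ln n)\to\infty$ (in the complementary degenerate range $\E[X]=O(1)$ one argues instead, directly, that with constant probability $X=0$, so $M$ is not disjunct). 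By monotonicity of the failure probability in the number of good rows (Lemma~\ref{mono}) it then suffices to lower bound $\Pr[\,M\text{ not }(n,I)\text{-disjunct}\mid X=(1-\beta)m\,]$.

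Conditionally on $X=x$, since the columns of $M$ are i.i.d.\ uniform weight-$s$ vectors, $Y$ is $\mathrm{Binomial}(n-d,q_x)$ with $q_x=\binom{m-x}{s}/\binom{m}{s}$; for $x=(1-\beta)m$ and $s=\alpha m$ one has $q_{(1-\beta)m}=\binom{\beta m}{\alpha m}/\binom{m}{\alpha m}$. The Stirling estimate (Lemma~\ref{STRLNG}) gives $q_{(1-\beta)m}=(1+o(1))\sqrt{\beta(1-\alpha)/(\beta-\alpha)}\;2^{-(H(\alpha)-\beta H(\alpha/\beta))m}$, and the prescribed value of $m$ was chosen precisely so that $2^{(H(\alpha)-\beta H(\alpha/\beta))m}=2n\sqrt{\beta(1-\alpha)/(\beta-\alpha)}$, whence $q_{(1-\beta)m}=(1+o(1))/(2n)$ and $\E[Y\mid X=(1-\beta)m]=(n-d)\,q_{(1-\beta)m}=\tfrac12(1+o(1))$. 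Consequently
$$\Pr\big[Y\ge1\mid X=(1-\beta)m\big]=1-\big(1-q_{(1-\beta)m}\big)^{n-d}\ \ge\ 1-e^{-(n-d)q_{(1-\beta)m}}\ \ge\ 1-e^{-1/2}-o(1),$$
and multiplying by the bound on $\Pr[X\le(1-\beta)m]$ from the previous step shows that $M$ is not $(n,I)$-disjunct with probability at least $(1-e^{-1/2})(1-o(1))$, which for every $\lambda<1/10$ and all large $n$ comfortably exceeds $3\lambda/16$; a more careful accounting of the $o(1)$ terms and of the slack built in through $\lambda$ yields the stated bound $3\lambda/16$ uniformly. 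For the ``in particular'' clause, note the argument applies to every choice of $\alpha=x/d$ the algorithm may make, so any RsSD design with success probability $1-\delta$ ($\delta<3\lambda/16$) needs more than $(\ln n+O_{\lambda,d}(1))/\big((\ln2)(H(\alpha)-\beta H(\alpha/\beta))\big)$ tests; optimizing over $\alpha$ gives $c^D_\RsSD(d)\ge\big[(\ln2)\max_{0<x\le d}(H(\alpha)-\beta H(\alpha/\beta))\big]^{-1}$ for each fixed small $\lambda$, and letting $\lambda\to0$ (so that $\beta\to1-(1-\alpha)^d$, uniformly in $\alpha$) recovers the claimed bound, matching Lemma~\ref{RSSD_BND}; the bound $c_\RsSD\ge1/(\ln2)^2$ then follows since, exactly as in the upper bound, $d\,(H(\alpha)-\beta H(\alpha/\beta))\to x\log\frac{1}{1-e^{-x}}$ as $d\to\infty$, which is maximized at $x=\ln2$ with value $\ln2$.

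The hard part will be the exact evaluation of $\E[Y\mid X=(1-\beta)m]$: one has to carry the $\sqrt{\beta(1-\alpha)/(\beta-\alpha)}$ prefactor of the Stirling estimate all the way through — it is cancelled exactly by the $\tfrac12\ln(\beta(1-\alpha)/(\beta-\alpha))$ term built into $m$ — and one has to verify that the $(1+\lambda)$ calibration of $\beta$ is precisely what pins this conditional mean to a constant; without it, conditioning on $X$ even slightly above $\E[X]$ would drive $\E[Y\mid X]$ to $0$ and the whole argument would collapse. The degenerate small-$\E[X]$ regime and the final extraction of the explicit constant $3\lambda/16$ from the chain of inequalities are routine bookkeeping.
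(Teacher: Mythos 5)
Your proposal is correct and follows essentially the same route as the paper's proof: condition on the number of good rows $X$, reduce to the case $X=(1-\beta)m$ via the monotonicity lemma (Lemma~\ref{mono}), and use the Stirling estimate (Lemma~\ref{STRLNG}) together with the calibration of $m$ and $\beta$ so that the expected number of surviving non-defective columns is about $\tfrac12$, giving a constant failure probability. The only real difference is in controlling $\Pr[X\ge(1+\lambda)\E[X]]$: you use Chebyshev with the variance bound of Lemma~\ref{varv}, obtaining $1-o(1)$ and hence a $\lambda$-free failure constant about $1-e^{-1/2}$, whereas the paper uses Markov's inequality, which yields only $\lambda/(1+\lambda)$ and is the source of its $3\lambda/16$ constant; both arguments are valid and yours in fact gives a slightly stronger conclusion.
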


\begin{proof}
Let $X$ be the number of the good rows in $M$ and, let $A$ be the event that $M$ is not $(n,I)$-disjunct.
Then, by Markov inequality, Lemma~\ref{Markov}, we have,
\begin{eqnarray*}
n\frac{ {\beta m \choose \alpha m} }{{m \choose \alpha m}}=n\sqrt{\frac{\beta(1-\alpha)}{\beta-\alpha}} 2^{(\beta H(\frac{\alpha}{\beta})-H(\alpha))m}  (1+o(1))=\frac{1}{2}(1+o(1)),
\end{eqnarray*} and,
\begin{align*}
\Pr[A] &\ge \Pr[A\wedge X<(1+\lambda)\E[X]] \\
&= \Pr[A | X<(1+\lambda)(1-\alpha)^dm]\Pr[X<(1+\lambda)\E[X]] \\
&= \Pr[A | X<(1-\beta)m]\Pr[X<(1+\lambda)\E[X]] \\
&\ge \Pr[A | X=(1-\beta)m]\Pr[X<(1+\lambda)\E[X]] \\
&= \left(1-\left(1-\frac{ {\beta m \choose \alpha m} }{{m \choose \alpha m}}\right)^n\right) (1-\Pr[X\ge(1+\lambda)\E[X]]) \\
&\ge n\frac{ {\beta m \choose \alpha m} }{{m \choose \alpha m}}\left(1-\frac{n}{2}\frac{ {\beta m \choose \alpha m} }{{m \choose \alpha m}}\right)\left(1-\frac{1}{1+\lambda}\right)\\
&= \frac{3}{8}\left(1-\frac{1}{1+\lambda}\right)(1+o(1))\ge \frac{3}{16}\lambda.
\end{align*}
For any small constant $\lambda$,
\begin{eqnarray*}
\lim_{d\to\infty} \frac{1}{d(\ln 2)\left(H(\alpha)-\beta H\left(\frac{\alpha}{\beta}\right)\right)}&=& \frac{1}{(\ln2)x\log\frac{1}{1-(1+\lambda)e^{-x}}}.
\end{eqnarray*}
The value of $x\log({1}/({1-e^{-x}}))$ is minimal when $x=\ln 2$ (define $y=1-e^{-x}$ and show that the optimal point is when $y=1/2$), and then, $c_\RsSD\ge \frac{1}{(\ln 2)^2}$.
\qed

\end{proof}

\section{Random Uniform Transversal Design Model}
A design matrix $M$ is called \emph{transversal} if the rows of $M$ can be divided into disjoint families, where each family is a partition of all items  \cite{DH06\ignore{, DHWZ06, CD08}}. A well known method for constructing transversal designs is using a \emph{$q$-ary matrix}. A $q$-ary matrix $M'$ is a matrix over the alphabet $\Sigma=\{1,\ldots,q\}$, for some fixed $2\leq q\in[n]$. Transforming a $q$-ary matrix $M'$ to a binary matrix $M$ is as follows.  Each row  $r$ in $M'$ is translated to $q$ binary rows in $M$. For each $\sigma \in \Sigma$, replace each entry that is equal to $\sigma$ by $1$ and the others convert to $0$. Therefore, if the matrix $M'$ is of dimension $m'\times n$, then $M$ is an $m\times n$ binary matrix where $m = qm'$.
We say that a $q-$ary matrix $M'$ (UTDq algorithms) is a \emph{uniform random} matrix if its entries are chosen randomly and  independently to be any symbol of the alphabet with probability $1/q$. We say that $M$ is a \emph{$q$- transversal random matrix}, if there is a uniform random  $q-$ary matrix $M'$ such that $M$ is derived from $M'$ according to the previous procedure.

For ease of the analysis, we assume that $d\le q$. In the full paper, we show that all the results are also true for any $q\ge 1$. For $d\le q$, we define $$P_{q,d}=\left(\prod_{i=1}^d\left(\frac{i}{q}\right)^{R_{q,d,i}}\right)^{1/q^d},$$ where $R_{q,d,i}$ is the number of strings in $[q]^d$ that contains exactly $i$ symbols. It is easy to see that
$$R_{q,d,i}={q\choose i}N_{d,i},$$ where $N_{d,i}$ is the number of strings of length $d$ over the alphabet $\Sigma_i:=\{1,2,\ldots,i\}$ that contains all the symbols in $\Sigma_i$.

In this section we prove
\begin{theorem} We have
$$c^D_\UTDq(d)= \min_q\frac{q}{-\ln P_{q,d}}   \mbox{\ \ \ \ and\ \ \ \ } c_\UTDq=\frac{1}{(\ln 2)^2}.$$
\end{theorem}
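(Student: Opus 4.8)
The plan is to mirror the two previous sections: an upper bound obtained by choosing the alphabet size $q$ that minimizes $q/(-\ln P_{q,d})$, a matching lower bound showing this $q$ is optimal, and then $c_\UTDq=1/(\ln 2)^2$ by a short optimization over $q=\Theta(d)$; I work throughout under the standing assumption $d\le q$. The first step is to reduce everything to the ``symbol profile'' of the underlying uniform random $q$-ary matrix $M'$, which has $m'=m/q$ rows. Fix a defective set $I$, $|I|=d$, and for a row $r$ of $M'$ let $i_r:=|\{M'_{r,\ell}:\ell\in I\}|$ be the number of distinct symbols it places on the columns of $I$; the $i_r$ are i.i.d.\ with $\Pr[i_r=i]=R_{q,d,i}/q^d$. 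A binary row coming from $(r,\sigma)$ is good iff $\sigma$ is one of the $q-i_r$ symbols absent from row $r$ on $I$, and a non-defective item $j$ is eliminated (covered by a good row) iff $M'_{r,j}\notin\{M'_{r,\ell}:\ell\in I\}$ for some $r$. Conditioning on the profile $(i_r)_{r\le m'}$, the event ``$j$ not covered'' has probability $\Pi:=\prod_{r=1}^{m'}i_r/q$, and over distinct non-defective $j$ these events are \emph{independent}, since they involve disjoint columns of $M'$; hence $\Pr[M\text{ not }(n,I)\text{-disjunct}\mid(i_r)_r]=1-(1-\Pi)^{n-d}$, which lies between $(n-d)\Pi(1-\tfrac12(n-d)\Pi)$ and $(n-d)\Pi$. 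The crucial observation is that $\E[-\ln(i_r/q)]=-\ln P_{q,d}$, so $\Pi$ concentrates around $P_{q,d}^{m'}$ --- strictly below its mean $(\sum_i\frac{R_{q,d,i}}{q^d}\frac iq)^{m'}$ by AM--GM --- and that gap is exactly the reason the ``expected number of eliminated items'' heuristic is not tight here.

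For the upper bound I would take $q$ attaining $\min_q q/(-\ln P_{q,d})$, choose $m'$ so that $nP_{q,d}^{m'}e^{\Theta(\sqrt{m'\ln q\ln(1/\delta)})}=\delta/2$ (solved self-referentially as in Lemma~\ref{UforRID}), and set $m=qm'$. Since $-\ln(i_r/q)\in[0,\ln q]$, the Chernoff bound (Lemma~\ref{Chernoff}) gives that with probability at least $1-\delta/2$ the profile satisfies $\sum_r-\ln(i_r/q)\ge m'(-\ln P_{q,d})-\Theta(\sqrt{m'\ln q\ln(1/\delta)})$, i.e.\ $\Pi\le P_{q,d}^{m'}e^{\Theta(\sqrt{m'\ln q\ln(1/\delta)})}$; conditioned on that event, $\Pr[M\text{ not }(n,I)\text{-disjunct}]\le n\Pi\le\delta/2$, and a union bound gives failure probability at most $\delta$. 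Thus $m=qm'=\frac{q\ln(2n/\delta)}{-\ln P_{q,d}}+\Theta(\sqrt{d\ln(n/\delta)\ln(1/\delta)})$, so $c^D_\UTDq(d)\le q/(-\ln P_{q,d})=\min_q q/(-\ln P_{q,d})$.

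For the lower bound I would fix $I=[d]$ and suppose the algorithm uses alphabet $q$ and $m=qm'$ tests with $m\le(1-\epsilon)\min_q\frac{q\ln n}{-\ln P_{q,d}}$, so that in particular $m'(-\ln P_{q,d})\le(1-\epsilon)\ln n$. Let $Y=\sum_{j\notin I}\mathbf 1[j\text{ not covered}]$; conditioned on the profile the summands are independent indicators, so $\E[Y\mid(i_r)_r]=(n-d)\Pi$ and $\Var[Y\mid(i_r)_r]\le\E[Y\mid(i_r)_r]$, and Chebyshev (Lemma~\ref{Chebychev}) yields $\Pr[Y=0\mid(i_r)_r]\le1/((n-d)\Pi)$. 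It then suffices to show that with probability $1-o(1)$ the profile has $\Pi\ge n^{-1+\epsilon/2}$, i.e.\ $\sum_r-\ln(i_r/q)\le(1-\epsilon/2)\ln n$; since the mean $m'(-\ln P_{q,d})$ is at most $(1-\epsilon)\ln n$, this is an upper-tail statement with slack $\Omega(\ln n)$, and because the global hypothesis already forces $q\le m=O(\ln n)$ (hence $m'\ln^2 q=O(\ln n)$) the Chernoff bound closes it; in a per-$q$ formulation one instead splits into $q$ small (Chernoff as above) and $q$ large (where $i_r=d$ with probability $1-O(d^2/q)$, so the profile is essentially deterministic), the analog of the $c\le6$ versus $c\ge6$ split in Lemmas~\ref{LforRID} and~\ref{LforRrSD}. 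On this event $\E[Y\mid(i_r)_r]\ge n^{\epsilon/2}(1-o(1))\to\infty$, so $\Pr[Y\ge1]\to1$ and $M$ fails to be $(n,I)$-disjunct with probability $1-o(1)$; hence $m\ge(1-o(1))\min_q\frac{q\ln n}{-\ln P_{q,d}}$ and $c^D_\UTDq(d)\ge\min_q q/(-\ln P_{q,d})$.

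Finally, for $c_\UTDq=1/(\ln 2)^2$: taking $q=\lceil d/\ln 2\rceil$, the number of distinct symbols among $d$ i.i.d.\ uniform draws from $[q]$ concentrates around $q(1-(1-1/q)^d)=q(1-e^{-d/q})(1+o(1))$, which gives $-\ln P_{q,d}=-\ln(1-e^{-d/q})+o(1)=\ln 2+o(1)$ and $q/(-\ln P_{q,d})\to d/(\ln 2)^2$, so $c_\UTDq\le1/(\ln 2)^2$; for the matching lower bound one checks that the minimizing $q$ must be $\Theta(d)$ (for $q=o(d/\ln d)$ all symbols appear, so $-\ln P_{q,d}\to0$ and the ratio blows up; for $q=\omega(d)$ one has $i_r\approx d$ and the ratio is $\approx (q/d)/\ln(q/d)\to\infty$), and in the regime $q=\alpha d$ the ratio tends to $1/(x\ln\frac1{1-e^{-x}})$ with $x=1/\alpha$, minimized at $x=\ln 2$ --- exactly the optimization already done for RsSD in Lemmas~\ref{RSSD_BND} and~\ref{kkkk}. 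The main obstacles I anticipate are (i) making the concentration of $\Pi=\prod_r i_r/q$ around $P_{q,d}^{m'}$ rigorous with error terms good enough to keep the leading constant sharp while only ever union-bounding over the single profile event, and (ii) the occupancy-problem asymptotics needed to evaluate $P_{q,d}$, in particular justifying $-\ln P_{q,d}=-\ln(1-e^{-d/q})+o(1)$ uniformly for $q$ in the relevant range; the rest is the same kind of lower-order bookkeeping, and admissible-range-of-$d$ conditions as in $d<n^{1/2}/\ln^3 n$ of Lemma~\ref{LforRrSD}, that already appear in the earlier sections.
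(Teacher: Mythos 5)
Your proposal is correct in outline and structurally parallels the paper's proof. Your conditional-independence reduction (conditioned on the defective columns of $M'$, the failure probability is exactly $1-(1-\Pi)^{n-d}$ with $\Pi=\prod_r i_r/q$) is precisely the paper's Lemma~\ref{hw}, and the overall scheme --- concentrate a statistic of the defective sub-matrix, union-bound over the non-defective columns, then a separate optimization over $q=\Theta(d)$ for the limit, with the extreme-$q$ regimes handled separately --- is the same. Where you genuinely differ is the concentration device: the paper controls all $q^d$ pattern counts $W_v$, $v\in[q]^d$, by multiplicative Chernoff plus a union bound over patterns, and converts this into $\Pi\approx P_{q,d}^{m'}$, whereas you apply a single tail bound to $\sum_r -\ln(i_r/q)$ using the identity $\E[-\ln(i_r/q)]=-\ln P_{q,d}$; this avoids the union over $[q]^d$ and is arguably cleaner, but the paper's Lemma~\ref{Chernoff} is stated only for $\{0,1\}$-valued summands, so you need Hoeffding's inequality for i.i.d.\ variables bounded in $[0,\ln q]$ (the deviation is then $\Theta(\ln q\sqrt{m'\ln(1/\delta)})$, still lower order, so the leading constant is unaffected). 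Your Chebyshev step on the number of uncovered items is also an unnecessary detour: given the profile the indicators are independent, so $(1-\Pi)^{n-d}$ is exact, which is what the paper uses. Finally, for the limit the paper deliberately bypasses the asymptotics of $P_{q,d}$ on the upper-bound side by the first-moment bound $n(1-(1-1/q)^d)^{m'}$ (Lemma~\ref{TRNS_UB}) and only needs occupancy concentration in the lower bound (Lemma~\ref{TRNS_LP_EQ}); you instead evaluate $P_{q,d}$ asymptotically in both directions, where Jensen gives the upper-bound direction for free.

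The one place where your sketch stops short of a proof is exactly where the paper does its real work: justifying $-\ln P_{q,d}=-\ln\left(1-e^{-d/q}\right)+o(1)$ uniformly for $q$ in the relevant range (your obstacle (ii)). Concentration of the number of distinct symbols around $q(1-(1-1/q)^d)$ is not by itself sufficient, because $-\ln(i_r/q)$ can be as large as $\ln q$ on the lower tail; you must additionally show that rows with very few distinct symbols are rare enough that their contribution to $\E[-\ln(i_r/q)]$ (equivalently, to $\prod_r i_r/q$) is negligible. This is precisely the paper's combination of the Chebyshev estimate on $Y_i=q-|S_{i,[d]}(M')|$, the tail bound $\Pr[|S_{i,[d]}(M')|\le d/4]\le 2^{-d}$, and the separate treatment of $q>10d\ln d$ via $P_{q,d}\ge 1/q$. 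Your small-$q$/large-$q$ split mirrors this, so filling in the lower-tail control is routine, but it must be done before the claim $c_\UTDq=1/(\ln 2)^2$ is established.
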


For any set $K\subseteq [n]$, let $S_{i,K}(M)= \{ M_{i,t} | t\in K \}$ be the set of the symbols that appear in the entries that correspond to the row $i$ and the columns of $K$. Throughout this section, we will assume, w.l.o.g., that the set of the defective item is $I=[d]$. We first show
\begin{lemma}\label{hw} Let $M$ be an $m \times n$ $q$-transversal random matrix. The probability that $M$ is not $(n,[d])$-disjunct is
$$1-\left(1-\prod_{i=1}^{m'}\frac{|S_{i,[d]}(M')|}{q}\right)^{n-d}.$$
\end{lemma}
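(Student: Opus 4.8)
The plan is to compute the failure probability by conditioning on the $q$-ary matrix $M'$ and exploiting the transversal structure of $M$. First I would fix a non-defective column $j \in \{d+1,\ldots,n\}$ and compute the probability that $j$ is \emph{not} eliminated, i.e., that there is no good row of $M$ in which $M_{t,j}=1$. Recall that each row $r$ of $M'$ spawns $q$ binary rows of $M$, one per symbol $\sigma\in\Sigma$; the binary row associated to $(r,\sigma)$ is good (zero on all of $I=[d]$) exactly when $\sigma \notin S_{r,[d]}(M')$, and among the good rows coming from $r$, column $j$ is hit (entry $1$) exactly in the single row corresponding to $\sigma = M'_{r,j}$ — provided that symbol is itself not in $S_{r,[d]}(M')$. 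So column $j$ fails to be eliminated by the block of rows coming from $M'$-row $r$ iff $M'_{r,j} \in S_{r,[d]}(M')$, and column $j$ is eliminated overall iff this fails for at least one $r\in[m']$.

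The key step is then to condition on $M'$ restricted to the defective columns $[d]$, which fixes each set $S_{r,[d]}(M')$ and hence each cardinality $|S_{r,[d]}(M')|$. Given this conditioning, the entries $\{M'_{r,j} : r\in[m'],\ j\in\{d+1,\ldots,n\}\}$ are still i.i.d. uniform on $\Sigma$ and independent across both $r$ and $j$. For a fixed non-defective $j$, the probability that $M'_{r,j}\in S_{r,[d]}(M')$ is $|S_{r,[d]}(M')|/q$, independently over $r$, so the conditional probability that $j$ is \emph{never} eliminated is $\prod_{r=1}^{m'} |S_{r,[d]}(M')|/q$. Independence across the $n-d$ non-defective columns gives that the conditional probability that $M$ is not $(n,[d])$-disjunct — i.e., that \emph{some} non-defective column survives — equals $1 - \bigl(1 - \prod_{r=1}^{m'} |S_{r,[d]}(M')|/q\bigr)^{n-d}$.

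Finally I would observe that this expression depends on $M'$ only through $S_{r,[d]}(M')$, which I have written as $S_{i,[d]}(M')$ after renaming $r$ to $i$; since it is already expressed entirely in terms of quantities determined by $M'$, no further averaging is needed and the stated formula follows. The only genuinely delicate point is the bookkeeping in the first step — correctly arguing that, among the $q$ rows of $M$ derived from a single row of $M'$, the good ones are precisely those indexed by symbols absent from $S_{i,[d]}(M')$, and that such a good row carries a $1$ in column $j$ exactly when $M'_{i,j}$ equals that absent symbol. Everything after that is a clean conditional-independence computation.
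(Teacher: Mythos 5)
Your proposal is correct and follows essentially the same route as the paper's proof: translating disjunctness of $M$ into the condition that for every non-defective column $j$ some row $i$ of $M'$ has $M'_{i,j}\notin S_{i,[d]}(M')$, computing the per-row probability $|S_{i,[d]}(M')|/q$, and multiplying over the independent rows and non-defective columns. Your explicit conditioning on the defective columns of $M'$ is a welcome clarification of a point the paper leaves implicit (the stated ``probability'' is really conditional on, i.e.\ a function of, the sets $S_{i,[d]}(M')$), but it is the same argument.
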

\begin{proof} The matrix $M$ is $(n,[d])$-disjunct if for every column $j>d$ in $M$, there is a row $i$ such that $M_{i,1}=\cdots=M_{i,d}=0$ and $M_{i,j}\not=0$. By the construction of $M$, this is equivalent to: for every column $j>d$ in $M'$, there is a row $i$ such that $M'_{i,j}\not\in\{M'_{i,1},\ldots,M'_{i,d}\}$. The probability that $M'_{i,j}\not\in\{M'_{i,1},\ldots,M'_{i,d}\}$ is $1-|S_{i,[d]}(M')|/{q}$, therefore, the result follows.\qed

\end{proof}

The following lemma provides an upper bound on $c^D_\TRNS(d)$.
\begin{lemma} Let $M$ be an $m \times n$ $q$-transversal random matrix where
\begin{equation}
m =  \frac{1}{(1-\lambda)}\frac{q\ln(2n/\delta)}{-\ln P_{q,d}}=\frac{q\ln(2n/\delta)}{-\ln P_{q,d}}+o(\ln n),
\end{equation} and $\lambda=\sqrt{(2q^{d+1}/m)\ln (2q^d/\delta)}$.
Then, with probability at least $1-\delta$, $M$ is $(n,[d])-$disjunct. Therefore,
$$c^D_\TRNS(d)\le \min_q\frac{q}{-\ln P_{q,d}}.$$
\end{lemma}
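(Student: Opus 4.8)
The plan is to mirror the structure of the RID upper bound (Lemma~\ref{UforRID}): count "good" rows, show there are enough of them with high probability via a concentration bound, and then union-bound over the non-defective columns the event that a column is zero on all good rows. The key new ingredient is computing the probability that a row of $M'$ is "good enough," i.e., relating $\prod_{i=1}^{m'} |S_{i,[d]}(M')|/q$ to $P_{q,d}$. By Lemma~\ref{hw}, $M$ fails to be $(n,[d])$-disjunct with probability $1-(1-Q)^{n-d}$ where $Q=\prod_{i=1}^{m'}|S_{i,[d]}(M')|/q$ and the $|S_{i,[d]}(M')|$ are i.i.d.\ over the $m'$ rows, each distributed as the number of distinct symbols in a uniformly random string in $[q]^d$. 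Taking logarithms, $\ln Q = \sum_{i=1}^{m'} \ln(|S_{i,[d]}(M')|/q)$ is a sum of $m'$ i.i.d.\ bounded random variables with mean $\E[\ln(|S_{1,[d]}|/q)] = \sum_{i=1}^{d} \frac{R_{q,d,i}}{q^d}\ln(i/q) = \ln P_{q,d}$, by the very definition of $P_{q,d}$ and the fact that $R_{q,d,i}/q^d$ is exactly the probability that a uniform string in $[q]^d$ uses exactly $i$ distinct symbols.

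The main steps, in order, are: (1) Write $\ln Q = m'\ln P_{q,d} + (\text{deviation})$ and apply a Chernoff/Hoeffding-type bound (Lemma~\ref{Chernoff}) to the sum of the $m'$ bounded i.i.d.\ terms $\ln(|S_{i,[d]}|/q)\in[\ln(1/q),0]$ to conclude that, except with probability $\delta/2$, we have $\ln Q \ge (1+\lambda)\, m'\ln P_{q,d}$ — equivalently $Q \ge P_{q,d}^{(1+\lambda)m'}$ — where $\lambda$ is the stated $\sqrt{(2q^{d+1}/m)\ln(2q^d/\delta)}$; here one sets $m' = m/q$ so that $(1+\lambda)m'(-\ln P_{q,d}) \le$ the corresponding quantity and the algebra matches the displayed choice $m = \frac{1}{1-\lambda}\cdot\frac{q\ln(2n/\delta)}{-\ln P_{q,d}}$. (2) On that good event, bound $1-(1-Q)^{n-d} \le (n-d)(1-Q)^{\,\cdots}$ — actually more directly $(1-Q)^{n-d}\ge 1-(n-d)\cdot(\text{small})$ is the wrong direction; instead use $1-(1-Q)^{n-d}\le n\cdot Q'$ is also wrong — the correct move is: the failure probability is $1-(1-Q)^{n-d}\le (n-d)\,Q''$ only if $Q$ is replaced by the per-column zero-probability, which it already is. So: failure $\le (n-d)(1-Q)$? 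No — re-reading Lemma~\ref{hw}, the probability a fixed non-defective column is "bad" (zero on all good rows) is $\prod_i(1-|S_{i,[d]}(M')|/q)$, not $\prod_i |S_{i,[d]}(M')|/q$. I must recompute: the per-row-per-column success probability is $1-|S_{i,[d]}(M')|/q$, so the per-column failure probability over all $m'$ rows is $\prod_{i=1}^{m'}(1 - |S_{i,[d]}(M')|/q)$, wait that is the probability the column is never separated, and Lemma~\ref{hw} states $M$ is non-disjunct with probability $1-(1-\prod_i |S_{i,[d]}(M')|/q)^{n-d}$. So $\prod_i |S_{i,[d]}(M')|/q$ \emph{is} the per-column probability of \emph{failure} (never separated). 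Good — so $Q$ as I defined it is exactly the per-column failure probability, and the union bound gives: $\Pr[\text{non-disjunct} \mid \text{good event}] \le (n-d)\,Q \le n\, P_{q,d}^{(1+\lambda)m'} = n\,e^{-(1+\lambda)m'\cdot(-\ln P_{q,d})}$; plugging $m' = m/q = \frac{1}{1-\lambda}\cdot\frac{\ln(2n/\delta)}{-\ln P_{q,d}}$ makes $(1+\lambda)m'(-\ln P_{q,d}) \ge \ln(2n/\delta)$ for suitable sign bookkeeping, yielding $\le \delta/2$. (3) Combine the two $\delta/2$ terms: $\Pr[\text{non-disjunct}] \le \Pr[\neg\text{good event}] + \Pr[\text{non-disjunct}\mid\text{good event}] \le \delta$. (4) Divide by $\ln n$, let $n\to\infty$, minimize over $q$, and note $\lambda = o(1)$ and $m = \frac{q\ln(2n/\delta)}{-\ln P_{q,d}}+o(\ln n)$, giving $c^D_\TRNS(d)\le \min_q \frac{q}{-\ln P_{q,d}}$.

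The main obstacle I expect is the careful bookkeeping of the sign and direction of the Chernoff deviation bound, since $\ln P_{q,d} < 0$ and we need a lower bound on $\ln Q$ (which is negative) to get an upper bound on $Q$ — one must phrase it as a deviation bound on $\sum (-\ln(|S_{i,[d]}|/q))$, a sum of nonnegative bounded variables with mean $-\ln P_{q,d}$, and ensure the slack parameter $\lambda$ exactly matches $\sqrt{(2q^{d+1}/m)\ln(2q^d/\delta)}$; this requires tracking that each summand lies in $[0,\ln q]$ and that $m' = m/q$, so the range-squared-times-count in the Hoeffding exponent is $m'(\ln q)^2 = (m/q)(\ln q)^2$, while the needed failure bound is $2q^d/\delta$ — I would double-check whether the $q^{d+1}$ rather than $q$ in $\lambda$ comes from a cruder bound (e.g.\ bounding $|S_{i,[d]}|/q \ge 1/q$ hence $-\ln(|S_{i,[d]}|/q)\le \ln q < q$, and possibly a union over the $q^d$ possible values of the row-pattern rather than a direct Hoeffding bound). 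A secondary, minor obstacle is verifying $\E[-\ln(|S_{1,[d]}|/q)] = -\ln P_{q,d}$ rigorously, which is immediate from $P_{q,d} = \big(\prod_{i=1}^d (i/q)^{R_{q,d,i}}\big)^{1/q^d}$ so $\ln P_{q,d} = q^{-d}\sum_i R_{q,d,i}\ln(i/q) = \sum_i \Pr[|S_{1,[d]}|=i]\ln(i/q)$, using $\sum_i R_{q,d,i} = q^d$.
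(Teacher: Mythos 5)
Your overall route is viable and genuinely different from the paper's, but as written there is a direction error at the heart of step (1) which step (2) then silently contradicts. You claim that, except with probability $\delta/2$, $\ln Q \ge (1+\lambda)m'\ln P_{q,d}$, i.e.\ $Q \ge P_{q,d}^{(1+\lambda)m'}$; that is a \emph{lower} bound on $Q$, useless for the union bound, which needs $Q$ small. Worse, the inequality you then actually use, $Q \le P_{q,d}^{(1+\lambda)m'}$, is an upper-deviation event for $\sum_i\bigl(-\ln(|S_{i,[d]}(M')|/q)\bigr)$ and in fact fails with high probability. The statement you need (and the one the factor $\frac{1}{1-\lambda}$ in the lemma's $m$ is calibrated for) is the lower-tail bound: setting $Z_i=-\ln(|S_{i,[d]}(M')|/q)\in[0,\ln q]$, i.i.d.\ with mean $-\ln P_{q,d}$ (your identity $\E[Z_i]=-\ln P_{q,d}$ is correct), a Hoeffding bound gives $\Pr\bigl[\sum_i Z_i\le (1-\lambda)m'(-\ln P_{q,d})\bigr]\le \exp\bigl(-2\lambda^2 m'(-\ln P_{q,d})^2/\ln^2 q\bigr)$, which is well below $\delta/2$ for the stated $\lambda$; on the complementary event $Q\le P_{q,d}^{(1-\lambda)m'}$, and since $(1-\lambda)(m/q)(-\ln P_{q,d})=\ln(2n/\delta)$ by the choice of $m$, the union bound over the $n-d$ non-defective columns gives $(n-d)Q\le \delta/2$. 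With the exponent $(1-\lambda)m'$ (not $(1+\lambda)m'$) used throughout, your argument goes through; you flagged the sign issue as your main obstacle but did not resolve it, and the two steps as written are inconsistent. Note also that the paper's Lemma~\ref{Chernoff} covers only $\{0,1\}$-valued summands, so you must invoke a Hoeffding inequality for bounded variables, which is standard but not in the paper's toolkit.

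For comparison, the paper never touches $\ln Q$ directly: it lets $W_v$ count the rows of $M'$ whose restriction to the defective columns equals $v\in[q]^d$, shows by Chernoff plus a union bound over the $q^d$ patterns that every $W_v\ge(1-\lambda)m'/q^d$ except with probability $\delta/2$ --- this is exactly where the $q^{d+1}/m=q^d/m'$ in the stated $\lambda$ comes from, confirming your guess --- and then bounds $Q=\prod_{v}\bigl(|\{v_1,\ldots,v_d\}|/q\bigr)^{W_v}\le P_{q,d}^{(1-\lambda)m'}$ deterministically, since every base is at most $1$. Your i.i.d.-sum argument is cleaner in that it avoids the union over $q^d$ patterns and would tolerate a much smaller $\lambda$, of order $\frac{\ln q}{-\ln P_{q,d}}\sqrt{\ln(1/\delta)/m'}$ rather than $\sqrt{(q^d/m')\ln(q^d/\delta)}$, hence a slightly smaller $m$ for finite $n$; this makes no difference to the limit, as both arguments give $c^D_\TRNS(d)\le \min_q q/(-\ln P_{q,d})$ once $\lambda=o(1)$.
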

\begin{proof} Consider $M'$ of size $m'\times n$, where $m'=m/q$. For $v\in [q]^d$, let $W_v$ be the number of rows $i$ in $M'$ such that $(M'_{i,1},\ldots,M'_{i,d})=v$. Since the probability that $(M'_{i,1},\ldots,M'_{i,d})=v$ is $1/q^d$, by Chernoff bound, Lemma~\ref{Chernoff}, and the union bound we can say that,
$$
\Pr[(\exists v\in [q]^d)W_v\le (1-\lambda)m'/q^d]\le q^d e^{-\lambda^2 m'/2q^d}= \frac{\delta}{2},
$$
for $\lambda=\sqrt{(2q^d/m')\ln (2q^d/\delta)}$.
By Lemma~\ref{hw} and assuming that $W_v> (1-\lambda)m'/q^d$ for every $v$, the probability that the matrix $M$ is not $(n,[d])$-disjunct is
\begin{eqnarray*}
1-\left(1-\prod_{i=1}^{m'}\frac{|S_{i,[d]}(M')|}{q}\right)^{n-d}&\le& n \prod_{i=1}^{m'}\frac{|S_{i,[d]}(M')|}{q}= n\prod_{v\in [q]^d}\left(\frac{|\{v_1,\ldots,v_d\}|}{q}\right)^{W_v}\\
&=& n\prod_{i=1}^d\left(\frac{i}{q}\right)^{R_{q,d,i}W_v} \le nP_{q,d}^{(1-\lambda)m/q}\le  \frac{\delta}{2}.
\end{eqnarray*}
Therefore, by the union bound, the failure probability is at most $\delta$.\qed
\end{proof}

We now prove a lower bound on $c^D_\TRNS(d)$.
\begin{lemma} Let $M$ be an $m \times n$ $q$-transversal random matrix. Let
\begin{equation}
m =  \frac{1}{(1+\lambda)}\frac{q\ln(8(n-d))}{-\ln P_{q,d}}=\frac{q\ln n}{-\ln P_{q,d}}+o(\ln n)
\end{equation} where $\lambda=\sqrt{(3q^{d+1}/m)\ln (16q^d)}$.
Then, with probability at least $3/4$, $M$ is not $(n,[d])-$disjunct. Therefore,
$$c^D_\TRNS(d)\ge \min_q\frac{q}{-\ln P_{q,d}}.$$
\end{lemma}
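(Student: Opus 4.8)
The plan is to mirror the structure of the lower-bound proofs in the RID and RrSD sections, but with the sharper second-moment control that was already used in the upper bound (Lemma~\ref{hw}). Fix $I=[d]$. By Lemma~\ref{hw}, once we condition on the "profile" counts $W_v$ ($v\in[q]^d$) of the first $d$ coordinates of the rows of $M'$, the matrix $M$ fails to be $(n,[d])$-disjunct exactly when some column $j>d$ avoids all the "good slots", i.e. for every row $i$ we have $M'_{i,j}\in S_{i,[d]}(M')$. For a fixed column $j$ this has probability $\prod_{i}\bigl(|S_{i,[d]}(M')|/q\bigr)=\prod_{i=1}^d (i/q)^{R'_{i}}$ where $R'_i=\sum_{v:\,|\{v_1,\dots,v_d\}|=i} W_v$; grouping over $v$ gives $\prod_{v}(|\{v_1,\dots,v_d\}|/q)^{W_v}$. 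If all $W_v\ge(1-\lambda)m'/q^d$ were replaced by equalities this product would be exactly $P_{q,d}^{\,m/q}$, and with $m=\frac{1}{1+\lambda}\cdot\frac{q\ln(8(n-d))}{-\ln P_{q,d}}$ that equals $1/(8(n-d))$. So the first step is a Chernoff + union bound, exactly as in the upper bound, to show that with probability $\ge 1-o(1)$ (indeed one can afford, say, probability $\ge 15/16$) we have $W_v\ge(1-\lambda)m'/q^d$ for every $v$, for the stated $\lambda=\sqrt{(3q^{d+1}/m)\ln(16q^d)}$.

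Conditioning on such a good profile, let $Y=\sum_{j=d+1}^n Y_j$ where $Y_j$ is the indicator that column $j$ avoids all good slots; then $M$ is non-disjunct iff $Y\ge 1$. The second step is a second-moment (Chebyshev/Paley–Zygmund) argument to lower bound $\Pr[Y\ge1]$. We have $\hat\mu:=\E[Y]=(n-d)\prod_v(|\{v_1,\dots,v_d\}|/q)^{W_v}\ge(n-d)P_{q,d}^{\,m/q}$ (using $|\{v_1,\dots,v_d\}|/q\le 1$ and $W_v\ge(1-\lambda)m'/q^d$ in the exponent, being careful about the direction of the inequality: smaller base with larger exponent — here the base is $\le1$ so a larger exponent makes it smaller, hence we actually want $W_v\le(1+\lambda)m'/q^d$ — I will instead use the two-sided concentration $|W_v-m'/q^d|\le\lambda m'/q^d$, which Chernoff gives just as cheaply, so that $\hat\mu\ge(n-d)P_{q,d}^{(1+\lambda)m/q}=(n-d)P_{q,d}^{\,q\ln(8(n-d))/(-\ln P_{q,d})\cdot 1}=1/8$ up to the profile correction). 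For the variance, since $Y_j$ and $Y_k$ depend on disjoint sets of columns of $M'$ and the entries of $M'$ are independent, conditioning on the profile $Y_{d+1},\dots,Y_n$ are mutually independent Bernoulli variables, so $\Var[Y]=\sum_j \E[Y_j](1-\E[Y_j])\le\hat\mu$. Then Chebyshev gives $\Pr[Y=0]\le\Pr[|Y-\hat\mu|\ge\hat\mu]\le 1/\hat\mu$; combined with Paley–Zygmund or just a direct computation $\Pr[Y\ge1]\ge \hat\mu^2/\E[Y^2]=\hat\mu/(1+\hat\mu)$ when the $Y_j$ are independent, which for $\hat\mu\ge c$ a constant gives a constant lower bound. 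Choosing the constant $8$ in $\ln(8(n-d))$ makes $\hat\mu$ a touch above a convenient threshold so that, after intersecting with the good-profile event, the failure probability is at least $3/4$.

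The third and last step is to turn the $m$-in-terms-of-$q$ statement into the asymptotic bound on $c^D_\TRNS(d)$: since the $q$-ary matrix was an arbitrary uniform one and $q$ was arbitrary (subject to $d\le q$, with the general case deferred to the full paper), any algorithm in the UTDq model with $m<\frac{q\ln n}{-\ln P_{q,d}}+o(\ln n)$ tests fails with probability $\ge 3/4>\delta$; minimizing the leading constant over $q$ yields $m^D_\TRNS(n,d)\ge\bigl(\min_q \frac{q}{-\ln P_{q,d}}\bigr)\ln n-o(\ln n)$, hence $c^D_\TRNS(d)\ge\min_q \frac{q}{-\ln P_{q,d}}$. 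Together with the matching upper bound already proved, this gives equality, and the evaluation $c_\UTDq=\lim_{d\to\infty}c^D_\TRNS(d)/d=1/(\ln 2)^2$ follows by the same kind of limit computation as in the RsSD case (optimizing the continuous relaxation, where the minimizing "rate" again lands at the analogue of $x=\ln 2$).

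\medskip

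\noindent\textbf{Main obstacle.} The delicate point is controlling the gap between the idealized product $P_{q,d}^{\,m/q}$ and the true product $\prod_v(|\{v_1,\dots,v_d\}|/q)^{W_v}$ once the $W_v$ fluctuate: because several of the bases $i/q$ are strictly less than $1$, a fluctuation $\lambda m'/q^d$ in the wrong direction shrinks $\hat\mu$ by a factor $P_{q,d}^{\pm\lambda m/q}$, and we must check that with the chosen $\lambda=\Theta(\sqrt{q^{d+1}/m}\,\sqrt{\log q})$ this factor is $1+o(1)$ — i.e. that $\lambda m/(-\ln P_{q,d})\cdot\ln(\text{something})$ is $o(1)$, which needs $m=\omega(q^{d+1}\log q)$, automatically true since $m=\Theta(\ln n)$ and $q,d$ are fixed as $n\to\infty$. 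The second subtlety is justifying that, after conditioning on the profile, the $Y_j$ really are independent (they are, since distinct columns $j>d$ of $M'$ use disjoint independent entries, and the events only reference those entries together with the already-fixed first-$d$-column entries); this independence is what upgrades the Chebyshev bound to the clean $\hat\mu/(1+\hat\mu)$ and lets the constant $8$ do its job.
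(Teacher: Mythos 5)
Your proposal follows essentially the same route as the paper: a Chernoff-plus-union bound forcing every profile count $W_v$ below $(1+\lambda)m'/q^d$, followed by the conditional column analysis of Lemma~\ref{hw}, which — because the columns $j>d$ of $M'$ are independent given the first $d$ columns — already yields the exact non-disjunctness probability $1-\bigl(1-\prod_{i}|S_{i,[d]}(M')|/q\bigr)^{n-d}\ge 1-\exp\bigl(-(n-d)P_{q,d}^{(1+\lambda)m/q}\bigr)$, so your Chebyshev/Paley--Zygmund detour is unnecessary. The one caveat is constant bookkeeping: with $(n-d)P_{q,d}^{(1+\lambda)m/q}=1/8$ the conditional failure probability is only about $1-e^{-1/8}$, so the stated $3/4$ does not follow as written — but the paper's own proof exhibits the same looseness, and it is immaterial for the asymptotic conclusion $c^D_\TRNS(d)\ge\min_q q/(-\ln P_{q,d})$.
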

\begin{proof} Consider $M'$ of size $m'\times n$ where $m'=m/q$. For $v\in [q]^d$, let $W_v$ be the number of rows $i$ in $M'$ such that $(M'_{i,1},\ldots,M'_{i,d})=v$. Since the probability that $(M'_{i,1},\ldots,M'_{i,d})=v$ is $1/q^d$, by Chernoff bound, Lemma~\ref{Chernoff}, and the union bound we have,
$$
\Pr[(\exists v\in [q]^d)W_v\ge (1+\lambda)m'/q^d]\le q^d e^{-\lambda^2 m'/(3q^d)}\le \frac{1}{8},
$$
for $\lambda=\sqrt{(3q^d/m')\ln (8q^d)}$.
By Lemma~\ref{hw} and assuming that $W_v< (1+\lambda)m'/q^d$ for every $v$, the probability that the matrix $M$ is not $(n,[d])$-disjunct is
\begin{eqnarray*}
1-\left(1-\prod_{i=1}^{m'}\frac{S_{i,[d]}(M')}{q}\right)^{n-d}&\ge& 1-exp\left((n-d)\prod_{i=1}^{m'}\frac{S_{i,[d]}(M')}{q}\right)\\
&\ge& 1-exp\left((n-d)P_{q,d}^{(1+\lambda)m/q}\right)\ge  \frac{7}{8}.
\end{eqnarray*}
Therefore, by the union bound, the failure probability is at least $3/4$.\qed
\end{proof}

It is not clear, however, how to compute the $c^D_\TRNS(d)/d$ when $d\to\infty$ in order to get $c_\TRNS$. The following two lemmas give a different analysis that enables us to approximate $c^D_\TRNS(d)$, and then to compute  $c_\TRNS$.

\begin{lemma}
\label{TRNS_UB}
Let $M$ be an $m\times n$ $q$-transversal random matrix where
\begin{equation}
\label{TRNS_UP_EQ}
m =  \frac{q\ln (n/\delta)}{-\ln\left(1-\left(1-\frac{1}{q}\right)^d\right)}.
\end{equation}
Then, with probability at least $1-\delta$, $M$ is an $(n,[d])$-disjunct matrix. In particular,
$$c^D_\TRNS(d)\le  \min_{q}\frac{q}{-\ln\left(1-\left(1-\frac{1}{q}\right)^d\right)} \mbox{\ \ \ \ and\ \ \ \ } c_\TRNS\le \frac{1}{(\ln2)^2} .$$
\end{lemma}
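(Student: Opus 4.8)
The plan is to prove this the direct way, avoiding the concentration step on the counts $W_v$ that appears in the two preceding lemmas of the section; this is possible here because, once we average, the awkward quantity $|S_{i,[d]}(M')|/q$ collapses to the clean constant $1-(1-1/q)^d$. First I would fix a non-defective column $j>d$. By the correspondence between $M$ and $M'$ used in the proof of Lemma~\ref{hw}, column $j$ fails to be separated in $M$ exactly when no row $i$ of $M'$ has $M'_{i,j}\notin\{M'_{i,1},\ldots,M'_{i,d}\}$. For a single row $i$, the $d+1$ entries $M'_{i,1},\ldots,M'_{i,d},M'_{i,j}$ are i.i.d.\ uniform on $[q]$, so conditioning on the value of $M'_{i,j}$ gives $\Pr[M'_{i,j}\notin\{M'_{i,1},\ldots,M'_{i,d}\}]=(1-1/q)^d$. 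Since the $m'=m/q$ rows of $M'$ are independent, the probability that column $j$ is not separated equals $(1-(1-1/q)^d)^{m'}$. Equivalently, one may take the expectation over $M'$ of the formula in Lemma~\ref{hw}, using $1-(1-x)^{n-d}\le(n-d)x$, the independence of the rows, and $\E[|S_{i,[d]}(M')|/q]=1-(1-1/q)^d$.

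A union bound over the at most $n$ non-defective columns then gives
$$\Pr[M\text{ is not }(n,[d])\text{-disjunct}]\le n\bigl(1-(1-1/q)^d\bigr)^{m'}.$$
Choosing $m=qm'$ with $m'=\ln(n/\delta)\big/\bigl(-\ln(1-(1-1/q)^d)\bigr)$, that is, exactly the value in (\ref{TRNS_UP_EQ}), makes the right-hand side equal to $\delta$, which is the disjunctness claim. Dividing this $m$ by $\ln n$ and letting $n\to\infty$ yields $c^D_\TRNS(d)\le q\big/\bigl(-\ln(1-(1-1/q)^d)\bigr)$ for every admissible $q\ge d$; minimizing over $q$ gives the first displayed bound.

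Finally, for $c_\TRNS\le 1/(\ln2)^2$ I would let $d\to\infty$ and take $q$ linear in $d$, namely $q=\lceil d/\ln 2\rceil$ (so $q\ge d$, consistent with the standing assumption of the section). Then $q/d\to 1/\ln2$ and $(1-1/q)^d\to e^{-\ln 2}=1/2$, so
$$\frac{c^D_\TRNS(d)}{d}\le\frac1d\cdot\frac{q}{-\ln(1-(1-1/q)^d)}=\frac{1}{(\ln2)^2}+o(1),$$
hence $c^D_\TRNS\le 1/(\ln2)^2$, and $c_\M=c^D_\M$ for every model. The choice $q\approx d/\ln2$ is in fact the optimal one: setting $c=q/d$ and $y=1-e^{-1/c}$, the limit of the middle expression equals $1/\bigl(\ln\tfrac1{1-y}\,\ln\tfrac1y\bigr)$, which is minimized at $y=1/2$ by the symmetry $y\leftrightarrow 1-y$, exactly the optimization already carried out in the proof of Lemma~\ref{kkkk}. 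I do not expect a genuine obstacle here: in contrast with the earlier lemmas of the section no tail bound is required, and the only mildly delicate point—the single-variable optimization over $q$ used to pin down $c_\TRNS$—has already been performed for the analogous RsSD expression.
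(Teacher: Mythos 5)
Your argument is correct and is essentially the paper's own proof: the paper likewise bounds the failure probability by $n\bigl(1-(1-1/q)^d\bigr)^{m/q}$ via exactly your conditioning (``first choose $M'_{i,j}$, then $M'_{i,\ell}$, $\ell=1,\ldots,d$'') together with a union bound over non-defective columns, and then obtains $c_\TRNS\le 1/(\ln 2)^2$ by setting $x=q/d$ and letting $d\to\infty$. Your only deviation is choosing the concrete value $q=\lceil d/\ln 2\rceil$ rather than minimizing over $x>1$, which yields the same limit.
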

\begin{proof}
Consider $M'$ of size $m'\times n$ where $m'=m/q$. The probability that $M$ is not $(n,[d])$-disjunct is the probability that there is a column $j\in\{d+1, \ldots, n\}$ such that for each row $i$ in $M'$, $M'_{i,j} \in S_{i,[d]}(M')$. Let $A$ denote the event that the matrix $M$ is not $(n,[d])$-disjunct. Then,

\begin{eqnarray*}
\Pr[A]& \leq & n\left (1- \left (  1-\frac{1}{q} \right )^d\right )^{m'} = n\left (1- \left (  1-\frac{1}{q} \right )^d\right )^{m/q}= \delta. \\
\end{eqnarray*}
The probability here is calculated by first choosing $M'_{i,j}$ and then choose $M'_{i,\ell}$, $\ell=1,\ldots,d$. To evaluate $c^D_\TRNS(d)$, let $x=q/d$. Then, when $d\to \infty$, we have,
$$\frac{c^D_\TRNS(d)}{d}\le \min_{x>1}\frac{x}{-\ln\left(1-\left(1-\frac{1}{xd}\right)^d\right)}\to
\min_{x>1} \frac{x}{-\ln(1-e^{-1/x})}=\frac{1}{(\ln2)^2}.\qed$$
\end{proof}

In the following Lemma~\ref{TRNS_LP_EQ}, we prove a tight lower bound for $c_\TRNS$.

\begin{lemma}\label{TRNS_LP_EQ} Let  be an $m\times n$ $q$-transversal random matrix where
$$m=\frac{q\ln((n-d)/2)}{-\ln\left({1-\left(1-\frac{1}{q}\right)^d-O(q^{-1/3})}\right)}.$$
Then, with probability at least $1/4$, the matrix $M$ is not $(n,[d])$-disjunct.

In particular,
we have
$$c_\TRNS\ge \frac{1}{\ln^2(2)} .$$
\end{lemma}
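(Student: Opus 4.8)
The plan is to mirror the structure of the upper-bound proof in Lemma~\ref{TRNS_UB}, but to replace the union bound (first-moment upper bound on the failure event) by a second-moment / Chebyshev argument (or a Janson-type lower bound) that produces a lower bound on the failure probability. Set $m'=m/q$ and work with the underlying $q$-ary matrix $M'$. For a non-defective column $j\in\{d+1,\ldots,n\}$, say $j$ is \emph{bad} if for every row $i$ of $M'$ we have $M'_{i,j}\in S_{i,[d]}(M')$; then $M$ fails to be $(n,[d])$-disjunct exactly when at least one $j$ is bad. Conditioned on the rows of $M'$ restricted to the columns $[d]$, the events ``$j$ is bad'' for distinct $j$ are independent, and each has probability $\prod_{i=1}^{m'}(|S_{i,[d]}(M')|/q)$. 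So the first step is to analyze the random quantity $Q:=\prod_{i=1}^{m'}(|S_{i,[d]}(M')|/q)$, whose expectation is exactly $P_{q,d}^{m'}$ by the definition of $P_{q,d}$, and to show it concentrates enough (or that $\mathbf{E}[\ln Q]$ is close to $m'\ln P_{q,d}$ up to the claimed $O(q^{-1/3})$ slack), so that with constant probability $Q\ge ((n-d)/2)^{-1}$, i.e.\ $Q$ is large enough that the expected number of bad columns is $\Omega(1)$.

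The key estimate to push through here is the approximation
$$\ln P_{q,d}= \ln\Bigl(1-\bigl(1-\tfrac1q\bigr)^{d}\Bigr)+O(q^{-1/3}),$$
which says that the ``per-row'' contribution, averaged over the $q^d$ possible patterns $v=(v_1,\dots,v_d)$, of $\ln(|\{v_1,\dots,v_d\}|/q)$ differs from $\ln(1-(1-1/q)^d)$ — the log of the probability that a fresh symbol collides with the $d$ sampled ones — only by a lower-order term. This is a statement purely about the distribution of the number of distinct symbols among $d$ i.i.d.\ uniform draws from $[q]$: the random variable $|S_{i,[d]}|/q$ is tightly concentrated around $1-(1-1/q)^d$, so Jensen's inequality applied to the concave function $\ln$ loses only $O(\mathrm{Var}/\mathrm{mean}^2)=O(q^{-1})$ in the bulk, with the rare atypical rows contributing the stated $O(q^{-1/3})$. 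I would make this rigorous with a Chernoff bound on $|S_{i,[d]}|$ deviating from its mean (or an Efron–Stein / bounded-differences bound, since adding one coordinate changes $|S_{i,[d]}|$ by at most one) together with the elementary bound $\ln(1-t)\le -t$ to control the contribution of atypical rows.

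Once $Q\ge 2/(n-d)$ holds with probability, say, $\ge 1/2$, conditioning on that event the number $Z$ of bad columns has conditional mean $\ge 2$ and (by the conditional independence across $j$) is a sum of independent indicators, so $\mathrm{Var}(Z\mid \text{rows})\le \mathbf{E}[Z\mid\text{rows}]$ and Chebyshev (or the standard inequality $\Pr[Z=0]\le \mathrm{Var}(Z)/\mathbf{E}[Z]^2 \le 1/\mathbf{E}[Z]$) gives $\Pr[Z\ge 1\mid\text{rows}]\ge 1/2$; combining the two events yields failure probability at least $1/4$. The final step is the limit computation, identical in spirit to Lemma~\ref{TRNS_UB}: writing $x=q/d$ and letting $d\to\infty$, the $O(q^{-1/3})$ term vanishes and
$$\frac{c^D_\TRNS(d)}{d}\ge \min_{x>1}\frac{x}{-\ln\bigl(1-e^{-1/x}\bigr)}=\frac{1}{\ln^2 2},$$
the optimum being at $x=1/\ln 2$. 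The main obstacle I anticipate is precisely controlling the $O(q^{-1/3})$ error in the $\ln P_{q,d}$ approximation — i.e.\ showing that the averaged logarithm of $|S_{i,[d]}|/q$ is captured, to within lower-order terms, by the logarithm of the collision probability $1-(1-1/q)^d$ — since this requires care about the contribution of rows with an atypically small number of distinct symbols, where $\ln(|S_{i,[d]}|/q)$ can be as negative as $\ln(1/q)$; a crude split into "typical" and "atypical" rows with a Chernoff tail bound on the latter should suffice, but getting the exponent in $O(q^{-1/3})$ rather than something weaker needs the right threshold.
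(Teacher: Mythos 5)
Your core strategy is the same as the paper's: condition on the first $d$ columns of $M'$, lower-bound the random product $Q=\prod_{i=1}^{m'}|S_{i,[d]}(M')|/q$ by $2/(n-d)$ with constant probability via a typical/atypical-row split (deviation bounds on $|S_{i,[d]}|$, with the few atypical rows contributing at worst a factor $1/q$ each), and then use the conditional independence of the non-defective columns to force a bad column with constant probability; your Chebyshev bound on the number of bad columns is a harmless substitute for the paper's exact computation $1-(1-Q)^{n-d}\ge 1-e^{-2}$. One slip to fix: $\E[Q]=(1-(1-1/q)^d)^{m'}$, not $P_{q,d}^{m'}$, since $P_{q,d}$ is the geometric (not arithmetic) mean of $|S|/q$; what equals $P_{q,d}^{m'}$ is $\exp(\E[\ln Q])$, so your parenthetical about $\E[\ln Q]$ is the correct version, and working with the product directly (as the paper does, splitting rows into three classes) avoids the issue altogether.

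The genuine gap is the range of $q$. Your final step fixes $x=q/d$ and lets $d\to\infty$, so it only covers $q=\Theta(d)$; but the conclusion $c_\TRNS\ge 1/\ln^2 2$ must hold for every admissible choice of $q$, and for $q$ superlinear in $d$ your key estimate and the lemma's formula for $m$ break down: there $1-(1-1/q)^d\approx d/q$, so an additive $O(q^{-1/3})$ slack is no longer lower order (for $q\gg d^{3/2}$ the quantity $1-(1-1/q)^d-O(q^{-1/3})$ can even be negative), and the claimed approximation of $\ln P_{q,d}$ by $\ln(1-(1-1/q)^d)$ up to $O(q^{-1/3})$ is useless in that regime. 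The paper disposes of this case first: since $\sum_{i=1}^d R_{q,d,i}=q^d$, one has $P_{q,d}\ge 1/q$, so the exact lower bound of the preceding lemma already gives a constant $q/(-d\ln P_{q,d})\ge q/(d\ln q)$, which for $q>10d\ln d$ exceeds $1/(\ln 2)^2$ by a wide margin; this reduces the analysis to $q\le 10d\ln d$, where $q^{-1/3}=o\bigl(1-(1-1/q)^d\bigr)$ and your argument (like the paper's) goes through. You need to add this case split, or some equivalent separate treatment of large $q$, before the ``in particular'' claim follows.
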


\begin{proof} First, we prove the result when $q>10d\ln d$. Since $\sum_{i=1}^d R_{q,d,i}=q^d$,
$$P_{q,d}:=\left(\prod_{i=1}^d\left(\frac{i}{q}\right)^{R_{q,d,i}}\right)^{1/q^d}\ge \left(\prod_{i=1}^d\left(\frac{1}{q}\right)^{R_{q,d,i}}\right)^{1/q^d}=\frac{1}{q}.$$
Therefore, the constant that we get in this case is
$\frac{q}{-d\ln P_{q,d}}\ge 5\ge \frac{1}{(\ln 2)^2}.$ Thus, we may assume that $q\le 10d\ln d$.

As in the proof of Lemma~\ref{TRNS_UB}, the probability that $M$ is not $(n,[d])$-disjunct is the probability that there is a column $j\in\{d+1,\ldots, n\}$ such that, for each row $i$ in $M'$, $M'_{i,j} \in S_{i,[d]}(M')$. Let $Y_{i,v}$, $i=1,\ldots, m'$, $v=1,\ldots,q$ be a random variable that is equal to $1$ if $v\not\in S_{i,[d]}(M')$ and $0$ otherwise. Let $Y_i=q-|S_{i,[d]}(M')|=Y_{i,1}+\cdots+Y_{i,q}$. Let $\mu:=\E[Y_i]=q(1-1/q)^b$ and $\mu_2:=\E[Y_{i,v_1}Y_{i,v_2}]=(1-2/q)^b$. Then, by Chebychev bound, Lemma~\ref{Chebychev},
\begin{eqnarray*}
\Pr[|Y_i-q(1-1/q)^d|\ge q^{2/3}]&\le & \frac{q\left(1-\frac{1}{q}\right)^b+q(q-1)\left(1-\frac{2}{q}\right)^b-q^2\left(1-\frac{1}{q}\right)^{2b}}{q^{4/3}}\\
&\le& q^{-1/3}\left(1-\frac{1}{q}\right)^d.
\end{eqnarray*}
By Markov bound, Lemma~\ref{Markov}, with probability at least $3/4$, more than $m'-4(1-{1}/{q})^dm'/q^{1/3}$ rows $i\in [m']$ in $M'$ satisfy the property: $$|S_{i,[d]}(M')|=q-Y_i\ge q-q(1-1/q)^d-q^{2/3}.$$

Since the number of strings in $[q]^d$ with at most $d/4$ symbols is at most ${q\choose d/4}\left(\frac{d}{4}\right)^d$, we can conclude that,
\begin{eqnarray*}
\Pr[|S_{i,[d]}(M')|\le d/4]&\le& {q\choose d/4}\left(\frac{d/4}{q}\right)^d\le \left(\frac{eq}{d/4}\right)^{d/4}\left(\frac{d/4}{q}\right)^d\\
&=& e^{d/4}\left(\frac{d/4}{q}\right)^{3d/4}\le e^{d/4}\left(\frac{1}{4}\right)^{3d/4}\le 2^{-d}.
\end{eqnarray*}
By Markov bound, with probability at least $3/4$, less than $2^{-d+2}m'$ rows in $M'$ satisfy $|S_{i,[d]}(M')|=q-Y_i\le d/4$.

In addition, the probability that the matrix $M$ is not $(n,[d])$-disjunct is,
\begin{eqnarray*}
1-\left(1-\prod_{i=1}^{m'}\frac{S_{i,[d]}(M')}{q}\right)^{n-d}.
\end{eqnarray*}
On the other hand, with probability at least $1/2$, for $d\le q\le 10d\ln d$, we can say that,
\begin{eqnarray*}
\prod_{i=1}^{m'}\frac{S_{i,[d]}(M')}{q}&\ge & \left(1-\left(1-\frac{1}{q}\right)^d-q^{-1/3}\right)^{m'}\left(\frac{1}{q}\right)^{2^{-d+2}m'}
\left(\frac{d}{4q}\right)^{4(1-{1}/{q})^dm'/q^{1/3}}\\
&\ge & \left(1-\left(1-\frac{1}{q}\right)^d-O(q^{-1/3})\right)^{m'}\ge \frac{2}{n-d}.
\end{eqnarray*}
Hence we get,
\begin{eqnarray*}
1-\left(1-\prod_{i=1}^{m'} \frac{S_{i,[d]}(M')}{q}\right)^{n-d}\ge 1-e^{-2}\ge 3/4.
\end{eqnarray*}
Therefore, with probability at least $1/4$, $M$ is not $(n,[d])$-disjunct.
Let $x=q/d\le 10\ln d$. When $d\to \infty$, we have,
$$\frac{c^D_\TRNS(d)}{d}\ge \min_{x>1}\frac{x}{-\ln\left(1-\left(1-\frac{1}{xd}+O(\frac{1}{d^3})\right)^d\right)}\to
\min_{x>1} \frac{x}{-\ln(1-e^{-1/x})}=\frac{1}{(\ln2)^2}.\qed$$\
\end{proof}

\bibliographystyle{plain}
\bibliography{Ref}


\appendix
\section{ Preliminaries}
In this section, we give some preliminary results that will be used throughout the paper.

\begin{lemma}\label{Markov}{\bf Markov bound}
Let $X$ be a non-negative random variable, and let $a > 1$, then $$ \Pr[X\ge a\E[X]]\le\frac{1}{a}. $$
\end{lemma}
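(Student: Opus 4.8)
The plan is to use the standard pointwise-domination argument for Markov-type inequalities. First I would dispose of the degenerate case $\E[X]=0$: since $X$ is non-negative, $\E[X]=0$ forces $X=0$ almost surely, so $\Pr[X\ge a\E[X]]=\Pr[X\ge 0]$ — and more to the point the event $\{X\ge a\E[X]\}=\{X\ge 0\}$ is a probability-$1$ event only when we have not excluded it; in any case the bound $\le 1/a$ with $a>1$ is checked directly (indeed $\Pr[X>0]=0\le 1/a$, and if one insists on the weak inequality $X\ge 0$ the statement is usually read with $a\E[X]$ strictly positive). Having handled that boundary, I assume $\E[X]>0$ and set $t:=a\,\E[X]>0$.

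The core step is the pointwise inequality $t\cdot \mathbf{1}_{\{X\ge t\}}\le X$, valid because $X\ge 0$: on the event $\{X\ge t\}$ the left-hand side equals $t\le X$, and on its complement the left-hand side is $0\le X$. Taking expectations and using linearity and monotonicity of expectation yields
$$t\,\Pr[X\ge t]=\E\!\left[t\,\mathbf{1}_{\{X\ge t\}}\right]\le \E[X].$$
Equivalently, one can write $\E[X]\ge \E[X\,\mathbf{1}_{\{X\ge t\}}]\ge t\,\Pr[X\ge t]$, which is the same estimate organized slightly differently. Then I would substitute $t=a\,\E[X]$ and divide through by $t>0$ to get $\Pr[X\ge a\,\E[X]]\le \E[X]/(a\,\E[X])=1/a$, which is exactly the claim.

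There is essentially no obstacle here. The only points requiring a modicum of care are that the division at the end is by a strictly positive quantity (handled by separating the $\E[X]=0$ case) and that the hypothesis $a>1$ is used only to ensure the bound is nontrivial, not in the derivation itself; the inequality in fact holds for every $a>0$.
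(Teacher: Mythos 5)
Your argument is correct: the pointwise bound $t\,\mathbf{1}_{\{X\ge t\}}\le X$ followed by taking expectations and dividing by $t=a\,\E[X]>0$ is the standard proof of Markov's inequality, and the paper itself states this lemma as a known preliminary without giving any proof, so there is nothing to diverge from. Your side remark about the degenerate case $\E[X]=0$ (where the statement with the weak inequality $X\ge a\E[X]$ literally fails for $X\equiv 0$) is a fair observation about the lemma as phrased, and your resolution — read it with $a\E[X]>0$, which holds in every application in the paper — is the right one.
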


\begin{lemma}\label{Chebychev}{\bf Chebychev inequality}
Let X be a random variable with a finite expected value $\mu=\E[X]$, and finite non-zero variance $\Var[X]$. Then, for any real number $a>0$, $$ Pr[|X-\mu|\ge a]\le \frac{\Var[X]}{a^2}=\frac{\E[X^2]-\mu^2}{a^2}.$$
In particular, let $X=X_1+\cdots+X_n$, where each $X_i$ is a random variable that takes values from $\{0,1\}$.
If $\E[X_iX_j]=\mu_2$ for all $1\le i<j\le n$ then,
$$ Pr[|X-\mu|\ge a]\le \frac{\mu+n(n-1)\mu_2-\mu^2}{a^2}.$$
\end{lemma}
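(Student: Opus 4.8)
The plan is to derive the Chebychev inequality from the Markov bound (Lemma~\ref{Markov}) applied to the squared deviation, and then obtain the ``in particular'' estimate by a direct second-moment computation for sums of Bernoulli variables.

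First I would record the identity $\Var[X]=\E[(X-\mu)^2]=\E[X^2]-2\mu\E[X]+\mu^2=\E[X^2]-\mu^2$, which follows from linearity of expectation together with $\E[X]=\mu$; this also justifies the equality on the right-hand side of the claimed bound. Next, for the main inequality, I would set $Z=(X-\mu)^2$, a non-negative random variable with $\E[Z]=\Var[X]$, which is finite and nonzero by hypothesis. For any real $a>0$ the events $\{|X-\mu|\ge a\}$ and $\{Z\ge a^2\}$ coincide. If $a^2>\Var[X]$, then $b:=a^2/\E[Z]>1$, so Lemma~\ref{Markov} gives $\Pr[Z\ge b\,\E[Z]]\le 1/b$, i.e. $\Pr[|X-\mu|\ge a]\le \E[Z]/a^2=\Var[X]/a^2$. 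If instead $a^2\le \Var[X]$, the claimed bound $\Var[X]/a^2\ge 1$ holds trivially since any probability is at most $1$. Hence the inequality holds for every $a>0$.

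For the ``in particular'' part, since each $X_i$ takes values in $\{0,1\}$ we have $X_i^2=X_i$, so $X^2=\left(\sum_{i=1}^n X_i\right)^2=\sum_{i=1}^n X_i^2+\sum_{i\ne j}X_iX_j=\sum_{i=1}^n X_i+\sum_{i\ne j}X_iX_j$. Taking expectations and using $\E\left[\sum_i X_i\right]=\mu$ and $\E[X_iX_j]=\mu_2$ for all ordered pairs $i\ne j$ (of which there are $n(n-1)$), we get $\E[X^2]=\mu+n(n-1)\mu_2$, hence $\Var[X]=\mu+n(n-1)\mu_2-\mu^2$. Substituting this into the general bound yields the stated estimate.

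There is no serious obstacle here; the only point deserving a word of care is the boundary regime $a^2\le\Var[X]$, where the Markov bound cannot be invoked with a ratio exceeding $1$ and one falls back on the trivial bound on probabilities, and the observation that the finite nonzero-variance hypothesis is exactly what makes both the division by $\E[Z]$ and the application of Lemma~\ref{Markov} legitimate.
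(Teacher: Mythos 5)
Your proof is correct: the paper states this lemma as a standard preliminary without proof, and your derivation is the canonical one --- Markov's bound (Lemma~\ref{Markov}) applied to $Z=(X-\mu)^2$, with the regime $a^2\le\Var[X]$ handled by the trivial bound $\Pr\le 1$ since the paper's Markov lemma requires a ratio strictly greater than $1$, and the ``in particular'' part obtained from $X_i^2=X_i$ and the count of $n(n-1)$ ordered pairs giving $\E[X^2]=\mu+n(n-1)\mu_2$. Nothing further is needed.
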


\begin{lemma}\label{Chernoff}{\bf Chernoff bound}
Let $X_1,\ldots,X_n$ be n independent random variables that take values in $\{0,1\}$. Let $X=X_1+\cdots+X_n$ and $ \mu=\E[X]$. Then, for any $0\le \lambda \le 1$ we have,
$$\Pr\left[X\le (1-\lambda) \mu \right] \le e^{-\frac{\lambda ^2 \mu}{2}},$$ and,  $$\Pr\left[X\ge (1+\lambda) \mu \right] \le e^{-\frac{\lambda ^2 \mu}{3}}. $$
\end{lemma}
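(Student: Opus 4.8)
The plan is to use the standard exponential-moment (Chernoff) method, handling the two tails separately and reducing each to an elementary one-variable inequality. Write $p_i=\Pr[X_i=1]$, so that $\mu=\sum_{i=1}^n p_i$. The estimate underlying both tails is the moment generating function bound: by independence and the inequality $1+x\le e^x$, for every real $t$,
$$\E[e^{tX}]=\prod_{i=1}^n\left(1+p_i(e^t-1)\right)\le \prod_{i=1}^n e^{p_i(e^t-1)}=e^{(e^t-1)\mu}.$$

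For the upper tail I would fix $t>0$ and apply Markov's bound (Lemma~\ref{Markov}) to the nonnegative variable $e^{tX}$, giving
$$\Pr[X\ge(1+\lambda)\mu]=\Pr\left[e^{tX}\ge e^{t(1+\lambda)\mu}\right]\le e^{-t(1+\lambda)\mu}\,\E[e^{tX}]\le \exp\!\left(\bigl(e^t-1-t(1+\lambda)\bigr)\mu\right).$$
The exponent is minimized at $t=\ln(1+\lambda)$, which yields the classical bound $\left(e^{\lambda}/(1+\lambda)^{1+\lambda}\right)^{\mu}$. It then suffices to verify the elementary inequality $(1+\lambda)\ln(1+\lambda)\ge \lambda+\lambda^2/3$ for $0\le\lambda\le1$, which converts the bound into the claimed $e^{-\lambda^2\mu/3}$.

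For the lower tail I would symmetrically fix $t>0$ and write $\Pr[X\le(1-\lambda)\mu]=\Pr[e^{-tX}\ge e^{-t(1-\lambda)\mu}]$, bounding $\E[e^{-tX}]\le e^{(e^{-t}-1)\mu}$ by the same MGF estimate with $-t$ in place of $t$. Optimizing at $t=\ln\frac{1}{1-\lambda}$ produces $\left(e^{-\lambda}/(1-\lambda)^{1-\lambda}\right)^{\mu}$, and the proof concludes once we check $(1-\lambda)\ln(1-\lambda)\ge -\lambda+\lambda^2/2$ on $[0,1]$, giving $e^{-\lambda^2\mu/2}$.

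The MGF computation and the optimization over $t$ are routine; the only delicate point is the pair of elementary inequalities. The lower-tail inequality is immediate: with $g(\lambda)=(1-\lambda)\ln(1-\lambda)+\lambda-\lambda^2/2$ one has $g(0)=0$ and $g'(\lambda)=-\ln(1-\lambda)-\lambda\ge0$, so $g\ge0$. The upper-tail inequality is the main obstacle, because the constant $1/3$ is valid only on the restricted range $0\le\lambda\le1$; here I set $h(\lambda)=(1+\lambda)\ln(1+\lambda)-\lambda-\lambda^2/3$, note $h(0)=h'(0)=0$ and $h''(\lambda)=\frac{1}{1+\lambda}-\frac{2}{3}$, which is positive on $[0,1/2)$ and negative on $(1/2,1]$, and observe that $h'(1)=\ln2-2/3>0$. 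Hence $h'$ rises from $0$ and remains nonnegative throughout $[0,1]$, so $h$ is nondecreasing and $h\ge0$, which is exactly the inequality needed.
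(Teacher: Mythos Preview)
Your proof is correct and is the standard exponential-moment derivation of the multiplicative Chernoff bounds; the two calculus checks for $g$ and $h$ are handled cleanly. Note, however, that the paper does not actually prove this lemma: it is listed in the preliminaries as a well-known inequality and quoted without proof, so there is no ``paper's own proof'' to compare against. Your argument would serve perfectly well as a self-contained justification.
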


\begin{lemma}\label{mono} Let $A,B_1,\ldots,B_t$ be events such that $B_i\cap B_j=\emptyset$ for every $i\not=j$ and $\Pr[A|B_i]\ge \Pr[A|B_1]$ (resp. $\Pr[A|B_i]\leq \Pr[A|B_1]$) for every $i$. Then,
$\Pr[A|B_1\cup \cdots \cup B_t]\ge \Pr[A|B_1].$ (resp. $\Pr[A|B_1\cup \cdots \cup B_t]\leq \Pr[A|B_1].$)
\end{lemma}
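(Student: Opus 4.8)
The plan is to prove the first inequality (the one with $\ge$); the parenthetical version with $\le$ then follows either by running the identical argument with every inequality reversed, or by applying the first version to the complement event $\overline A$ together with the identity $\Pr[\overline A|C]=1-\Pr[A|C]$, valid for any event $C$ with $\Pr[C]>0$. Throughout I may assume $\Pr[B_i]>0$ for every $i$: any index $i$ with $\Pr[B_i]=0$ contributes nothing to the unions and conditional probabilities below and can simply be deleted from the list, and if every $B_i$ were null then the conditional probabilities appearing in the statement would be undefined, so there would be nothing to prove.

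First I would expand the left-hand side by the definition of conditional probability, exploiting only the pairwise disjointness of the $B_i$. Writing $C=B_1\cup\cdots\cup B_t$, disjointness gives $\Pr[C]=\sum_{i=1}^t\Pr[B_i]$ and $\Pr[A\cap C]=\sum_{i=1}^t\Pr[A\cap B_i]$, hence
\[
\Pr[A|C]=\frac{\sum_{i=1}^t\Pr[A\cap B_i]}{\sum_{i=1}^t\Pr[B_i]}=\frac{\sum_{i=1}^t\Pr[A|B_i]\,\Pr[B_i]}{\sum_{i=1}^t\Pr[B_i]}.
\]
This exhibits $\Pr[A|C]$ as a convex combination of the numbers $\Pr[A|B_1],\ldots,\Pr[A|B_t]$ with the nonnegative weights $w_i=\Pr[B_i]/\sum_{j}\Pr[B_j]$, which sum to $1$.

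The final step is just the mediant/weighted-average inequality: since $\Pr[A|B_i]\ge\Pr[A|B_1]$ for every $i$ by hypothesis, we get $\sum_i w_i\Pr[A|B_i]\ge\sum_i w_i\Pr[A|B_1]=\Pr[A|B_1]$, which is precisely $\Pr[A|C]\ge\Pr[A|B_1]$. I do not expect any genuine obstacle in this argument; the only two points that deserve a word of care are the well-definedness of the conditional probabilities (handled by discarding null $B_i$'s as above) and the observation that we need disjointness of the $B_i$, but \emph{not} that they form a partition, in order to split both $\Pr[C]$ and $\Pr[A\cap C]$ as sums over $i$.
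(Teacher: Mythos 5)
Your proof is correct and is essentially the paper's own argument: both expand $\Pr[A|B_1\cup\cdots\cup B_t]$ via disjointness as $\bigl(\sum_i \Pr[A|B_i]\Pr[B_i]\bigr)/\bigl(\sum_i \Pr[B_i]\bigr)$ and then bound this weighted average below (resp.\ above) by $\Pr[A|B_1]$. Your extra remarks on discarding null $B_i$'s and on not needing the $B_i$ to form a partition are fine but add nothing beyond the paper's proof.
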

\begin{proof} By conditional probability we get,
\begin{eqnarray*}
\Pr[A|\cup_iB_i]&=&\frac{\Pr[A\cap (\cup_i B_i) ]}{\Pr[\cup_i B_i]} = \frac{\sum_{i=1}^t\Pr[A\cap B_i]}{\sum_{i=1}^t \Pr[B_i]} \\
&=&\frac{\sum_{i=1}^t\Pr[A| B_i]\Pr[B_i]}{\sum_{i=1}^t \Pr[B_i]}\ge  \Pr[A|B_1].\qed
\end{eqnarray*}
\end{proof}
\begin{lemma} \label{STRLNG}
 Let $0< \alpha <1$ be any real number such that $\alpha n$ is an integer. Then
\begin{align*} { n \choose \alpha n} = \frac{1}{\sqrt{2\pi \alpha (1-\alpha)n}} 2^{H(\alpha)n} e^{\frac{\alpha(1-\alpha)-1}{12\alpha(1-\alpha)n}+
\Theta\left(\frac{1}{(\min(\alpha,1-\alpha)n)^3}\right)}
\end{align*}
where $H(x)=-x\log_2 x - (1-x) \log_2 (1-x).$
\end{lemma}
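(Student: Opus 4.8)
The plan is to read the identity straight off the refined (second-order) Stirling expansion of the factorial. I would start from the classical fact that for every integer $k\ge 1$,
$$k!=\sqrt{2\pi k}\left(\frac{k}{e}\right)^{k}e^{r_k},\qquad r_k=\frac{1}{12k}-\frac{1}{360k^{3}}+O\!\left(\frac{1}{k^{5}}\right)=\frac{1}{12k}+\Theta\!\left(\frac{1}{k^{3}}\right),$$
where the cubic term carries a fixed negative sign (the Stirling series is alternating, so the partial-sum error is controlled by the first omitted term, uniformly for $k\ge 1$). Writing $\binom{n}{\alpha n}=\dfrac{n!}{(\alpha n)!\,((1-\alpha)n)!}$ and substituting this expansion into each of the three factorials is legitimate since $\alpha n$ and $(1-\alpha)n$ are integers by hypothesis, each at least $1$.

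Next I would group the resulting factors. The powers of $e$ contribute $e^{-n+\alpha n+(1-\alpha)n}=1$ and disappear. The square-root prefactors give $\dfrac{\sqrt{2\pi n}}{\sqrt{2\pi\alpha n}\,\sqrt{2\pi(1-\alpha)n}}=\dfrac{1}{\sqrt{2\pi\alpha(1-\alpha)n}}$, which is exactly the claimed leading factor. The pure power factors, using $\alpha n+(1-\alpha)n=n$ to cancel the powers of $n$, give
$$\frac{n^{n}}{(\alpha n)^{\alpha n}\,((1-\alpha)n)^{(1-\alpha)n}}=\alpha^{-\alpha n}(1-\alpha)^{-(1-\alpha)n}=2^{n\left(-\alpha\log_2\alpha-(1-\alpha)\log_2(1-\alpha)\right)}=2^{H(\alpha)n}.$$
Finally, the error exponent is $r_n-r_{\alpha n}-r_{(1-\alpha)n}$; its $1/k$-parts combine (using $\alpha+(1-\alpha)=1$ in the numerator) to
$$\frac{1}{12n}\left(1-\frac{1}{\alpha}-\frac{1}{1-\alpha}\right)=\frac{1}{12n}\cdot\frac{\alpha(1-\alpha)-1}{\alpha(1-\alpha)}=\frac{\alpha(1-\alpha)-1}{12\alpha(1-\alpha)n},$$
the first term in the exponent of the lemma. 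Multiplying the prefactor, the factor $2^{H(\alpha)n}$, and $e$ raised to this exponent plus the leftover higher-order error yields the stated formula.

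The only step needing care is the error bookkeeping. The cubic-order remainder of $r_n-r_{\alpha n}-r_{(1-\alpha)n}$ equals $\frac{1}{360n^{3}}\bigl(-1+\alpha^{-3}+(1-\alpha)^{-3}\bigr)$ up to terms of order $1/(\min(\alpha,1-\alpha)n)^{5}$; since $\alpha^{-3}>1$ and $(1-\alpha)^{-3}>1$ for $\alpha\in(0,1)$ this coefficient is strictly positive, and (taking w.l.o.g.\ $\alpha\le 1-\alpha$) it lies between $\alpha^{-3}$ and $9\alpha^{-3}$, so the remainder is genuinely $\Theta\!\left(1/(\min(\alpha,1-\alpha)n)^{3}\right)$, with the $1/k^{5}$ and higher Stirling terms absorbed into this $\Theta$. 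Everything else is routine algebra, so I do not expect any real obstacle beyond tracking these constants.
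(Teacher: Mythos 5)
Your proof is correct and follows essentially the same route as the paper: substitute the refined Stirling expansion into each of the three factorials, cancel the powers of $e$ and $n$, collect the square-root prefactors into $1/\sqrt{2\pi\alpha(1-\alpha)n}$, the power factors into $2^{H(\alpha)n}$, and combine the $1/(12k)$ terms into $\frac{\alpha(1-\alpha)-1}{12\alpha(1-\alpha)n}$. If anything, your explicit tracking of the $-1/(360k^{3})$ term and its sign justifies the $\Theta\left(1/(\min(\alpha,1-\alpha)n)^{3}\right)$ claim a bit more carefully than the paper, which simply absorbs the cubic-order errors.
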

\begin{proof}
By Sterling bound~\cite{BBHHKS18},
\begin{align*} { n \choose \alpha n}  &= \frac{n!}{(\alpha n)! ((1-\alpha)n)!} \\ &=\frac{ \sqrt{2 \pi n} (\frac{n}{e}) ^n e^{ \frac{1}{12n}+\Theta\left(\frac{1}{n^3}\right) } }{\sqrt{2\pi \alpha n} \left(\frac{\alpha n}{e}\right)^{\alpha n} e^{\frac{1}{12\alpha n} +\Theta\left(\frac{1}{(\alpha n)^3}\right)} \sqrt{2\pi (1-\alpha) n} \left(\frac{(1-\alpha)n}{e}\right)^{(1-\alpha)n}e^{\frac{1}{12(1-\alpha)n}+
\Theta\left(\frac{1}{((1-\alpha)n)^3}\right)} } \\ &= \frac{1}{\sqrt{2\pi \alpha (1-\alpha)n}} \left(\frac{1}{\alpha^\alpha (1-\alpha)^{1-\alpha}}\right)^n e^{\frac{\alpha(1-\alpha)-1}{12\alpha(1-\alpha)n}+
\Theta\left(\frac{1}{(\min(\alpha,1-\alpha)n)^3}\right)} \\ &= \frac{1}{\sqrt{2\pi \alpha (1-\alpha)n}} 2^{H(\alpha)n} e^{\frac{\alpha(1-\alpha)-1}{12\alpha(1-\alpha)n}+\Theta\left(\frac{1}
{(\min(\alpha,1-\alpha)n)^3}\right)}.\qed
\end{align*}
\ignore{
Let $\beta=1-\alpha$. By Sterling bound~\cite{BBHHKS18},
\begin{align*} { n \choose \alpha n}  &= \frac{n!}{(\alpha n)! ((1-\alpha)n)!} \\ 
&=\frac{ \sqrt{2 \pi n} (\frac{n}{e}) ^n e^{ \frac{1}{12n}+\Theta\left(\frac{1}{n^3}\right) } }{\sqrt{2\pi \alpha n} \left(\frac{\alpha n}{e}\right)^{\alpha n} e^{\frac{1}{12\alpha n} +\Theta\left(\frac{1}{(\alpha n)^3}\right)} \sqrt{2\pi \beta n} \left(\frac{\beta n}{e}\right)^{\beta n}e^{\frac{1}{12\beta n}+
\Theta\left(\frac{1}{(\beta n)^3}\right)} } \\ &= \frac{1}{\sqrt{2\pi \alpha \beta n}} \left(\frac{1}{\alpha^\alpha \beta^{\beta}}\right)^n e^{\frac{\alpha \beta-1}{12\alpha \beta n}+
\Theta\left(\frac{1}{(\min(\alpha,\beta)n)^3}\right)} \\ &= \frac{1}{\sqrt{2\pi \alpha \beta n}} 2^{H(\alpha)n} e^{\frac{\alpha \beta-1}{12\alpha \beta n}+\Theta\left(\frac{1}
{(\min(\alpha,\beta)n)^3}\right)}.\qed
\end{align*}
}

\end{proof}

\begin{lemma}\label{ID} Let $B_1,\ldots,B_t$ be events and $A=B_1\vee B_2\vee \cdots \vee B_t$. Then
$$\Pr[A]\ge \sum_{i=1}^t \Pr[B_i]-\sum_{1<i<j<t} \Pr[B_i\wedge B_j].$$
\end{lemma}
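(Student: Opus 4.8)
The plan is to prove this second-order Bonferroni inequality by reducing the probabilistic statement to a deterministic pointwise inequality between indicator functions and then integrating. Write $\mathbf{1}_E$ for the indicator of an event $E$, so that $\Pr[E]=\E[\mathbf{1}_E]$. By linearity of expectation the three quantities in the statement become
\begin{align*}
\Pr[A]=\E[\mathbf{1}_A],\qquad \sum_{i=1}^t\Pr[B_i]=\E\Big[\sum_{i=1}^t\mathbf{1}_{B_i}\Big],\qquad \sum_{i<j}\Pr[B_i\wedge B_j]=\E\Big[\sum_{i<j}\mathbf{1}_{B_i\wedge B_j}\Big]
\end{align*}
where the pairwise sum ranges over all $1\le i<j\le t$ (the intended reading of the index set in the statement). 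Hence it suffices to establish the pointwise inequality
$$\mathbf{1}_A\ \ge\ \sum_{i=1}^t\mathbf{1}_{B_i}-\sum_{i<j}\mathbf{1}_{B_i\wedge B_j}$$
at every sample point, after which taking $\E[\cdot]$ of both sides yields the lemma.

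First I would fix a sample point $\omega$ and let $k=k(\omega)$ be the number of indices $i$ with $\omega\in B_i$. Then $\sum_i\mathbf{1}_{B_i}(\omega)=k$, and since each unordered pair of events simultaneously containing $\omega$ contributes exactly one to the pairwise sum, $\sum_{i<j}\mathbf{1}_{B_i\wedge B_j}(\omega)=\binom{k}{2}$. Moreover $\mathbf{1}_A(\omega)=1$ precisely when $k\ge 1$, since $A=B_1\vee\cdots\vee B_t$. Thus the required pointwise inequality is equivalent to
$$\mathbf{1}[k\ge 1]\ \ge\ k-\binom{k}{2}$$
for every integer $k\ge 0$, which I would dispatch by a short case check: for $k=0$ both sides equal $0$; for $k=1$ both sides equal $1$; and for $k\ge 2$ the right-hand side equals $k(3-k)/2\le 1$ while the left-hand side equals $1$. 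This verifies the pointwise bound, and integrating over the sample space completes the argument.

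The hard part here is essentially nonexistent: the only point needing care is the elementary verification of $\mathbf{1}[k\ge1]\ge k-\binom{k}{2}$, and in particular observing that once $k\ge 2$ the subtracted term $\binom{k}{2}$ already exceeds $k-1$, so the right-hand side drops to at most $1$ and the inequality becomes slack. An alternative route would be induction on $t$, using $A=A_{t-1}\vee B_t$, the identity $\Pr[A_{t-1}\vee B_t]=\Pr[A_{t-1}]+\Pr[B_t]-\Pr[A_{t-1}\wedge B_t]$, and the subadditivity bound $\Pr[A_{t-1}\wedge B_t]\le\sum_{i<t}\Pr[B_i\wedge B_t]$; but the indicator-function argument is cleaner and avoids the inductive bookkeeping, so I would present that one.
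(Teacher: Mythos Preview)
Your proof is correct; the indicator-function reduction to the pointwise inequality $\mathbf{1}[k\ge 1]\ge k-\binom{k}{2}$ is a standard and clean way to establish this second Bonferroni bound, and your case analysis is accurate (including the observation that $k-\binom{k}{2}=k(3-k)/2\le 1$ for $k\ge 2$). The paper itself states this lemma without proof, treating it as a well-known inequality, so there is no approach in the paper to compare against; your argument would serve perfectly well as the omitted justification, and your parenthetical reading of the index set as $1\le i<j\le t$ is indeed the intended one.
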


\begin{lemma}\label{varv}
Let $M$ be an $m \times n $ RsSD matrix. Let  $\alpha=s/m$. Let $X_i$ be a random variable that is equal to $1$ if the $i$th row is good and is equal to $0$ otherwise.
Let $X=\sum_{i=1}^{m} X_i $ be the number of good rows in $M$. Then $$\Var[X]\le (1-\alpha)^d m.$$
\end{lemma}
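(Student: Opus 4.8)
The plan is to bound $\Var[X]$ through the standard decomposition
$$\Var[X]=\sum_{i=1}^m \Var[X_i]+\sum_{i\neq i'}\mathrm{Cov}(X_i,X_{i'}),$$
by showing that the diagonal terms already sum to at most $(1-\alpha)^dm$ and that every off-diagonal covariance is nonpositive. Throughout I would assume, without loss of generality, that the defective set is $I=[d]$, so that row $i$ is good exactly when $M_{i,j}=0$ for all $j\in[d]$, and I would exploit the two defining features of the RsSD model: the columns of $M$ are mutually independent, and each column is a uniformly random weight-$s$ vector in $\{0,1\}^m$.

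First I would dispatch the diagonal. For a fixed row $i$ and a fixed column $j$ one has $\Pr[M_{i,j}=0]={m-1\choose s}/{m\choose s}=1-s/m=1-\alpha$, and by independence of the $d$ columns indexed by $I$, $\Pr[X_i=1]=(1-\alpha)^d$. Hence $\Var[X_i]=(1-\alpha)^d-(1-\alpha)^{2d}\le(1-\alpha)^d=\E[X_i]$, so $\sum_{i=1}^m\Var[X_i]\le(1-\alpha)^dm$, which is already the target bound; it then suffices to kill the covariances.

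The main point—elementary, but the only place care is needed—is the claim $\mathrm{Cov}(X_i,X_{i'})\le 0$ for every $i\neq i'$. Its core is a single-column negative-correlation inequality: since a uniformly random weight-$s$ column drops its ones into the $m$ rows without replacement,
$$\Pr[M_{i,j}=0\ \wedge\ M_{i',j}=0]=\frac{{m-2\choose s}}{{m\choose s}}=\frac{(m-s)(m-s-1)}{m(m-1)}\le\left(\frac{m-s}{m}\right)^2=(1-\alpha)^2,$$
the last inequality being equivalent to $0\le s$. Raising this to the power $d$ using the independence of the $d$ columns indexed by $I$ gives $\E[X_iX_{i'}]=\Pr[\text{rows }i,i'\text{ both good}]\le(1-\alpha)^{2d}=\E[X_i]\E[X_{i'}]$, i.e. $\mathrm{Cov}(X_i,X_{i'})\le 0$. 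Combining the two parts, $\Var[X]\le\sum_{i=1}^m\Var[X_i]\le(1-\alpha)^dm$, as claimed. Everything outside the within-a-column sampling-without-replacement estimate is routine bookkeeping.
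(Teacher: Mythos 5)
Your proof is correct and rests on the same key estimate as the paper's, namely the without-replacement bound $\E[X_iX_{i'}]=\bigl(\tbinom{m-2}{s}/\tbinom{m}{s}\bigr)^d\le(1-\alpha)^{2d}$ for two rows being simultaneously good. Phrasing this as nonpositive covariances rather than, as the paper does, expanding $\E[X^2]-\E[X]^2$ directly is only a cosmetic difference in bookkeeping.
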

\begin{proof} Since the columns of $M$ are chosen randomly and independently,  $\E[X_i]=(1-\alpha)^d$ and, therefore, $\E[X]=(1-\alpha)^dm$. Moreover, for $i\not=j$, we have
$$\E[X_iX_j]=\left(\frac{{m-2\choose s}}{{m\choose s}}\right)^d=(1-\alpha)^d\left(1-\frac{s}{m-1}\right)^d \le (1-\alpha)^{2d}.$$
Hence, we can conclude,
\begin{align*}
\Var[X] &= \E[X^2]-\E[X]^2 =  \E\left[ \sum_{i=1}^{m} {X_i} + 2 \sum_{1\leq i<j\le m}^{} {X_i X_j} \right] - (1-\alpha)^{2d} m^2 \\ &\le (1-\alpha)^d m +2 {m \choose 2} (1-\alpha)^{2d} - (1-\alpha)^{2d} m^2 \le (1-\alpha)^d m.\qed
\end{align*}
\end{proof}

\ignore{
\section{Proofs of Lemmas}
\label{PRVS1}

\section*{Upper bound proof for RrSD model - Lemma ~\ref{UforRrSD}}

\begin{proof}  Let $m'=ed\ln(2n/\delta)$. The probability that a row in $M$ is a good row is equal to
$$\frac{{n-d\choose r}}{{n\choose r}}=\left(1-\frac{r}{n}\right)\left(1-\frac{r}{n-1}\right)\cdots \left(1-\frac{r}{n-d+1}\right)\ge p^d=e^{-1}.$$
For a row $i$, let $X_i$ be a random variable that is equal to $1$ if row $i$ is good and $0$ otherwise. Let $X=X_1+\cdots+X_m$ be the number of good rows in $M$. Then, $\E[X_i]\ge e^{-1}$ and $\mu:=\E[X]\ge e^{-1}m$. Let $A$ be the event indicating that the number of good rows in $M$ is less than $m'p^d=m'/e$. Let $B$ be the event that there is a column in $M$ that corresponds to a non-defective item that has entries that are equal to zero in all the good rows, that is, $M$ is not $(n,I)$-disjunct. Since $(m'/m)\mu\ge (m'/m)(e^{-1}m)=m'/e$, by Chernoff bound, Lemma~\ref{Chernoff}, we can say:
\begin{align*}
\Pr[A]&\le \Pr\left[X <\left(1-\left(1-\frac{m'}{m}\right)\right)\mu\right] \le e^{-\frac{\left(1-\frac{m'}{m}\right)^2m}{2e}} = \frac{\delta}{2}.
\end{align*}

On the other hand, we also have,
\begin{align*}
\Pr\left[B| \overline A\right] &\le n\left(\frac{{n-d-1\choose r}}{{n-d\choose r}}\right)^{m'p^d}=n\left(1-\frac{r}{n-d}\right)^{m'p^d}\\
&\le np^{m'p^d} =ne^{-\frac{m'}{ed}} \le ne^{\ln\frac{\delta}{2n}} =\frac{\delta}{2}.
\end{align*}
This gives,
\begin{align*}
\Pr[M\text{\ is not $(n,I)$-disjunct}] &= \Pr[B] = \Pr[B|A]\Pr[A]+\Pr[B|\overline A]\Pr[\overline A]\\ &\le \Pr[A]+\Pr[B|\overline A] \le \delta.\qed
 \end{align*}
\end{proof}

\section*{Lower bound proof for RrSD model - Lemma ~\ref{LforRrSD}}
\begin{proof} We may assume without loss of generality that $I=\{1,2,\ldots,d\}$. Let $p=1-r/n$ and $c=d\ln(1/p)$. Then $p=e^{-c/d}$.
Let $m'=ed\ln (n/e)$ and $w=e^{3}\sqrt{3m}$. Then $m=m'-w$. Let $X$ be a random variable that is equal to the number of good rows in $M$. Then, as in the previous proof,
$$\mu:=\E[X]=\left(1-\frac{r}{n}\right)\left(1-\frac{r}{n-1}\right)\cdots \left(1-\frac{r}{n-d+1}\right)\cdot m \le p^dm=e^{-c}m.$$
If $c\ge \ln(3m)$, then $\mu\le 1/3$. By Markov's bound, Lemma~\ref{Markov}, $\Pr[X\ge 1]\le 1/3$. Therefore, with probability at least $2/3$, no good rows exist and the algorithm fails. Hence, we may assume that $c<\ln(3m)$ or equivalently, $p\ge 1/(3m)^{1/d}$.

Let $A$ be the event that $M$ is not $(n,I)$-disjunct. The probability that $M$ is not $(n,I)$-disjunct given that $X=x$ is the probability that there is a column that corresponds to a non-defective item, that all its entries in the $x$ good rows are zero. Let $Y_i$ be a random variable that is equal to $1$ if the column $i$ is zero in all the $x$ good rows, and is equal to $0$ otherwise. Let $Y=Y_{d+1}+\cdots+Y_n$. The probability that a fixed column $i$ is zero in the entries of the good rows is $\E[Y_i]=(1-r/(n-d))^x$. The probability that two fixed columns get zeros in all the $x$ good rows is $\E[Y_iY_j]=(1-r/(n-d))^x(1-r/(n-d-1))^x$. Therefore,
$$\hat\mu:=\E[Y]=(n-d)\left(1-\frac{r}{n-d}\right)^x\le np^x,$$ and 
$$\hat\mu_2:=\E[Y_iY_j]=(1-r/(n-d))^x(1-r/(n-d-1))^x\leq p^{2x}.$$
Since $x\le m\le ed\ln(n/e)$, $p=1-r/n$ and $1/p\le (3m)^{1/d}$,
\begin{eqnarray*}
\hat\mu^2&=&(n-d)^2\left(1-\frac{r}{n-d}\right)^{2x}=n^2\left(1-\frac{d}{n}\right)^2p^{2x}\left(1-\frac{dr}{(n-d)(n-r)}\right)^{2x}\\
&=&
(np^x)^2\left(1-\frac{d}{n}\right)^2\left(1-\frac{d(1-p)}{(n-d)p}\right)^{2x}\\
&\ge& (np^x)^2 \left(1-\frac{2d}{n}-\frac{4xd}{np}\right)\ge (np^x)^2\left(1-\lambda\right),
\end{eqnarray*} where $\lambda=o\left(({d^2\ln^2n})/{n}\right)=o(1).$
By Chebychev inequality, Lemma~\ref{Chebychev}, we can say,
\begin{eqnarray*}\Pr\left[\overline{A}|X=x\right]&\le&   \Pr[Y=0|X=x]\le \Pr[|Y-\hat\mu|\ge \hat\mu|X=x]\\
&\le&  \frac{\hat\mu +n(n-1)\hat\mu_2-\hat\mu^2}{\hat\mu^2}\le \frac{np^x+(np^x)^2-(np^x)^2 \left(1-\lambda\right)}{(np^x)^2 (1-\lambda)}\\
&\le& \frac{1+\lambda np^x }{np^x (1-\lambda)}=\frac{1 }{np^x (1-\lambda)}+o(1).
\end{eqnarray*}

We distinguish between two cases: $c\ge 6$ and $c\le 6$.

Case I. $c\le 6$. Since $ce^{1-c}\le 1$, for $n\ge 3$, $n^{1-ce^{1-c}}e^{ce^{1-c}}\ge e$. Since $d<n/\ln^3n$, by Chernoff bound, Lemma~\ref{Chernoff}, and for large enough $n$, we conclude,
\begin{align*}
\Pr[A] &\ge \Pr[A\ \wedge\ X\le e^{-c} m'] =\Pr[A\ |\ X\le e^{-c} m']\cdot \Pr[X\le e^{-c} m'] \\
&\ge \Pr[A\ |\ X= e^{-c} m']\cdot \Pr[X\le e^{-c} m'] \ \ \ \mbox{by Lemma~\ref{mono}}\\
&= \left(1-\frac{1}{n^{1-ce^{1-c}}e^{ce^{1-c}}(1-\lambda)}-o(1)\right) \left(1-\Pr\left[X\ge e^{-c} m'\right]\right) \\
&\ge  \left(1-\frac{1}{e(1-\lambda)}-o(1)\right) \left(1-\Pr\left[X\ge e^{-c} m'\right]\right) \\
&= \left(1-\frac{1}{e(1-\lambda)}-o(1)\right) \left(1-\Pr\left[X\ge e^{-c} m\left(1+\frac{w}{m}\right) \right]\right) \\
&\ge  \left(1-\frac{1}{e(1-\lambda)}-o(1)\right) \left(1-e^{-\frac{e^{-c} w^2}{3m} } \right)\\
&\ge  \left(1-\frac{1}{e(1-\lambda)}-o(1)\right) \left(1-e^{-1}\right)  > \frac{1}{3}.
\end{align*}

Case II. $c\ge 6$. By Markov bound, Lemma~\ref{Markov}, and since $m\le ed\ln (n/e)$ and $2ce^{1-c}<0.1$ for $c\ge 6$, we have
\begin{align*}
\Pr[A] &\ge \Pr[A\ \wedge\ X\le 2e^{-c} m] =\Pr[A\ |\ X\le 2e^{-c} m]\cdot (1-\Pr[X\ge 2e^{-c} m]) \\
&\ge\Pr[A\ |\ X= 2e^{-c} m]\cdot (1-\Pr[X\ge 2e^{-c} m]) \\
&\ge \left(1-\frac{1}{np^{2e^{-c}m}(1-\lambda)}-o(1)\right)\cdot \frac{1}{2} \\
&\ge \left(1-\frac{1}{n^{0.9}(1-\lambda)}-o(1)\right)\cdot \frac{1}{2}\ge \frac{1}{3}.\qed
\end{align*}
\end{proof}

\section*{Upper bound proof for RrSD model - Lemma ~\ref{RSSD_BND}}
\begin{proof} Let $X_i$ be a random variable that is equal to $1$ if the $i$th row in $M$ is a good row and $0$ otherwise.
Let $X=\sum_{i=1}^{m} X_i $ be the number of good rows in $M$. Let $a=\sqrt{\frac{2(1-\alpha)^d m}{\delta}}$. Since $\lambda<1$, we have
\begin{align*}
\E[X]-a&=(1-\alpha)^dm-a=(1-\alpha)^d(1+\lambda)m'-\sqrt{\frac{2(1-\alpha)^d m}{\delta}}\\
&=(1-\alpha)^d\left(1+\frac{2}{\sqrt{\delta (1-\alpha)^d m'}}\right)m'-\sqrt{\frac{2(1-\alpha)^d (1+\lambda)m'}{\delta}}\\
&=(1-\alpha)^dm'+2\sqrt{\frac{(1-\alpha)^d m'}{\delta}}-\sqrt{\frac{2(1-\alpha)^d (1+\lambda)m'}{\delta}}\ge(1-\alpha)^dm'.
\end{align*}

Let $A$ be the event that $M$ is not an $(n,I)$-disjunct matrix. By Chebychev inequality, Lemma~\ref{Chebychev}, Lemma~\ref{varv}, Lemma \ref{mono} and Lemma \ref{STRLNG} we have,
\begin{align}
\Pr[A]&=\Pr[A|X\le \E[X]-a]\cdot \Pr[X\le \E[X]-a]+\\ &
\ \ \ \ \ \ \ \ \ \Pr[A|X> \E[X]-a]\cdot\Pr[X> \E[X]-a] \nonumber\\
&\le  \Pr[X\le \E[X]-a]+\Pr[A|X> \E[X]-a] \nonumber \\
&\le  \Pr[|X-\E[X]|\ge a]+\Pr[A|X=(1-\alpha)^dm'] \nonumber \\
&\le  \Pr[|X-\E[X]|\ge a]+n \frac{ {\beta m' \choose \alpha m'} }{{m' \choose \alpha m'}} \label{eq1}  \\
&\le \frac{\Var[X]}{a^2}+n\sqrt{\frac{\beta(1-\alpha)}{\beta-\alpha}} 2^{(\beta H(\frac{\alpha}{\beta})-H(\alpha))m'}  (1+o(1))\nonumber \\
&\le \frac{(1-\alpha)^d m}{a^2}+n\sqrt{\frac{\beta(1- \alpha)}{\beta-\alpha}} 2^{-\left(\log n+\log\frac{3}{\delta}+\frac{1}{2}\log\left(\frac{\beta(1-\alpha)}{\beta-\alpha}\right)\right)}   \nonumber\\
&\le\frac{\delta}{2}+\frac{\delta}{3}(1+o(1))\le \delta.\nonumber
 \end{align}
Inequality~(\ref{eq1}) is valid since the probability of failure over a design of size $m'$ is greater than the failure probability over a larger design of size $ m > m'$.  To prove that $c_\RsSD\le \frac{1}{(\ln 2)^2}$, it is easy to verify that
\begin{eqnarray*}
c_\RsSD\le \lim_{d\to\infty} \frac{1}{d(\ln 2)\left(H(\alpha)-\beta H\left(\frac{\alpha}{\beta}\right)\right)}&=& \frac{1}{(\ln2)x\log\frac{1}{1-e^{-x}}}
\end{eqnarray*} and then for $x=\ln 2$ the result follows.
\qed
\end{proof}

\section*{Lower  bound proof for RsSD model - Lemma ~\ref{kkkk}}
\begin{proof}
Let $X$ be the number of the good rows in $M$ and, let $A$ be the event that $M$ is not $(n,I)$-disjunct.
Then, by Markov inequality, Lemma~\ref{Markov}, we have,
\begin{eqnarray*}
n\frac{ {\beta m \choose \alpha m} }{{m \choose \alpha m}}=n\sqrt{\frac{\beta(1-\alpha)}{\beta-\alpha}} 2^{(\beta H(\frac{\alpha}{\beta})-H(\alpha))m}  (1+o(1))=\frac{1}{2}(1+o(1)),
\end{eqnarray*} and,
\begin{align*}
\Pr[A] &\ge \Pr[A\wedge X<(1+\lambda)\E[X]] \\
&= \Pr[A | X<(1+\lambda)(1-\alpha)^dm]\Pr[X<(1+\lambda)\E[X]] \\
&= \Pr[A | X<(1-\beta)m]\Pr[X<(1+\lambda)\E[X]] \\
&\ge \Pr[A | X=(1-\beta)m]\Pr[X<(1+\lambda)\E[X]] \\
&= \left(1-\left(1-\frac{ {\beta m \choose \alpha m} }{{m \choose \alpha m}}\right)^n\right) (1-\Pr[X\ge(1+\lambda)\E[X]]) \\
&\ge n\frac{ {\beta m \choose \alpha m} }{{m \choose \alpha m}}\left(1-\frac{n}{2}\frac{ {\beta m \choose \alpha m} }{{m \choose \alpha m}}\right)\left(1-\frac{1}{1+\lambda}\right)\\
&= \frac{3}{8}\left(1-\frac{1}{1+\lambda}\right)(1+o(1))\ge \frac{3}{16}\lambda.
\end{align*}
For any small constant $\lambda$,
\begin{eqnarray*}
\lim_{d\to\infty} \frac{1}{d(\ln 2)\left(H(\alpha)-\beta H\left(\frac{\alpha}{\beta}\right)\right)}&=& \frac{1}{(\ln2)x\log\frac{1}{1-(1+\lambda)e^{-x}}}.
\end{eqnarray*}
The value of $x\log({1}/({1-e^{-x}}))$ is minimal when $x=\ln 2$ (define $y=1-e^{-x}$ and show that the optimal point is when $y=1/2$), and then, $c_\RsSD\ge \frac{1}{(\ln 2)^2}$.
\qed

\end{proof}

\section*{Upper bound proof for UTDq - Lemma~\ref{TRNS_UB}}
\begin{proof}
Consider $M'$ of size $m'\times n$ where $m'=m/q$. The probability that $M$ is not $(n,[d])$-disjunct is the probability that there is a column $j\in\{d+1, \ldots, n\}$ such that for each row $i$ in $M'$, $M'_{i,j} \in S_{i,[d]}(M')$. Let $A$ denote the event that the matrix $M$ is not $(n,[d])$-disjunct. Then,

\begin{eqnarray*}
\Pr[A]& \leq & n\left (1- \left (  1-\frac{1}{q} \right )^d\right )^{m'} = n\left (1- \left (  1-\frac{1}{q} \right )^d\right )^{m/q}= \delta. \\
\end{eqnarray*}
The probability here is calculated by first choosing $M'_{i,j}$ and then choose $M'_{i,\ell}$, $\ell=1,\ldots,d$. To evaluate $c^D_\TRNS(d)$, let $x=q/d$. Then, when $d\to \infty$, we have,
$$\frac{c^D_\TRNS(d)}{d}\le \min_{x>1}\frac{x}{-\ln\left(1-\left(1-\frac{1}{xd}\right)^d\right)}\to
\min_{x>1} \frac{x}{-\ln(1-e^{-1/x})}=\frac{1}{(\ln2)^2}.\qed$$
\end{proof}
\section*{Lower bound proof for UTDq - Lemma~\ref{TRNS_LP_EQ}}

\begin{proof} First, we prove the result when $q>10d\ln d$. Since $\sum_{i=1}^d R_{q,d,i}=q^d$,
$$P_{q,d}:=\left(\prod_{i=1}^d\left(\frac{i}{q}\right)^{R_{q,d,i}}\right)^{1/q^d}\ge \left(\prod_{i=1}^d\left(\frac{1}{q}\right)^{R_{q,d,i}}\right)^{1/q^d}=\frac{1}{q}.$$
Therefore, the constant that we get in this case is
$\frac{q}{-d\ln P_{q,d}}\ge 5\ge \frac{1}{(\ln 2)^2}.$ Thus, we may assume that $q\le 10d\ln d$.

As in the proof of Lemma~\ref{TRNS_UB}, the probability that $M$ is not $(n,[d])$-disjunct is the probability that there is a column $j\in\{d+1,\ldots, n\}$ such that, for each row $i$ in $M'$, $M'_{i,j} \in S_{i,[d]}(M')$. Let $Y_{i,v}$, $i=1,\ldots, m'$, $v=1,\ldots,q$ be a random variable that is equal to $1$ if $v\not\in S_{i,[d]}(M')$ and $0$ otherwise. Let $Y_i=q-|S_{i,[d]}(M')|=Y_{i,1}+\cdots+Y_{i,q}$. Let $\mu:=\E[Y_i]=q(1-1/q)^b$ and $\mu_2:=\E[Y_{i,v_1}Y_{i,v_2}]=(1-2/q)^b$. Then, by Chebychev bound, Lemma~\ref{Chebychev},
\begin{eqnarray*}
\Pr[|Y_i-q(1-1/q)^d|\ge q^{2/3}]&\le & \frac{q\left(1-\frac{1}{q}\right)^b+q(q-1)\left(1-\frac{2}{q}\right)^b-q^2\left(1-\frac{1}{q}\right)^{2b}}{q^{4/3}}\\
&\le& q^{-1/3}\left(1-\frac{1}{q}\right)^d.
\end{eqnarray*}
By Markov bound, Lemma~\ref{Markov}, with probability at least $3/4$, more than $m'-4(1-{1}/{q})^dm'/q^{1/3}$ rows $i\in [m']$ in $M'$ satisfy the property: $$|S_{i,[d]}(M')|=q-Y_i\ge q-q(1-1/q)^d-q^{2/3}.$$

Since the number of strings in $[q]^d$ with at most $d/4$ symbols is at most ${q\choose d/4}\left(\frac{d}{4}\right)^d$, we can conclude that,
\begin{eqnarray*}
\Pr[|S_{i,[d]}(M')|\le d/4]&\le& {q\choose d/4}\left(\frac{d/4}{q}\right)^d\le \left(\frac{eq}{d/4}\right)^{d/4}\left(\frac{d/4}{q}\right)^d\\
&=& e^{d/4}\left(\frac{d/4}{q}\right)^{3d/4}\le e^{d/4}\left(\frac{1}{4}\right)^{3d/4}\le 2^{-d}.
\end{eqnarray*}
By Markov bound, with probability at least $3/4$, less than $2^{-d+2}m'$ rows in $M'$ satisfy $|S_{i,[d]}(M')|=q-Y_i\le d/4$.

In addition, the probability that the matrix $M$ is not $(n,[d])$-disjunct is,
\begin{eqnarray*}
1-\left(1-\prod_{i=1}^{m'}\frac{S_{i,[d]}(M')}{q}\right)^{n-d}.
\end{eqnarray*}
On the other hand, with probability at least $1/2$, for $d\le q\le 10d\ln d$, we can say that,
\begin{eqnarray*}
\prod_{i=1}^{m'}\frac{S_{i,[d]}(M')}{q}&\ge & \left(1-\left(1-\frac{1}{q}\right)^d-q^{-1/3}\right)^{m'}\left(\frac{1}{q}\right)^{2^{-d+2}m'}
\left(\frac{d}{4q}\right)^{4(1-{1}/{q})^dm'/q^{1/3}}\\
&\ge & \left(1-\left(1-\frac{1}{q}\right)^d-O(q^{-1/3})\right)^{m'}\ge \frac{2}{n-d}.
\end{eqnarray*}
Hence we get,
\begin{eqnarray*}
1-\left(1-\prod_{i=1}^{m'} \frac{S_{i,[d]}(M')}{q}\right)^{n-d}\ge 1-e^{-2}\ge 3/4.
\end{eqnarray*}
Therefore, with probability at least $1/4$, $M$ is not $(n,[d])$-disjunct.
Let $x=q/d\le 10\ln d$. When $d\to \infty$, we have,
$$\frac{c^D_\TRNS(d)}{d}\ge \min_{x>1}\frac{x}{-\ln\left(1-\left(1-\frac{1}{xd}+O(\frac{1}{d^3})\right)^d\right)}\to
\min_{x>1} \frac{x}{-\ln(1-e^{-1/x})}=\frac{1}{(\ln2)^2}.\qed$$\
\end{proof}
}
\end{document}